\title{Moment-based Uniform Deviation Bounds for $k$-means and Friends}
\author{
Matus Telgarsky
\qquad
\qquad
Sanjoy Dasgupta
\\
Computer Science and Engineering,
UC San Diego\\
\texttt{\{mtelgars,dasgupta\}@cs.ucsd.edu}
}
\def\1{\mathds 1}
\def\R{\mathbb R}
\def\bbE{\mathbb E}
\def\cF{\mathcal F}
\def\cH{\mathcal H}
\def\cM{\mathcal M}
\def\cN{\mathcal N}
\def\cO{\mathcal O}
\def\cS{\mathcal S}
\def\cX{\mathcal X}
\def\sfB{\mathsf B}
\def\nf{\nabla f}
\newcommand{\ip}[2]{\left\langle #1, #2 \right \rangle}
\newcommand{\argmin}{\operatornamewithlimits{arg\,min}}
\numberwithin{equation}{section}
\declaretheorem[numberlike=equation]{theorem}
\declaretheorem[numberlike=theorem]{lemma}
\declaretheorem[numberlike=theorem]{proposition}
\declaretheorem[numberlike=theorem]{corollary}
\declaretheoremstyle[%
qed={\ensuremath\blacksquare} 
]{exstyle}
\declaretheorem[numberlike=theorem]{definition}
\declaretheorem[numberlike=theorem,style=exstyle]{example}
\def\Hkm{\cH}
\def\Smog{{\cS_{\textup{mog}}}}
\def\phig{\phi_{\textup{g}}}
\begin{document}

\maketitle

\begin{abstract}
    Suppose $k$ centers are fit to $m$ points by
    heuristically minimizing the $k$-means cost;
    what is the corresponding fit over the source distribution?
    This question is resolved here for distributions with $p\geq 4$ bounded moments;
    in particular, the difference between the sample cost and distribution cost
    decays with $m$ and $p$ as $m^{\min\{-1/4, -1/2+2/p\}}$.
    The essential technical contribution is a mechanism
    to uniformly control deviations in the face of unbounded parameter sets,
    cost functions, and source distributions.
    To further demonstrate this mechanism,
    a soft clustering variant of $k$-means cost is also considered,
    namely
    the
    log likelihood of a Gaussian mixture,
    subject to the constraint that all covariance matrices have bounded spectrum.
    Lastly, a rate with refined constants is provided for $k$-means
    instances possessing some cluster structure.
%
%
\end{abstract}

\section{Introduction}

Suppose a set of $k$ centers $\{p_i\}_{i=1}^k$
is selected by approximate minimization of $k$-means
cost; how does the fit
over the sample compare with the fit over the distribution?
Concretely: given $m$ points sampled from a source distribution $\rho$,
what can be said about the quantities
\begin{align}
    \Bigg| 
        \frac 1 m \sum_{j=1}^m \min_{i} \|x_j-p_i\|_2^2
        &-
        \int \min_{i} \|x-p_i\|_2^2 d\rho(x)
        \Bigg|
        &&\textup{($k$-means),}
        \label{eq:intro:km}
        \\
        \Bigg|
        \frac 1 m \sum_{j=1}^m \ln\left(\sum_{i=1}^k \alpha_i p_{\theta_i}(x_j)\right)
        &-
        \int
        \ln\left(\sum_{i=1}^k \alpha_i p_{\theta_i}(x)\right)
        d\rho(x)
        \Bigg|
        &&\textup{(soft $k$-means),}
        \label{eq:intro:mog}
\end{align}
where 
each $p_{\theta_i}$ denotes the density of a Gaussian with a covariance
matrix whose eigenvalues lie in some closed positive interval.

The
literature offers a wealth of information related to this question.  For $k$-means, there
is firstly a consistency result:
under some identifiability conditions, the global minimizer over the sample will
converge to the global minimizer over the distribution as the sample size $m$ increases
\citep{pollard_km_cons}.  Furthermore, if the distribution is bounded, standard
tools can provide deviation inequalities \citep{Lugosi94ratesof,
Ben-david04aframework,
DBLP:conf/nips/RakhlinC06}.
For the second problem, which is maximum likelihood of a Gaussian mixture
(thus amenable to EM \citep{duda_hart_stork}),
classical results
regarding the consistency of maximum likelihood again provide that, under some
identifiability conditions, the optimal solutions over the sample converge to
the optimum over the distribution \citep{ferguson_large_sample_theory}.

The task here is thus: to provide finite sample guarantees for these problems,
but eschewing boundedness, subgaussianity, and similar assumptions in favor
of moment assumptions.

\subsection{Contribution}



The results here are of the following form:
given $m$ examples from a distribution with a few bounded moments,
and any set of parameters beating some fixed cost $c$,
the corresponding deviations in cost (as in \cref{eq:intro:km} and \cref{eq:intro:mog})
approach $\cO(m^{-1/2})$ with the availability of higher moments.
\begin{itemize}
    \item
        In the case of $k$-means (cf. \Cref{fact:km:basic:kmeans_cost}),
        $p\geq 4$ moments suffice, and the rate is $\cO(m^{\min\{-1/4, -1/2 + 2/p\}})$.
        For Gaussian mixtures (cf. \Cref{fact:mog:basic}),
        $p\geq 8$ moments suffice, and the rate is $\cO(m^{-1/2 + 3/p})$.
    \item
        The parameter $c$ allows these guarantees to hold for heuristics.  For instance,
        suppose $k$ centers are output by Lloyd's method.  While Lloyd's method
        carries no optimality guarantees, the results here hold for the output of Lloyd's
        method simply by setting $c$ to be the variance of the data, equivalently
        the $k$-means cost with a single center placed at the mean.
    \item
        The $k$-means and Gaussian mixture costs are only well-defined when the
        source distribution has $p\geq 2$ moments.  The condition of $p\geq 4$
        moments, meaning the variance has a variance, allows consideration
        of many heavy-tailed
        distributions, which are ruled out by boundedness and subgaussianity assumptions.
\end{itemize}

The main technical byproduct of the proof
is a mechanism to deal with the unboundedness of the
cost function; this technique will be detailed in \Cref{sec:km}, but the difficulty
and its resolution can be easily sketched here.

For a single set of centers $P$, the deviations in \cref{eq:intro:km} may be controlled
with an application of Chebyshev's inequality.  But this does not immediately grant
deviation bounds on another set of centers $P'$, even if $P$ and $P'$ are very close:
for instance, the difference between the two costs will grow as successively farther
and farther away points are considered.

The resolution is to simply note that there is so little probability mass in those far
reaches that the cost there is irrelevant.
Consider a single center $p$ (and assume $x\mapsto \|x-p\|_2^2$ is integrable);
the dominated convergence theorem grants
\[
    \int_{B_i} \|x-p\|_2^2 d\rho(x)
    \quad \to \quad
    \int \|x-p\|_2^2 d\rho(x),
    \qquad\qquad
    \textup{where $B_i := \{x \in \R^d : \|x-p\|_2 \leq i\}$.}
\]
In other words, a ball $B_i$ may be chosen so that
$\int_{B_i^c}\|x-p\|_2^2d\rho(x)\leq 1/1024$.
Now consider some $p'$ with $\|p-p'\|_2\leq i$.
Then
\[
    \int_{B_i^c} \|x-p'\|_2^2d\rho(x)
    \leq \int_{B_i^c} (\|x-p\|_2 + \|p-p'\|_2)^2d\rho(x)
    \leq 4\int_{B_i^c} \|x-p\|_2^2d\rho(x)
    \leq \frac 1 {256}.
\]
In this way, a single center may control the outer deviations
of whole swaths of other centers.
Indeed, those choices outperforming the reference score $c$ will provide
a suitable swath.  Of course, it would be nice to get a sense of the size
of $B_i$; this however is provided by the moment assumptions.

The general strategy is thus to split consideration
into
outer deviations, and local deviations.  The local deviations may be controlled by
standard techniques.  To control outer deviations, a single pair of dominating costs
--- a lower bound and an upper bound --- is controlled.



This technique can be found in the proof of the consistency of $k$-means due
to \citet{pollard_km_cons}.  The present work shows it can
also provide
finite sample guarantees, and moreover be applied outside hard clustering.




The content here is organized as follows.  The remainder of the
introduction surveys related
work, and subsequently \Cref{sec:setup} establishes some basic notation.  The core
deviation technique, termed \emph{outer bracketing} (to connect it to the bracketing
technique from empirical process theory), is presented along with the deviations
of $k$-means in \Cref{sec:km}.
The technique is then applied in \Cref{sec:mog} to a soft clustering variant, namely
log likelihood of Gaussian mixtures having bounded spectra.
As a reprieve between these two heavier bracketing sections,
\Cref{sec:km:clamp} provides a simple refinement for $k$-means
which can adapt to cluster structure.

All proofs are deferred to the appendices, however the construction and application
of outer brackets is sketched in the text.

\subsection{Related Work}

As referenced earlier, \citeauthor{pollard_km_cons}'s work deserves special mention,
both since it can be seen as the origin of the outer bracketing technique, and since it
handled $k$-means under similarly slight assumptions (just two moments, rather than the
four here) \citep{pollard_km_cons,pollard_km_clt}.  The present work hopes
to be a
spiritual successor, providing finite sample guarantees, and adapting 
technique to a soft clustering problem.

In the machine learning community, statistical guarantees for clustering have been
extensively studied under the topic of \emph{clustering stability}
\citep{ DBLP:conf/nips/RakhlinC06, Ben-david06asober, Shamir82clusterstability,
Shamir_modelselection}.  One formulation of stability is: if parameters are learned
over two samples, how close are they?  The technical component of these works frequently
involves finite sample guarantees, which in the works listed here make a boundedness
assumption, or something similar (for instance, the work of
\citet{Shamir82clusterstability} requires the cost function to satisfy
a bounded differences condition).  Amongst these finite sample guarantees,
the finite sample guarantees due to \citet{DBLP:conf/nips/RakhlinC06} are similar
to the development here \emph{after} the invocation of the outer bracket: namely,
a covering argument controls deviations over a bounded set.
The results of \citet{Shamir_modelselection} do not make a boundedness assumption,
but the main results are not finite sample guarantees; in particular, they rely on
asymptotic results due to~\citet{pollard_km_clt}.

There are many standard tools which may be applied to the problems here,
particularly if a boundedness assumption is made \citep{bbl_esaim,blm_conc};
for instance, \citet{Lugosi94ratesof} use tools from VC theory to handle
$k$-means in the bounded case.
Another interesting work, by \citet{Ben-david04aframework}, develops specialized
tools to measure the complexity of certain clustering problems; when applied
to the problems of the type considered here, a boundedness assumption is made.

A few of the above works provide some negative results and related commentary on
the topic of uniform deviations for distributions with unbounded support
\citep[Theorem 3 and subsequent discussion]{Shamir_modelselection}
\citep[Page 5 above Definition 2]{Ben-david04aframework}.
The primary ``loophole'' here is to constrain consideration to those solutions beating
some reference score $c$.
It is reasonable to guess that such a condition entails that
a few centers must lie near the bulk of the distribution's mass;
making this guess rigorous is the first step here
both for $k$-means and for Gaussian mixtures, and moreover the same consequence was
used by \citeauthor{pollard_km_cons} for the consistency of $k$-means
\citep{pollard_km_cons}.  In \citeauthor{pollard_km_cons}'s work, only optimal choices
were considered, but the same argument relaxes to arbitrary $c$, which can thus encapsulate
heuristic schemes, and not just nearly optimal ones.  (The secondary loophole is
to make moment assumptions; these sufficiently constrain the structure of the distribution
to provide rates.)

In recent years, the empirical process theory community has produced a large
body of work on the topic of maximum likelihood
(see for instance the excellent overviews and recent work of
\citet{wellner_uw_overview,vdw_wellner,wellner_kmonotone_mle}).
As stated previously, the choice of the term ``bracket'' is to connect to empirical
process theory.  Loosely stated, a bracket is simply a pair of functions which sandwich
some set of functions; the \emph{bracketing entropy} is then (the logarithm of)
the number of brackets
needed to control a particular set of functions.
In the present work, brackets are paired with sets
which identify the far away regions they are meant to control; furthermore, while
there is potential for the use of many outer brackets, the approach here is able to
make use of just a single outer bracket.
The name bracket is suitable, as opposed to cover, since the bracketing elements need
not be members of the function class being dominated.  (By contrast,
\citeauthor{pollard_km_cons}'s use in the
proof of the consistency of $k$-means was more akin to covering,
in that remote fluctuations were compared to that of a
a single center placed at the origin
\citep{pollard_km_cons}.)

\section{Notation}
\label{sec:setup}
The ambient space will always be the Euclidean space $\R^d$, though a few
results will be stated for a general domain $\cX$.  The source probability
measure will be $\rho$, and when a finite sample of size $m$ is available,
$\hat\rho$ is the corresponding empirical measure.  Occasionally, the variable
$\nu$ will refer to an arbitrary probability measure (where $\rho$ and $\hat\rho$ will serve
as relevant instantiations).  Both integral and expectation notation will be used;
for example, $\bbE(f(X)) = \bbE_\rho(f(X) = \int f(x)d\rho(x)$;
for integrals, $\int_B f(x)d\rho(x) = \int f(x) \1[x\in B] d\rho(x)$, where $\1$ is
the indicator function.
The moments of $\rho$ are defined as follows.

\begin{definition}
    Probability measure $\rho$ has
    \emph{order-$p$ moment bound $M$ with respect to norm $\|\cdot\|$}
    when $\bbE_\rho\|X-\bbE_\rho(X)\|^l\leq M$ for $1\leq l\leq p$.
\end{definition}

For example, the typical setting of $k$-means uses norm $\|\cdot\|_2$,
and at least two moments are needed for the cost over $\rho$ to be finite; the condition
here of needing 4 moments can be seen as naturally arising via Chebyshev's inequality.
Of course, the availability of higher moments is beneficial,
dropping the rates here from $m^{-1/4}$ down to $m^{-1/2}$.
Note that the basic controls derived from moments, which are primarily elaborations
of Chebyshev's inequality, can be found in \Cref{sec:moments}.

The $k$-means analysis will generalize slightly beyond the single-center
cost $x\mapsto\|x-p\|_2^2$ via \emph{Bregman divergences}
\citep{censor_zenios,bregman_clustering}.
\begin{definition}
    Given a convex differentiable function $f :\cX\to \R$,
    the corresponding \emph{Bregman divergence}
    is $\sfB_f(x,y) := f(x) - f(y) - \ip{\nf(y)}{x-y}$.
\end{definition}
Not all Bregman divergences are handled; rather, the following regularity conditions
will be placed on the convex function.
\begin{definition}
    A convex differentiable function $f$ is \emph{strongly convex} with modulus $r_1$
    and has \emph{Lipschitz gradients} with constant $r_2$, both respect to some
    norm $\|\cdot\|$, when $f$ (respectively) satisfies
    \begin{align*}
        f(\alpha x + (1-\alpha) y)
        & \leq
        \alpha f(x)
        + (1-\alpha) f(y)
        - \frac{r_1\alpha(1-\alpha)}{2} \|x-y\|^2,
        \\
        \|\nf(x) - \nf(y)\|_* &\leq r_2\|x-y\|,
    \end{align*}
    where $x,y \in \cX$, $\alpha\in[0,1]$, and $\|\cdot\|_*$ is the dual of
    $\|\cdot\|$. (The Lipschitz gradient condition is sometimes
    called \emph{strong smoothness}.)
\end{definition}

These conditions are a fancy way of saying the corresponding Bregman divergence is
sandwiched between two quadratics (cf. \Cref{fact:bregman:easy_norms}).

\begin{definition}
    Given a convex differentiable function $f:\R^d\to R$ which is strongly convex and
    has Lipschitz gradients with respective constants $r_1,r_2$ with respect
    to norm $\|\cdot\|$,
    the \emph{hard $k$-means cost} of a single point $x$
    according to a set of centers $P$ is
    \[
        \phi_f(x;P) := \min_{p\in P} \sfB_f(x, p).
    \]
    The corresponding $k$-means cost of a set of points (or distribution)
    is thus computed as $\bbE_\nu(\phi_f(X;P))$,
    and let $\Hkm_f(\nu;c,k)$ denote all sets of at most $k$ centers beating cost $c$,
    meaning
    \[
        \Hkm_f(\nu;c,k) := \{ P :
            |P| \leq k,
            \bbE_\nu(\phi_f(X;P)) \leq c
        \}.
    \]
\end{definition}

For example, choosing norm $\|\cdot\|_2$ and convex function $f(x)=\|x\|_2^2$ (which
has $r_1=r_2=2$), the corresponding Bregman divergence
is $\sfB_f(x,y) = \|x-y\|_2^2$,
and $\bbE_{\hat\rho}(\phi_f(X;P))$
denotes the vanilla $k$-means cost of some finite point set encoded in the
empirical measure $\hat\rho$.

The hard clustering guarantees will work with $\Hkm_f(\nu;c,k)$,
where $\nu$ can be either the source distribution $\rho$, or its empirical
counterpart $\hat\rho$.  As discussed previously, it is reasonable to set $c$ to
simply the sample variance of the data, or a related estimate of the
true variance (cf. \Cref{sec:moments}).


Lastly, the class of Gaussian mixture penalties is as follows.

\begin{definition}
    Given Gaussian parameters $\theta := (\mu, \varSigma)$, 
    let $p_\theta$ denote Gaussian density
    \[
        p_{\theta}(x) =
        \frac 1 {\sqrt{(2\pi)^d |\varSigma_i|}}
            \exp\left(-\frac 1 2 (x-\mu_i)^T \varSigma_i^{-1} (x-\mu_i)\right).
    \]
    Given Gaussian mixture parameters $(\alpha, \Theta) = (\{\alpha_i\}_{i=1}^k,
    \{\theta_i\}_{i=1}^k)$ with $\alpha \geq 0$ and $\sum_i \alpha_i = 1$
    (written $\alpha \in \Delta$),
    the Gaussian mixture cost at a point $x$ is
    \[
        \phi_{\textup{g}}(x; (\alpha, \Theta))
        := \phi_{\textup{g}}(x; \{(\alpha_i,\theta_i) = (\alpha_i, \mu_i, \varSigma_i)\}_{i=1}^k)
        := \ln\left(
            \sum_{i=1}^k \alpha_i p_{\theta_i}(x)
        \right),
    \]
    Lastly, given a measure $\nu$, bound $k$ on the number of mixture parameters,
    and spectrum bounds $0 < \sigma_1\leq\sigma_2$,
    let $\Smog(\nu; c,k, \sigma_1, \sigma_2)$ denote those mixture parameters beating
    cost $c$, meaning
    \begin{align*}
        \Smog(\nu; c,k, \sigma_1, \sigma_2)
        &:=
        \left\{
            (\alpha, \Theta)
            :
            \sigma_1 I \preceq \varSigma_i \preceq \sigma_2 I,
            |\alpha| \leq k,
            \alpha\in\Delta,
            \bbE_\nu\left(\phi_{\textup{g}}(X; (\alpha, \Theta))\right) \leq c
        \right\}.
    \end{align*}
\end{definition}

While a condition of the form $\varSigma \succeq \sigma_1 I$ is typically enforced in
practice (say, with a Bayesian prior, or by ignoring updates which shrink the covariance
beyond this point),
the condition $\varSigma \preceq \sigma_2 I$ is potentially violated.
These conditions will be discussed further in \Cref{sec:mog}.

\section{Controlling $k$-means with an Outer Bracket}
\label{sec:km}

First consider the special case of $k$-means cost.

\begin{corollary}
    \label{fact:km:basic:kmeans_cost}
    Set $f(x) := \|x\|_2^2$,
    whereby $\phi_f$ is the $k$-means cost.
    Let real $c\geq 0$
    and probability measure $\rho$ be given with order-$p$ moment bound
    $M$ with respect to $\|\cdot\|_2$,
    where $p\geq 4$ is a positive multiple of 4.
    Define the quantities
    \begin{align*}
        c_1 := (2M)^{1/p} + \sqrt{2c},
        \quad
        M_1 := M^{1/(p-2)} + M^{2/p},
        \quad
        N_1 := 2 + 576d(c_1 + c_1^2 + M_1 + M_1^2).
    \end{align*}
    Then with probability at least $1-3\delta$ over the draw of a sample
    of size $m \geq \max\{(p / (2^{p/4+2}e))^2,9\ln(1/\delta)\}$, every set of centers
    $P \in \Hkm_f(\hat\rho;c,k)\cup \Hkm_f(\rho;c,k)$ satisfies
    \begin{align*}
        &\left|
        \int \phi_f(x;P)d\rho(x)
        - \int \phi_f(x;P)d\hat\rho(x)
        \right|
        \\
        &\qquad\leq m^{-1/2 + \min\{1/4,2/p\}}
        \left(
            4 +
            (72c_1^2 + 32M_1^2)
            \sqrt{
                \frac 1 2 
            \ln
            \left(
                \frac{ (mN_1)^{dk} }{\delta}
            \right)
            }
            + \sqrt{\frac{2^{p/4}ep}{8m^{1/2}}} \left(\frac 2 \delta\right)^{4/p}
        \right).
    \end{align*}
\end{corollary}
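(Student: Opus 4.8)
The plan is a ``clamp'' argument, following the outer-bracket strategy sketched in \Cref{sec:km}: dominate the cost on the far reaches by a single upper bracket, truncate at a level $\tau$, control the resulting bounded class by a covering argument, and control the overshoot beyond $\tau$ purely by moments; the threshold $\tau$ is then tuned against $m$ and $p$. The first step is localization: any $P$ with $\bbE_\nu(\phi_f(X;P))\le c$, for $\nu\in\{\rho,\hat\rho\}$, must contain a center within distance $r_0 = O(c_1)$ of $\mu := \bbE_\rho(X)$. Indeed, Markov applied to the order-$p$ moment bound puts $\rho$-mass at least $1/2$ in the ball of radius $(2M)^{1/p}$ about $\mu$; if every center of $P$ were farther than $O(c_1)$ from $\mu$, then $\phi_f(\cdot;P)$ would exceed $2c$ throughout that ball and the cost would exceed $c$. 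For $\nu=\hat\rho$ the same conclusion holds on a $1-\delta$ event, once the sample-moment controls of \Cref{sec:moments} guarantee enough empirical mass near $\mu$ (and that $\|\hat\mu-\mu\|$ is small); this is where $m\ge 9\ln(1/\delta)$ is used. This is the quantitative form of the argument \citeauthor{pollard_km_cons} used for consistency, relaxed from optimal $P$ to arbitrary $P$ beating $c$, which is what makes the reference score $c$ the right ``loophole''.

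Given the localizing center $p_0$, for \emph{every} admissible $P$ one has the pointwise bound $\phi_f(x;P)\le \|x-p_0\|_2^2 \le 2\|x-\mu\|_2^2 + 2r_0^2 =: U(x)$, a single upper bracket (the lower bracket being $0\le\phi_f$). Fixing $\tau$ and writing $\phi_f^\tau := \phi_f\wedge\tau$, since $0\le\phi_f^\tau\le\phi_f\le U$,
\begin{align*}
 &\left|\bbE_\rho\phi_f(X;P) - \bbE_{\hat\rho}\phi_f(X;P)\right| \\
 &\qquad\le \left|\bbE_\rho\phi_f^\tau(X;P) - \bbE_{\hat\rho}\phi_f^\tau(X;P)\right|
 + \bbE_\rho\!\big[U\1[U>\tau]\big]
 + \bbE_{\hat\rho}\!\big[U\1[U>\tau]\big],
\end{align*}
and it remains to control the three pieces.

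For the clamped class $\{\phi_f^\tau(\cdot;P)\}$ I would observe that only centers within $O(\sqrt\tau + c_1)$ of $\mu$ can attain the truncated minimum where it is not already $\tau$, reducing (at the cost of a lower-order moment term for pruned far centers) to a bounded parameter set in $\R^{dk}$; an $\epsilon$-net with $\epsilon\asymp 1/m$ then has size $\lesssim (mN_1)^{dk}$ (defining $N_1$), the map $P\mapsto\phi_f^\tau(x;P)$ is $O(\sqrt\tau)$-Lipschitz in the centers uniformly in $x$ so the net error is $O(\sqrt\tau/m)$, and Hoeffding (range $\tau$, coefficient $\asymp c_1^2+M_1^2$ after bookkeeping) plus a union bound over the net give the $\tau\sqrt{m^{-1}\ln((mN_1)^{dk}/\delta)}$ term on a second $1-\delta$ event. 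The deterministic moment term satisfies $\bbE_\rho[U\1[U>\tau]]\lesssim M\tau^{1-p/2}$ by Markov. The empirical moment term $\frac1m\sum_j U(x_j)\1[U(x_j)>\tau]$ is a sum of nonnegative, heavy-tailed summands, which I would control by bounding its $(p/4)$-th moment (here $p/4\in\Z_{>0}$) using Jensen together with the sum-moment estimates of \Cref{sec:moments}, and then Markov; this produces the $\sqrt{2^{p/4}ep/(8m^{1/2})}\,(2/\delta)^{4/p}$ term on a third $1-\delta$ event, the hypothesis $m\ge (p/(2^{p/4+2}e))^2$ being the regularity condition that makes the moment estimate usable. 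Union-bounding the three events yields probability $1-3\delta$.

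Finally, summing the pieces gives a bound of the form $\tau\sqrt{m^{-1}\ln(\cdot)} + (\text{moment tails})$; choosing $\tau = m^{\min\{1/4,2/p\}}$ to balance the clamped deviation against the outer mass produces the prefactor $m^{-1/2+\min\{1/4,2/p\}}$, after which collecting constants into $c_1$, $M_1 := M^{1/(p-2)}+M^{2/p}$ and $N_1$ gives the displayed inequality. The main obstacle is the empirical outer term: the summands $U(x_j)\1[U(x_j)>\tau]$ have only about $p/2$ finite moments and vanish except on a rare event, so neither bounded-differences nor subgaussian machinery applies, and obtaining simultaneously the correct $m$-rate and the $(1/\delta)^{4/p}$ dependence is precisely what the moment lemmas of \Cref{sec:moments} are for; a secondary point requiring care is the reduction of the clamped $k$-means cost to a bounded, Lipschitz family of centers, which is what legitimizes the covering step.
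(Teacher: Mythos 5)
Your proposal is correct and follows essentially the same route as the paper: localization of one center via the reference cost $c$ (\Cref{fact:km:one_center:compact}), a single dominating quadratic for the far region with its $\rho$- and $\hat\rho$-tails controlled by the moment lemmas of \Cref{sec:moments} (producing the $(2/\delta)^{4/p}$ term exactly as in \Cref{fact:moment_clipping:sample,fact:chebyshev_tao}), and a cover-plus-Hoeffding bound on the remaining bounded class; your truncation at level $\tau\asymp m^{\min\{1/4,2/p\}}$ is the ``clamp'' repackaging of the paper's restriction to the ball $C$ of radius $R_C$ (note $4r_2R_C^2\asymp\tau$), which the paper itself formalizes in \Cref{fact:bracket_gives_clamp,fact:km:clamp:basic}. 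The only claim needing more careful phrasing is that far centers ``cannot attain the truncated minimum'': a remote center does attain values below $\tau$ at points near itself, so the pruning error must be localized to $B^c$ and absorbed into the tail terms (precisely the role of the outer bracket in \Cref{fact:km:outer_bracket:2}) --- but you flag this, and it costs only lower-order terms.
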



One artifact of the moment approach (cf. \Cref{sec:moments}),
heretofore ignored,
is the term $(2/\delta)^{4/p}$.
While this may seem inferior to $\ln(2/\delta)$, note that the choice
$p = 4\ln(2/\delta)/ \ln(\ln(2/\delta))$ suffices to make the two equal.

Next consider a general bound for Bregman divergences.
This bound has a few
more parameters than \Cref{fact:km:basic:kmeans_cost}.  In particular,
the term $\epsilon$, which is instantiated to $m^{-1/2+1/p}$ in the proof of
\Cref{fact:km:basic:kmeans_cost}, catches the mass of points discarded due to the outer
bracket, as well as the resolution of the (inner) cover.
The parameter $p'$, which controls the tradeoff between $m$ and $1/\delta$, is set
to $p/4$ in the proof of \Cref{fact:km:basic:kmeans_cost}.

\begin{theorem}
    \label{fact:km:basic}
    Fix a reference norm $\|\cdot\|$ throughout the following.
    Let probability measure $\rho$ be given with order-$p$ moment bound
    $M$ where $p\geq 4$,
    a convex function $f$ with corresponding constants $r_1$ and $r_2$,
    reals $c$ and $\epsilon>0$,
    and integer $1\leq p' \leq p/2-1$ be given.
    Define the quantities
    \begin{align*}
        R_B &:=
        \max\left\{
            (2M)^{1/p}
            + \sqrt{4c/r_1}
            ,
            \max_{i\in [p']}
            (M/\epsilon)^{1/(p-2i)}
        \right\},
        \\
        R_C &:=
        \sqrt{r_2/r_1}\left(
            (2M)^{1/p} + \sqrt{4c/r_1}
            +
        R_B\right)
        + R_B,
        \\
        B&:= \left\{ x \in \R^d : \|x-\bbE(X)\| \leq R_B\right\},
        \\
        C&:= \left\{ x \in \R^d : \|x-\bbE(X)\| \leq R_C\right\},
        \\
        \tau
        &:= \min\left\{
            \sqrt{\frac {\epsilon}{2r_2}}
            ,
            \frac {\epsilon}{2(R_B+R_C)r_2}
        \right\},
    \end{align*}
    and let $\cN$ be a cover of $C$ by $\|\cdot\|$-balls
    with radius $\tau$; in the case that $\|\cdot\|$ is an $l_p$ norm,
    the size of this cover has bound
    \[
        |\cN|
        \leq
        \left(1 + \frac {2R_Cd}{\tau}\right)^d.
    \]
    Then
    with probability at least $1-3\delta$ over the draw of a sample of size
    $m \geq \max\{p'/(e2^{p'}\epsilon), 9\ln(1/\delta)\}$,
    every set of centers
    $P \in \Hkm_f(\rho;c,k) \cup \Hkm_f(\hat\rho;c,k)$
    satisfies
    \[
        \left|
        \int \phi_f(x;P)d\rho(x)
        - \int \phi_f(x;P)d\hat\rho(x)
        \right|
        \leq 4\epsilon
        +
        4r_2R_C^2
        \sqrt{
            \frac {1}{2m}
            \ln
            \left(
                \frac {2|\cN|^k}{\delta}
            \right)
        }
        + \sqrt{\frac{e2^{p'}\epsilon p'}{2m}} \left(\frac 2 \delta\right)^{1/p'}.
    \]
\end{theorem}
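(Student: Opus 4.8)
The plan is to carry out the ``outer bracketing'' program sketched in the text. Write $\mu := \bbE_\rho(X)$ and $r_0 := (2M)^{1/p} + \sqrt{4c/r_1}$, and note that $r_0$ is the first argument of the $\max$ defining $R_B$, so $r_0 \le R_B \le R_C$. The crux — and what I expect to be the genuine obstacle — is the first step: showing that \emph{every} $P$ beating cost $c$ with respect to a measure $\nu \in \{\rho, \hat\rho\}$ owns a center within distance $r_0$ of $\mu$. Strong convexity gives $\sfB_f(x, q) \ge (r_1/2)\|x - q\|^2$ (cf.\ \Cref{fact:bregman:easy_norms}), so $\bbE_\nu(\phi_f(X;P)) \le c$ forces $\bbE_\nu(\min_{q\in P}\|X - q\|^2) \le 2c/r_1$, whence Markov's inequality puts at least half the $\nu$-mass on $S := \{x : \min_{q}\|x-q\| \le \sqrt{4c/r_1}\}$; likewise, the order-$p$ moment bound and Markov put more than half the $\rho$-mass on a ball $T$ of radius $\approx (2M)^{1/p}$ about $\mu$. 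For $\nu = \rho$ this already forces $\rho(S \cap T) > 0$, hence a center in $S$ within $(2M)^{1/p}$ of $\mu$, i.e.\ within $r_0$ of $\mu$; for $\nu = \hat\rho$ the same conclusion follows once a Chernoff/Hoeffding argument on $\hat\rho(T)$ — the source of the requirement $m \ge 9\ln(1/\delta)$, and one of the three $\delta$-events — guarantees that $\hat\rho$ retains more than half its mass on a mild enlargement of $T$.

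\textbf{Reducing to centers in $C$.} Fix such a $P$ and a center $p$ with $\|p - \mu\| \le r_0$. For any $x \in B$ and any $q$ with $\|q - \mu\| > R_C$, the Lipschitz-gradient bound gives $\sfB_f(x,p) \le (r_2/2)\|x-p\|^2 \le (r_2/2)(R_B + r_0)^2$, while strong convexity and the definition of $R_C$ give $\sfB_f(x,q) \ge (r_1/2)(R_C - R_B)^2 = (r_2/2)(r_0 + R_B)^2 \ge \sfB_f(x,p)$; thus such a remote $q$ never attains the minimum over $x \in B$, and $\phi_f(x; P) = \phi_f(x; P \cap C)$ for every $x \in B$, with $P \cap C$ a nonempty set of at most $k$ centers lying in $C$. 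Splitting
\[
    \int \phi_f(x;P)\,(d\rho - d\hat\rho)
    = \int_B \phi_f(x; P\cap C)\,(d\rho - d\hat\rho)
    + \int_{B^c} \phi_f(x;P)\,(d\rho - d\hat\rho)
\]
isolates a \emph{bounded} local term (center sets drawn from $C$) and an outer term.

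\textbf{Local and outer terms.} For the local term I would quantize each center of $P \cap C$ to the center of its ball in $\cN$, obtaining $\tilde P \subseteq \cN$; the identity $\sfB_f(x,q) - \sfB_f(x,\tilde q) = \sfB_f(\tilde q, q) + \ip{\nf(\tilde q) - \nf(q)}{x - \tilde q}$ together with $\|q - \tilde q\| \le \tau$, $\|x - \tilde q\| \le R_B + R_C$, and the choice of $\tau$ forces $|\phi_f(x; P\cap C) - \phi_f(x; \tilde P)| \le \epsilon$ for $x \in B$, so by the triangle inequality the local term lies within $2\epsilon$ of $\int_B \phi_f(x;\tilde P)(d\rho - d\hat\rho)$; since $0 \le \phi_f(\cdot;\tilde P)\1_B \le 2r_2 R_C^2$, Hoeffding's inequality and a union bound over the (at most) $|\cN|^k$ choices of $\tilde P$ bound this last quantity by the middle summand of the claimed bound, off a $\delta$-event. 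For the outer term, since $\|x - \mu\| > R_B \ge r_0$ on $B^c$ we have the single $P$-independent bracket $0 \le \phi_f(x;P) \le \sfB_f(x,p) \le 2r_2\|x-\mu\|^2$, so the outer term is at most $2r_2\int_{B^c}\|x-\mu\|^2 d\rho + 2r_2\int_{B^c}\|x-\mu\|^2 d\hat\rho$. The tail estimate $\int_{\|y-\mu\| > R}\|y-\mu\|^2 d\rho \le M R^{-(p-2)}$ with $R = R_B \ge (M/\epsilon)^{1/(p-2)}$ (the $i=1$ term of the $\max$) handles the $\rho$-integral; for the $\hat\rho$-integral the summands $Y_j := \|x_j - \mu\|^2\1[\|x_j-\mu\| > R_B]$ are i.i.d., nonnegative, with $\bbE Y_j \le \epsilon$ and — crucially, because $R_B \ge (M/\epsilon)^{1/(p - 2p')}$ (the $i = p'$ term, legitimate since $1 \le p' \le p/2-1$ makes $p - 2p' \ge 2$) — also $\bbE Y_j^{p'} \le M R_B^{-(p - 2p')} \le \epsilon$, so a moment deviation bound from \Cref{sec:moments} (whose hypotheses are exactly what $m \ge p'/(e2^{p'}\epsilon)$ secures) controls $\tfrac1m\sum_j Y_j$ by $O(\epsilon) + \sqrt{e2^{p'}\epsilon p'/(2m)}(2/\delta)^{1/p'}$ off the third $\delta$-event.

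\textbf{Assembly.} Collecting the local quantization error, the local Hoeffding term, and the two outer-tail pieces, the $O(\epsilon)$ contributions sum to at most $4\epsilon$ under the stated constants in $R_B$, $R_C$, and $\tau$; the Hoeffding term gives the middle summand and the moment deviation the last; a union bound over the three failure events (covering/Hoeffding, the $\hat\rho$-tail moment deviation, and the empirical-mass concentration used in the first step) yields the claim with probability at least $1 - 3\delta$. The step I expect to be genuinely delicate is the first one pushed through on $\hat\rho$; once a center is pinned near $\mu$ uniformly over the relevant $P$, everything downstream is bookkeeping with Markov's inequality, Hoeffding's inequality, and the moment lemmas of \Cref{sec:moments}.
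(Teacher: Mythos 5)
Your proposal is structurally the paper's own proof: pin at least one center near $\bbE(X)$ using the cost constraint (the paper's \Cref{fact:km:one_center:compact}), use that center to show centers outside $C$ are irrelevant on $B$ and to build the single dominating pair $0 \le \phi_f(\cdot;P) \le O(r_2)\|\cdot - \bbE(X)\|^2$ on $B^c$ (the outer bracket of \Cref{fact:km:outer_bracket:2}), control the bounded part by a cover of $C$ plus Hoeffding and a union bound over $|\cN|^k$ (\Cref{fact:bregman:cover}), and control the tail via \Cref{fact:moment_clipping} and \Cref{fact:moment_clipping:sample}. The three failure events, the quantization identity, and the assembly all match the paper.

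The one place the numbers do not close as written is the step you yourself flagged as the crux. Under your normalization $\sfB_f(x,q)\ge (r_1/2)\|x-q\|^2$, Markov at threshold $4c/r_1$ gives only $\nu(S)\ge 1/2$, and \Cref{fact:moments:ball} gives only $\rho(T)\ge 1/2$; since $1/2+1/2=1$, this does not force $\rho(S\cap T)>0$, and for $\hat\rho$ it is worse: Hoeffding with $m\ge 9\ln(1/\delta)$ yields only $\hat\rho(T)>1/4$, and $1/2+1/4<1$, so the intersection can genuinely be empty. The paper avoids this by arguing the contrapositive with built-in slack: if every center lies outside the ball of radius $(2M)^{1/p}+\sqrt{4c/r_1}$, then every $x\in B_0$ has $\phi_f(x;P)\ge r_1\cdot (4c/r_1)=4c$ (using the sandwich $\sfB_f(x,y)\ge r_1\|x-y\|^2$ asserted by \Cref{fact:bregman:easy_norms}), so the cost already exceeds $c$ from $B_0$ alone even with $\hat\rho(B_0)$ only slightly above $1/4$. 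With your factor-of-two-weaker sandwich the same contrapositive yields only $2c\cdot\hat\rho(B_0)$, which is not larger than $c$; you would have to inflate $\sqrt{4c/r_1}$ to $\sqrt{8c/r_1}$ (changing $R_B$, $R_C$, and the stated constants) or adopt the normalization of \Cref{fact:bregman:easy_norms}. Once a center is pinned, everything downstream is, as you say, bookkeeping and coincides with \Cref{fact:km:outer_bracket:2}, \Cref{fact:bregman:cover}, and the paper's proof of \Cref{fact:km:basic} (modulo the same harmless constant-factor looseness the paper itself incurs in bounding $\int_{B^c}u\,d\rho$ by $\epsilon$ rather than $4r_2\epsilon$).
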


\subsection{Compactification via Outer Brackets}

The outer bracket is defined as follows.

\begin{definition}
    An outer bracket for probability measure $\nu$ at scale $\epsilon$
    consists of two triples, one each for lower and upper bounds.
    \begin{enumerate}
        \item The function $\ell$, function class $Z_\ell$, and set $B_\ell$ satisfy
            two conditions: if $x \in B_\ell^c$ and $\phi\in Z_\ell$, then $\ell(x) \leq
            \phi(x)$,
            and secondly $|\int_{B_\ell^c} \ell(x)d\nu(x)| \leq \epsilon$.
        \item Similarly, function $u$, function class $Z_u$, and set $B_u$ satisfy:
            if $x\in B_u^c$ and $\phi\in Z_u$, then $u(x) \geq \phi(x)$,
            and secondly $|\int_{B_u^c} u(x)d\nu(x)| \leq \epsilon$.
    \end{enumerate}
\end{definition}

Direct from the definition, given bracketing functions $(\ell,u)$, a bracketed function
$\phi_f(\cdot;P)$, and the bracketing set $B:= B_u \cup B_\ell$,
\begin{equation}
    -\epsilon
    \leq \int_{B^c} \ell(x)d\nu(x)
    \leq \int_{B^c} \phi_f(x;P)d\nu(x)
    \leq \int_{B^c} u(x)d\nu(x)
    \leq \epsilon;
    \label{eq:outer_bracket}
\end{equation}
in other words, as intended, this mechanism allows deviations on $B^c$ to be discarded.
Thus to uniformly control the deviations of the dominated functions
$Z := Z_u \cup Z_\ell$ over the set $B^c$,
it suffices to simply control the deviations of the pair
$(\ell,u)$.

The following \namecref{fact:km:outer_bracket:2}
shows that a bracket exists for $\{\phi_f(\cdot; P) : P \in \Hkm_f(\nu;c,k)\}$
and compact $B$, and moreover that this allows sampled points and candidate centers
in far reaches to be deleted.

\begin{lemma}
    \label{fact:km:outer_bracket:2}
    Consider the setting and definitions in
    \Cref{fact:km:basic},
    but additionally define
    \begin{align*}
        M' := 2^{p'}\epsilon,
        \qquad
        \ell(x)
        := 0,\qquad
        u(x)
        := 4r_2\|x - \bbE(X)\|^2,
        \qquad
        \epsilon_{\hat\rho}
        &:=
        \epsilon
        + \sqrt{\frac{M'ep'}{2m}} \left(\frac 2 \delta\right)^{1/p'}.
    \end{align*}
    The following statements hold with probability at least $1-2\delta$
    over a draw of size
    $m \geq \max\{p'/(M'e), 9\ln(1/\delta)\}$.
    \begin{enumerate}
        \item
            $(u,\ell)$ is an outer bracket for $\rho$ at scale $\epsilon_\rho := \epsilon$
            with sets $B_\ell = B_u=B$
            and $Z_\ell = Z_u = \{\phi_f(\cdot; P) : P \in \Hkm_f(\hat\rho;c,k) \cup
            \Hkm_f(\rho;c,k)\}$, 
            and furthermore
            the pair
            $(u,\ell)$
            is also an outer bracket for $\hat\rho$ at scale $\epsilon_{\hat\rho}$
            with the same sets.
        \item
            For every $P \in \Hkm_f(\hat\rho;c,k) \cup \Hkm_f(\rho;c,k)$,
            \[
                \left|
                \int \phi_f(x;P) d\rho(x)
                -
                \int_B \phi_f(x;P\cap C) d\rho(x)
                \right|
                \leq \epsilon_\rho = \epsilon.
            \]
            and
            \[
                \left|
                \int \phi_f(x;P) d\hat\rho(x)
                -
                \int_B \phi_f(x;P\cap C) d\hat\rho(x)
                \right|
                \leq \epsilon_{\hat\rho}
                .
            \]
    \end{enumerate}
\end{lemma}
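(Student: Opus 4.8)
The plan is to dispatch the two parts in turn; essentially all of the probabilistic content sits in a single geometric claim — that every center set beating cost $c$ keeps a center near the bulk of $\rho$ — together with a moment-based concentration of the discarded tail mass. The lower half of the bracket is immediate: each $\sfB_f$ is nonnegative, so $\ell(x)=0\le\phi_f(x;P)$ pointwise for every $P$, and $\int_{B^c}\ell\,d\nu=0\le\epsilon_\nu$ for both $\nu\in\{\rho,\hat\rho\}$. The first substantive step is the \emph{center-near-the-mean} claim: on a high-probability event, every $P\in\Hkm_f(\rho;c,k)\cup\Hkm_f(\hat\rho;c,k)$ contains a center $p^\star$ with $\|p^\star-\bbE(X)\|\le R_B$ (so $p^\star\in B\subseteq C$), and in fact with $\|p^\star-\bbE(X)\|\le (2M)^{1/p}+\sqrt{4c/r_1}$ when $P$ beats cost $c$ on $\rho$.

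For $\rho$ this claim is deterministic: \Cref{fact:bregman:easy_norms} converts the cost bound into $\bbE_\rho(\min_{p\in P}\|X-p\|^2)\le 2c/r_1$, Chebyshev on the $p$-th moment gives $\rho(\|X-\bbE(X)\|\le (2M)^{1/p})\ge 1/2$, and if every center of $P$ were farther than $(2M)^{1/p}+\sqrt{4c/r_1}$ from $\bbE(X)$, restricting the first expectation to the ball of radius $(2M)^{1/p}$ would force $\bbE_\rho(\min_p\|X-p\|^2)>2c/r_1$. For $P$ beating cost on $\hat\rho$ the identical computation runs with $\hat\rho$ in place of $\rho$, but now requires $\hat\rho(\|X-\bbE(X)\|\le s)\ge 1/2$ for a suitable radius $s$; this is supplied by a Chernoff bound on the empirical mass of a ball about the \emph{true} mean (cf.\ \Cref{sec:moments}), which is where the condition $m\ge 9\ln(1/\delta)$ and one of the two failure probabilities are spent, and the constants are arranged so the resulting radius still fits inside $R_B$ (and inside the radius $(2M)^{1/p}+\sqrt{4c/r_1}+R_B$ appearing in the definition of $R_C$).

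Given the nearby center, the upper bracketing inequality on $B^c$ is one line: for $x\in B^c$ (so $\|x-\bbE(X)\|>R_B$) and $P$ as above, $\phi_f(x;P)\le\sfB_f(x,p^\star)\le\tfrac{r_2}{2}\|x-p^\star\|^2\le\tfrac{r_2}{2}(\|x-\bbE(X)\|+R_B)^2\le 2r_2\|x-\bbE(X)\|^2\le u(x)$. The $\rho$-tail mass is then $\int_{B^c}u\,d\rho=4r_2\int_{\|x-\bbE(X)\|>R_B}\|x-\bbE(X)\|^2\,d\rho(x)\le\epsilon$, using the $p$-th moment bound and $R_B\ge (M/\epsilon)^{1/(p-2)}$ (the $i=1$ term of the max defining $R_B$). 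For $\hat\rho$, write $\int_{B^c}u\,d\hat\rho$ as the empirical average of the i.i.d.\ variable $Y:=4r_2\|X-\bbE(X)\|^2\1[\|X-\bbE(X)\|>R_B]$: its $\rho$-mean is $\le\epsilon$ by the previous line, and using $R_B\ge (M/\epsilon)^{1/(p-2i)}$ for every $i\in[p']$ — which makes each truncated power $\bbE_\rho[\|X-\bbE(X)\|^{2i}\1[\cdots]]$ comparable to $\epsilon$ via the $p$-th moment — one obtains $\bbE_\rho|Y-\bbE_\rho Y|^{p'}\le M'=2^{p'}\epsilon$, the $2^{p'}$ coming from centering. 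Feeding this into the $p'$-th-moment deviation inequality of \Cref{sec:moments} (valid for $m\ge p'/(M'e)$, costing the second failure probability) gives $\int_{B^c}u\,d\hat\rho\le\epsilon+\sqrt{M'ep'/(2m)}(2/\delta)^{1/p'}=\epsilon_{\hat\rho}$. Together with the pointwise domination this establishes part 1, with bracketing set $B$ and the stated class $Z$.

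Part 2 is a deterministic consequence of part 1 and the center-near-the-mean claim. For $x\in B$, the nearby center satisfies $\phi_f(x;P)\le\sfB_f(x,p^\star)\le\tfrac{r_2}{2}\bigl((2M)^{1/p}+\sqrt{4c/r_1}+R_B\bigr)^2$, whereas any center $p\in P$ lying outside $C$ has $\sfB_f(x,p)\ge\tfrac{r_1}{2}(R_C-R_B)^2$; the definition of $R_C$ is chosen exactly so that the latter is at least the former, so deleting the centers of $P$ outside $C$ does not change the minimum on $B$, i.e.\ $\phi_f(x;P)=\phi_f(x;P\cap C)$ for all $x\in B$. Hence $\int\phi_f(x;P)\,d\nu-\int_B\phi_f(x;P\cap C)\,d\nu=\int_{B^c}\phi_f(x;P)\,d\nu\in[0,\int_{B^c}u\,d\nu]$, which is $\le\epsilon$ for $\nu=\rho$ and $\le\epsilon_{\hat\rho}$ for $\nu=\hat\rho$ by part 1. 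The main obstacle is the $\hat\rho$ side of the center-near-the-mean claim together with the $\hat\rho$ tail bound: both must transfer control from $\rho$ to the \emph{same} sample — the former via an empirical-ball Chernoff estimate, the latter via a Rosenthal/Marcinkiewicz--Zygmund-type moment inequality — and it is the bookkeeping of which truncated moments have to be made small, and of forcing the guaranteed radius to fit the $R_C$ of the hypothesis, that pins down the $\max_{i\in[p']}$ in $R_B$ and the exact constants.
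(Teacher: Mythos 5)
Your proposal follows essentially the same route as the paper: Pollard's observation that any low-cost $P$ must retain a center near the bulk of the mass (the paper's \Cref{fact:km:one_center:compact}), domination of $\phi_f(\cdot;P)$ on $B^c$ by the quadratic $u$ centered at $\bbE(X)$ via the triangle inequality, moment clipping (\Cref{fact:moment_clipping}) for the $\rho$-tail and the truncated-moment deviation inequality (\Cref{fact:moment_clipping:sample}) for the $\hat\rho$-tail, and, for part 2, the observation that centers outside $C$ never attain the minimum on $B$ so that $\phi_f(\cdot;P)=\phi_f(\cdot;P\cap C)$ there. The one spot where your bookkeeping does not close as stated is the empirical center-location claim: demanding $\hat\rho(B_s)\ge 1/2$ for a ball $B_s$ about the true mean forces $s$ strictly larger than $(2M)^{1/p}$, and the resulting center-location radius need not fit inside the $C_0$ (hence the $(2M)^{1/p}+\sqrt{4c/r_1}+R_B$ entering $R_C$) fixed by \Cref{fact:km:basic}; the paper instead keeps the ball at radius $(2M)^{1/p}$, settles for empirical mass $>1/4$ (Hoeffding with $m\ge 9\ln(1/\delta)$), and uses that each such point at distance $\ge\sqrt{4c/r_1}$ from every center contributes cost $\ge 4c$, so the total exceeds $4c\cdot\tfrac14=c$ --- the factor $4$ in $\sqrt{4c/r_1}$ is precisely the slack that lets the empirical mass drop to $1/4$.
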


The proof of \Cref{fact:km:outer_bracket:2} has roughly the following outline.

\begin{enumerate}
    \item Pick some ball $B_0$ which has probability mass at least $1/4$.
        It is not possible for
        an element of $\Hkm_f(\hat\rho;c,k)\cup \Hkm_f(\rho;c,k)$ to have all centers
        far from $B_0$, since otherwise
        the cost is larger than $c$.  (Concretely, ``far from'' means
        at least $\sqrt{4c/r_1}$ away; note that this term appears in the
        definitions of $B$ and $C$ in
        \Cref{fact:km:basic}.)
        Consequently, at least one center lies near to $B_0$;
        this reasoning was also the first step in the $k$-means consistency proof due to
        $k$-means \citet{pollard_km_cons}.
    \item
        It is now easy to dominate
        $P \in \Hkm_f(\hat\rho;c,k)\cup \Hkm_f(\rho;c,k)$
        far away from $B_0$.
        In particular, choose any $p_0 \in B_0 \cap P$, which was
        guaranteed to exist in the preceding point; since
        $\min_{p\in P} \sfB_f(x,p) \leq \sfB_f(x,p_0)$ holds for all $x$, it suffices
        to dominate $p_0$.  This domination proceeds exactly as discussed in the
        introduction; in fact, the factor 4 appeared there, and again appears in
        the $u$ here, for exactly the same reason.
        Once again, similar reasoning can be found in the proof by
        \citet{pollard_km_cons}.
    \item
        Satisfying the integral conditions over $\rho$ is easy:
        it suffices to make $B$ huge.
        To control the size of $B_0$, as well as the size of $B$, and moreover
        the deviations of the bracket over $B$, the moment tools
        from \Cref{sec:moments} are used.
\end{enumerate}
Now turning consideration back to the proof of \Cref{fact:km:basic}, the above
bracketing allows the removal of points and centers outside of a compact set (in
particular, the pair of compact sets $B$ and $C$, respectively).
On the remaining truncated data and set of centers,
any standard tool suffices; for mathematical convenience, and also
to fit well with the use of norms in the definition of moments as well as the
conditions on the convex function $f$ providing the divergence $\sfB_f$,
norm structure used throughout the other properties,
covering arguments are used here.  (For details, please see \Cref{sec:km:deferred}.)

\section{Interlude: Refined Estimates via Clamping}
\label{sec:km:clamp}

So far, rates have been given that guarantee uniform convergence when the distribution
has a few moments, and these rates improve with the availability of higher moments.
These moment conditions, however, do not necessarily reflect any natural cluster
structure in the source distribution.  The purpose of this section is to propose
and analyze another distributional property which is intended to capture
cluster structure.
To this end, consider the following definition.

\begin{definition}
    Real number $R$ and compact set $C$ are a \emph{clamp} for
    probability measure $\nu$ and family of centers $Z$ and cost $\phi_f$
    at scale $\epsilon>0$ if every $P \in Z$
    satisfies
    \[
        \left|
            \bbE_\nu(\phi_f(X;P))
            -
            \bbE_\nu\left(\min\left\{\phi_f(X;P\cap C)\ , \ R\right\}\right)
        \right|
        \leq \epsilon.
    \]
\end{definition}

Note that this definition is similar to the second part of the outer bracket
guarantee in \Cref{fact:km:outer_bracket:2}, and, predictably enough, will soon
lead to another deviation bound.

\begin{example}
    If the distribution has bounded support, then choosing a clamping value $R$ and
    clamping set $C$ respectively slightly larger than the support size and set is
    sufficient: as was reasoned in the construction of outer brackets,
    if no centers are close to the support, then the cost is bad.  Correspondingly,
    the clamped set of functions $Z$ should again be choices of centers
    whose cost is not too high.

    For a more interesting example, suppose $\rho$ is supported on $k$
    small balls of radius $R_1$, where the distance between their respective centers
    is some $R_2\gg R_1$.  Then by reasoning similar to the bounded case,
    all choices of
    centers achieving a good cost will place centers near to each ball, and thus
    the clamping value can be taken closer to $R_1$.
%
\end{example}

Of course, the above gave the existence of clamps under favorable conditions.
The following shows that outer brackets can be used to show the existence of clamps
in general. In fact, the proof is very short, and follows the scheme laid out in the
bounded example above: outer bracketing allows the restriction of consideration to a
bounded set, and some algebra from there gives a conservative upper bound for the clamping
value.

\begin{proposition}
    \label{fact:bracket_gives_clamp}
    Suppose the setting and definitions of \Cref{fact:km:outer_bracket:2},
    and additionally define
    \[
        R := 2((2M)^{2/p} + R_B^2).
    \]
    Then $(C,R)$ is a clamp for measure $\rho$ and center $\Hkm_f(\rho;c,k)$
    at scale $\epsilon$, and with probability at least $1-3\delta$ over a draw
    of size $m\geq \max\{p'/(M'e),9\ln(1/\delta)\}$, it is also a clamp for $\hat\rho$
    and centers $\Hkm_f(\hat\rho;c,k)$ at scale $\epsilon_{\hat\rho}$.
\end{proposition}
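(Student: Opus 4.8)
The plan is to obtain the clamp almost for free from \Cref{fact:km:outer_bracket:2}: the difficult part --- taming the mass far from $\bbE(X)$ --- is exactly what the outer bracket already handles, so the only genuinely new ingredient is an elementary estimate that fixes the clamping value $R$. Fix $\nu\in\{\rho,\hat\rho\}$ and $P\in\Hkm_f(\nu;c,k)$. As in the first step of the proof of \Cref{fact:km:outer_bracket:2}, beating cost $c$ forces $P$ to contain a center $p_0$ within $(2M)^{1/p}+\sqrt{4c/r_1}\le R_B\le R_C$ of $\bbE(X)$; in particular $p_0\in P\cap C$, so $P\cap C\ne\emptyset$. Now decompose each of $\bbE_\nu(\phi_f(X;P))$ and $\bbE_\nu(\min\{\phi_f(X;P\cap C),R\})$ into its restriction to $B$ and to $B^c$, and handle the two regions separately.

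On $B$ the clamp does nothing. First, the nearest center of $P$ to any $x\in B$ already lies in $C$: a center outside $C$ is at distance at least $R_C-R_B$ from $x$, hence has Bregman cost bounded below by a quadratic in $R_C-R_B$, whereas $p_0$ supplies cost bounded above by a quadratic in $\|x-p_0\|\le R_B+(2M)^{1/p}+\sqrt{4c/r_1}$; the definition of $R_C$ is chosen precisely so that the former dominates the latter (the same computation used for \Cref{fact:km:basic}, via \Cref{fact:bregman:easy_norms}). Hence $\phi_f(\cdot;P)=\phi_f(\cdot;P\cap C)$ on $B$. Second, that common value is at most $R$: $\phi_f(x;P\cap C)\le\sfB_f(x,p_0)$, and for $x\in B$ this is bounded by a constant multiple of $\|x-p_0\|^2\le 2\|x-\bbE(X)\|^2+2\|p_0-\bbE(X)\|^2$, which is of order $R_B^2+(2M)^{2/p}$; the value $R=2((2M)^{2/p}+R_B^2)$ in the statement is obtained by bounding this quantity generously. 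Therefore $\int_B\min\{\phi_f(x;P\cap C),R\}\,d\nu=\int_B\phi_f(x;P\cap C)\,d\nu=\int_B\phi_f(x;P)\,d\nu$.

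On $B^c$ both integrands are small, for the same reason. Bregman divergences are nonnegative, so $\phi_f(\cdot;P)$ and $\min\{\phi_f(\cdot;P\cap C),R\}$ are both $\ge 0$ there; and both are dominated by the upper bracket $u(x)=4r_2\|x-\bbE(X)\|^2$ --- for $\phi_f(\cdot;P)$ this is part~1 of \Cref{fact:km:outer_bracket:2}, and $\min\{\phi_f(\cdot;P\cap C),R\}\le\phi_f(\cdot;P\cap C)\le\sfB_f(\cdot,p_0)\le u(\cdot)$ on $B^c$ by the same estimate that produced $u$ (using $\|x-p_0\|<2\|x-\bbE(X)\|$ there). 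Since $\int_{B^c}u\,d\nu\le\epsilon_\nu$, with $\epsilon_\rho=\epsilon$ and $\epsilon_{\hat\rho}$ as in \Cref{fact:km:outer_bracket:2}, both $\int_{B^c}\phi_f(x;P)\,d\nu$ and $\int_{B^c}\min\{\phi_f(x;P\cap C),R\}\,d\nu$ lie in $[0,\epsilon_\nu]$. Subtracting the two decompositions, the $B$-parts cancel and the $B^c$-parts differ by at most $\epsilon_\nu$, which is exactly the asserted clamp scale. For $\nu=\rho$ and $P\in\Hkm_f(\rho;c,k)$ every step holds deterministically given the moment bound, so no probability is lost; for $\nu=\hat\rho$ the relevant events are those of \Cref{fact:km:outer_bracket:2} together with one moment-concentration estimate from \Cref{sec:moments}, giving probability at least $1-3\delta$.

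The only step with any real content is the quadratic estimate behind $R$ in the second paragraph: one must verify that folding $\sqrt{4c/r_1}$ and the regularity constants $r_1,r_2$ into $R_B$ (hence into $R_C$) leaves enough slack for $\sfB_f(x,p_0)\le R$ to hold uniformly over $x\in B$. Everything else is bookkeeping around \Cref{fact:km:outer_bracket:2}, which is why the proof is short.
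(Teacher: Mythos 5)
Your proof is correct and follows essentially the same route as the paper's: \Cref{fact:km:one_center:compact} supplies $p_0\in P\cap C_0\subseteq P\cap C$, the bound $\sfB_f(x,p_0)\leq R$ on $B$ shows the clamp is inactive there (so $\min\{\phi_f(\cdot;P\cap C),R\}=\phi_f(\cdot;P\cap C)=\phi_f(\cdot;P)$ on $B$), and the outer bracket of \Cref{fact:km:outer_bracket:2} disposes of $B^c$; the paper merely packages this as two one-sided inequalities rather than a $B$/$B^c$ split, which is a cosmetic difference. On the one point you flag: the $\sqrt{4c/r_1}$ term is harmless since $R_B\geq (2M)^{1/p}+\sqrt{4c/r_1}$ by definition, so $\|p_0-\bbE_\rho(X)\|^2\leq R_B^2$, but the factor $r_2$ genuinely does not cancel --- the chain gives $\sfB_f(x,p_0)\leq 2r_2(\|x-\bbE_\rho(X)\|^2+\|p_0-\bbE_\rho(X)\|^2)$, so $R$ should carry an $r_2$ (e.g.\ $R:=2r_2((2M)^{2/p}+R_B^2)$); this constant-level slip is present in the paper's own proof as well and does not affect the structure of the argument.
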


The general guarantee using clamps is as follows.  The proof is almost the same
as for \Cref{fact:km:basic}, but note that this statement is not used quite as
readily,
since it first requires the construction of clamps.

\begin{theorem}
    \label{fact:km:clamp:basic}
    Fix a norm $\|\cdot\|$.
    Let $(R,C)$ be a clamp for probability measure $\rho$ and empirical
    counterpart $\hat\rho$ over some center class $Z$ and cost $\phi_f$
    at respective scales $\epsilon_\rho$ and $\epsilon_{\hat\rho}$,
    where $f$ has corresponding convexity constants $r_1$ and $r_2$.
    Suppose $C$ is contained within a ball of radius $R_C$,
    let $\epsilon>0$ be given, define scale parameter
    \[
        \tau := \min\left\{
            \sqrt{\frac {\epsilon}{2r_2}}
            ,
            \frac {r_1\epsilon}{2r_2R_3}
        \right\},
    \]
    and let $\cN$ be a cover of $C$ by $\|\cdot\|$-balls of radius $\tau$
    (as per \cref{fact:cover:lp_balls}, if $\|\cdot\|$ is
    an $l_p$ norm, then $|\cN|\leq (1 + (2R_Cd)/ \tau)^d$ suffices).
    Then
    with probability at least $1-\delta$ over the draw of a sample of size
    $m \geq p'/(M'e)$,
    every set of centers $P\in Z$ satisfies
    \[
        \left|
        \int \phi_f(x;P)d\rho(x)
        - \int \phi_f(x;P)d\hat\rho(x)
        \right|
        \leq 2\epsilon +  \epsilon_\rho + \epsilon_{\hat\rho}
        +
        R^2
        \sqrt{
            \frac {1}{2m}
            \ln
            \left(
                \frac {2|\cN|^k}{\delta}
            \right)
        }.
    \]
\end{theorem}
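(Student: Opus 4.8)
The plan is to split the deviation into three layers --- a clamp reduction, a discretization of the centers on the clamped cost, and a Hoeffding bound on the resulting finite family --- exactly mirroring the proof of \Cref{fact:km:basic} with the outer bracket replaced by the clamp hypothesis.

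First I would pass to the clamped cost. Write $g_P(x) := \min\{\phi_f(x;P\cap C),\, R\}$, a function with values in $[0,R]$. Applying the defining inequality of the clamp $(R,C)$ once to $\rho$ and once to $\hat\rho$ gives $|\int\phi_f(x;P)\,d\rho(x) - \int g_P(x)\,d\rho(x)| \le \epsilon_\rho$ and $|\int\phi_f(x;P)\,d\hat\rho(x) - \int g_P(x)\,d\hat\rho(x)| \le \epsilon_{\hat\rho}$, simultaneously for every $P\in Z$. So it remains to control $|\int g_P\,d\rho - \int g_P\,d\hat\rho|$ uniformly over $Z$, where $g_P$ depends only on the (at most $k$) centers of $P$ lying in the ball $C$.

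Second I would discretize the centers. Given $P\in Z$, replace each center of $P\cap C$ by the center of an $\cN$-ball containing it, producing $P'\subseteq\cN$ with $|P'|\le k$; the claim is $\|g_P - g_{P'}\|_\infty \le \epsilon$. Since $t\mapsto\min\{t,R\}$ and $(a_i)_i\mapsto\min_i a_i$ are $1$-Lipschitz, it suffices to bound $|\sfB_f(x,p)-\sfB_f(x,p')|$ for $\|p-p'\|\le\tau$. The identity $\sfB_f(x,p)-\sfB_f(x,p') = \sfB_f(p',p) + \ip{\nabla f(p')-\nabla f(p)}{x-p'}$, combined with the smoothness of $f$ (which gives $\sfB_f(a,b)\le\tfrac{r_2}{2}\|a-b\|^2$ and $\|\nabla f(a)-\nabla f(b)\|_*\le r_2\|a-b\|$), bounds this by $\tfrac{r_2}{2}\tau^2 + r_2\tau\|x-p'\|$. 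This perturbation is harmless over all of $\R^d$ precisely because of the clamp: when $x$ lies beyond the radius $R_3$ (the one appearing in $\tau$), strong convexity, $\sfB_f(x,\cdot)\ge\tfrac{r_1}{2}\|x-\cdot\|^2$, already forces $\phi_f(x;P\cap C)\ge R$ and $\phi_f(x;P')\ge R$, so $g_P(x)=g_{P'}(x)=R$ and there is nothing to control; on the complementary ball $\|x-p'\|$ is at most a fixed multiple of $R_3$, and the stated $\tau=\min\{\sqrt{\epsilon/(2r_2)},\, r_1\epsilon/(2r_2R_3)\}$ then makes $\tfrac{r_2}{2}\tau^2 + r_2\tau\|x-p'\|\le\epsilon$ by routine algebra. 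Used under both $\rho$ and $\hat\rho$, this costs a further $\epsilon+\epsilon$.

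Third, the family $\{g_{P'} : P'\subseteq\cN,\ |P'|\le k\}$ has at most $|\cN|^k$ members, each a bounded function, so Hoeffding's inequality and a union bound yield, with probability at least $1-\delta$, the bound $R^2\sqrt{\tfrac{1}{2m}\ln\tfrac{2|\cN|^k}{\delta}}$ on $|\int g_{P'}\,d\rho - \int g_{P'}\,d\hat\rho|$ for every such $P'$ at once. Chaining the five inequalities --- two clamp reductions ($\epsilon_\rho+\epsilon_{\hat\rho}$), two discretization estimates ($2\epsilon$), and one Hoeffding term --- gives the theorem; the hypothesis $m\ge p'/(M'e)$ is inherited verbatim from the clamp construction in \Cref{fact:bracket_gives_clamp} and plays no further role. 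The step I expect to be the main obstacle is the discretization: making rigorous that the \emph{a priori} unbounded factor $r_2\tau\|x-p'\|$ is killed off outside a ball of radius $\sim R_3$, and pinning down $R_3$ in terms of $R$, $R_C$, $r_1$, $r_2$ so that it is consistent with the $\tau$ of the statement. The clamp reduction and the finite-class concentration are then routine.
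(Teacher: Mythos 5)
Your proposal is correct and follows essentially the same route as the paper: the clamp hypothesis replaces the outer bracket to reduce to the bounded function $\min\{\phi_f(\cdot;P\cap C),R\}$, a clamped covering lemma (the paper's \Cref{fact:bregman:cover:clamp}) handles the discretization by exactly the dichotomy you describe (where the clamp is active both values equal $R$; where it is not, strong convexity bounds the distance from $x$ to the nearest cover center so the perturbation $r_2\tau^2 + r_2\tau\|x-Q(x)\|$ stays below $\epsilon$), and Hoeffding plus a union bound over $\cN^k$ finishes the chain of five inequalities. The point you flag as the main obstacle is precisely how the paper resolves it, with $R_3$ playing the role of the clamping value $R$.
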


Before adjourning this section, note that clamps and outer brackets disagree
on the treatment of the outer regions:
the former replaces the cost there with the fixed value $R$,
whereas the latter uses the value 0.
On the technical side, this
is necessitated by the covering argument used to produce the final theorem: if the clamping
operation instead truncated beyond a ball of radius $R$ centered at each $p\in P$, then
the deviations would be wild as these balls moved and suddenly switched the value at a
point from 0 to something large.  This is not a problem with
outer bracketing,
since the same points (namely $B^c$) are ignored by every set of centers.

\section{Mixtures of Gaussians}
\label{sec:mog}

Before turning to the deviation bound,
it is a good place to discuss the condition
$\sigma_1 I \preceq \varSigma \preceq \sigma_2 I$, which must be met by every covariance
matrix of every constituent Gaussian in a mixture.

The lower bound $\sigma_1 I \preceq \varSigma$, as discussed previously, is fairly common
in practice, arising either via a Bayesian prior, or by implementing
EM with an explicit condition that covariance updates are discarded when the eigenvalues
fall below some threshold.
In the analysis here, this lower bound is used to rule out two kinds of bad behavior.
\begin{enumerate}
    \item
        Given a budget of at least 2 Gaussians, and a sample of at least 2 distinct points,
        arbitrarily large likelihood may be achieved by devoting one Gaussian to one point,
        and shrinking its covariance.  This issue destroys convergence properties of
        maximum likelihood, since the likelihood score
        may be arbitrarily large over every sample,
        but is finite for well-behaved distributions.
        The condition $\sigma_1 I \preceq \varSigma$ rules this out.
    \item
        Another phenomenon is a ``flat'' Gaussian, meaning a Gaussian whose density
        is high along a lower dimensional manifold, but small elsewhere.
        Concretely,
        consider a Gaussian over $\R^2$ with covariance
        $\varSigma = \textup{diag}(\sigma, \sigma^{-1})$;
        as $\sigma$ decreases,
        the Gaussian has large density on a line, but low density elsewhere.
        This phenomenon is distinct from the preceding in that it does not
        produce arbitrarily large likelihood scores over finite samples.
        The condition $\sigma_1 I \preceq \varSigma$ rules this situation out as well.
    \end{enumerate}
    In both the hard and soft clustering analyses here, a crucial early step
    allows the assertion that good scores in some region mean the relevant parameter
    is nearby.  For the case of Gaussians, the condition $\sigma_1 I \preceq \varSigma$
    makes this problem manageable, but there is still the possibility that some far away,
    fairly uniform Gaussian has reasonable density.  This case is ruled out
    here via $\sigma_2 I \succeq \varSigma$.

\begin{theorem}
    \label{fact:mog:basic}
    Let probability measure $\rho$ be given with order-$p$ moment bound
    $M$ according to norm $\|\cdot\|_2$ where $p\geq 8$ is
    a positive multiple of 4,
    covariance bounds $0 < \sigma_1 \leq \sigma_2$ with $\sigma_1 \leq 1$ for simplicity,
    and real $c\leq 1/2$ be given.
    Then with probability at least $1-5\delta$
    over the draw of a sample of size
    $
        m \geq \max\left\{
            (p / (2^{p/4+2}e))^2,
            8\ln(1/\delta),
            d^2\ln(\pi\sigma_2)^2\ln(1/\delta)
        \right\},
        $
    every set of Gaussian mixture
    parameters $(\alpha,\Theta) \in
    \Smog(\hat \rho ; c,k, \sigma_1,\sigma_2)
    \cup \Smog(\rho ; c,k, \sigma_1,\sigma_2)$
    satisfies
    \begin{align*}
        &\left|
        \int \phig(x;(\alpha,\Theta))d\rho(x)
        - \int \phig(x;(\alpha,\Theta))d\hat\rho(x)
        \right|
        \\
        &\qquad\qquad=
        \cO\left(m^{-1/2+3/p}
        \left(
            1+ \sqrt{\ln(m) + \ln(1/\delta)}
            + (1/\delta)^{4/p}
    \right)\right),
    \end{align*}
       where the $\cO(\cdot)$ drops numerical constants, polynomial terms depending
       on $c$, $M$, $d$, and $k$, $\sigma_2/\sigma_1$, and $\ln(\sigma_2/\sigma_1)$,
           but in particular has no sample-dependent quantities.
\end{theorem}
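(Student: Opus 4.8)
The plan is to mirror the outer-bracketing argument used for $k$-means (\Cref{fact:km:basic}, \Cref{fact:km:outer_bracket:2}), with the divergence $\phi_f$ replaced by the log-mixture cost $\phig$ and with the two spectrum bounds $\sigma_1 I\preceq\varSigma\preceq\sigma_2 I$ taking over the roles played by the convexity constants $r_1,r_2$. First I would build an outer bracket for the family $\{\phig(\cdot;(\alpha,\Theta)) : (\alpha,\Theta)\in\Smog(\rho;\ldots)\cup\Smog(\hat\rho;\ldots)\}$. The upper bracketing function is immediate: since $\varSigma\succeq\sigma_1 I$ forces $p_\theta(x)\leq(2\pi\sigma_1)^{-d/2}$, one has $\phig(x;\cdot)\leq \tfrac d2\ln\tfrac1{2\pi\sigma_1}+\ln k=:u$ pointwise, a constant, so $|\int_{B_u^c}u\,d\nu| = |u|\,\nu(B_u^c)$ is made $\leq\epsilon$ simply by taking $B_u$ a ball around $\bbE_\rho X$ whose radius is chosen by the $p$-th moment via Markov's inequality (the moment tools of \Cref{sec:moments}).

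The lower bracketing function is the crux, and the reason the $\varSigma\preceq\sigma_2 I$ hypothesis appears. I would first prove the analogue of the ``some center lies near the bulk'' step: fix, via Chebyshev, a ball $B_0$ around $\bbE_\rho X$ of $\rho$-mass at least $\tfrac12$ (and, on a good event, $\hat\rho$-mass at least $\tfrac14$). If every mixture component with weight $\alpha_i\geq\gamma$ had mean $\mu_i$ at distance $\geq D$ from $B_0$, then for $x\in B_0$ one gets $\sum_i\alpha_i p_{\theta_i}(x)\leq \gamma k(2\pi\sigma_1)^{-d/2}+(2\pi\sigma_1)^{-d/2}e^{-D^2/(2\sigma_2)}$ --- here $\varSigma\preceq\sigma_2 I$ is exactly what forces the far Gaussians to decay, precluding a ``flat, far, nearly uniform'' component from explaining the bulk mass --- so $\phig$ is very negative on $B_0$ and $\bbE_\nu\phig$ drops below the reference cost $c$, a contradiction for a suitable choice of $\gamma$ and $D$ depending only on $d,k,\sigma_1,\sigma_2,c,M$. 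Hence some component $(\alpha_{i_0},\mu_{i_0},\varSigma_{i_0})$ has $\alpha_{i_0}\geq\gamma$ and $\mu_{i_0}$ within a fixed radius of $\bbE_\rho X$, and then
\[
\phig(x;(\alpha,\Theta))\geq\ln(\alpha_{i_0}p_{\theta_{i_0}}(x))\geq\ln\gamma-\tfrac d2\ln(2\pi\sigma_2)-\tfrac{\|x-\mu_{i_0}\|^2}{2\sigma_1}\geq -a\|x-\bbE_\rho X\|^2-b
\]
for constants $a,b$ independent of the parameters --- a quadratic lower envelope $\ell$, just as in the $k$-means bracket, so $|\int_{B_\ell^c}\ell\,d\nu|$ is controlled by the $p$-th and $(p-2)$-th moments. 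Transferring both integral conditions from $\rho$ to $\hat\rho$ at a slightly enlarged scale $\epsilon_{\hat\rho}$ proceeds exactly as in \Cref{fact:km:outer_bracket:2}.

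With the bracket in hand, \cref{eq:outer_bracket} lets me discard $B^c$ for a ball $B$ of radius $R_B$, and --- because any component whose mean lies outside a larger ball $C$ of radius $R_C\sim\sqrt{\sigma_2/\sigma_1}\,R_B+\sqrt{\sigma_2\ln(1/\epsilon)}$ contributes a density on $B$ that is $e^{-\Omega((R_C-R_B)^2/\sigma_2)}$-negligible next to the near-bulk component $\theta_{i_0}$ (which by construction lies in $C$, so the pruned mixture is never empty) --- also discard all such components, changing $\phig$ on $B$ by at most $\epsilon$ even though the surviving weights un-normalize. It then remains to bound $|\int_B\phig(\cdot;(\alpha,\Theta\cap C))d\rho-\int_B\phig(\cdot;(\alpha,\Theta\cap C))d\hat\rho|$ uniformly, and here a covering argument applies over the three-part parameter set: the simplex for $\alpha$, the ball $C$ for the means, and the matrix interval $\{\sigma_1 I\preceq\varSigma\preceq\sigma_2 I\}$ for the covariances. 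On $B$ the cost has bounded range ($\sim R_C^2/\sigma_1+d\ln(\sigma_2/\sigma_1)$) and is Lipschitz in the parameters --- differentiate the log-sum-exp of Gaussians, use $\varSigma\succeq\sigma_1 I$ to keep $\ln$ Lipschitz and $\varSigma\preceq\sigma_2 I$, $x\in B$, $\mu\in C$ to bound the sensitivity of the quadratic forms --- so Hoeffding plus a union bound over a $\tau$-cover finishes, with $\ln|\cN|=\cO(d^2\ln(R_B^2R_C/(\epsilon\sigma_1^2))+k\ln(k/\epsilon))$; the $d^2$ (covering covariance matrices, whose entries must be resolved to $\sim\epsilon\sigma_1^2/R_B^2$ because far points magnify covariance perturbations) is what produces the $d^2\ln(\pi\sigma_2)^2\ln(1/\delta)$ term in the sample-size hypothesis, and two new failure events beyond the three of the $k$-means bracket (the empirical mass of $B_0$, the final union bound) give the $1-5\delta$. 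Assembling the three scales and optimizing the bracket resolution ($\epsilon\sim m^{-1/2+3/p}$) and the moment-tradeoff integer $p'\sim p/4$ (forcing $p'\geq 2$, i.e.\ $p\geq 8$) yields the stated rate, with the $\sigma_2/\sigma_1$, $\ln(\sigma_2/\sigma_1)$, $d$, $k$, $c$, $M$ dependence absorbed into $\cO(\cdot)$.

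The main obstacle is the lower bracket, specifically the ``some heavy component is near the bulk'' lemma: unlike $k$-means, where a far center simply makes $\sfB_f$ large, a far Gaussian can still carry nonnegligible density unless its covariance is controlled, so the argument genuinely needs $\varSigma\preceq\sigma_2 I$, and one must simultaneously extract the weight lower bound $\alpha_{i_0}\geq\gamma$, since otherwise $\ln\alpha_{i_0}$ in the envelope is unbounded. A secondary nuisance is making the ``drop far components'' step quantitative --- verifying that deleting a component and thereby un-normalizing the mixture perturbs $\phig$ by at most $\epsilon$ uniformly on $B$ --- and tracking the covariance-cover resolution carefully enough to land the honest $d^2$ and $\ln(\sigma_2/\sigma_1)$ factors rather than something coarser.
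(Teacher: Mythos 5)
Your proposal follows essentially the same route as the paper: a constant upper bracket from $\varSigma\succeq\sigma_1 I$, a lower bracket obtained by showing some non-negligible weight must sit near the bulk (the step where $\varSigma\preceq\sigma_2 I$ is genuinely needed) and then dominating via a downward quadratic, pruning of far components with explicit control of the resulting un-normalization, and a three-part cover whose covariance piece contributes the $d^2$. The only deviations are bookkeeping (the paper sets $\epsilon=m^{-1/2+1/p}$ and lets the cover/range terms degrade the rate to $m^{-1/2+3/p}$, works with the total restricted weight rather than a single heavy component, and the $d^2\ln(\cdot)^2\ln(1/\delta)$ sample-size term arises from the Hoeffding control of the bracket's empirical tail mass rather than from the cover), none of which affects correctness.
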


The proof follows the scheme of the hard clustering analysis.
One distinction is that the outer bracket now uses both components; the upper component
is the log of the largest possible density --- indeed, it is $\ln((2\pi\sigma_1)^{-d/2})$
--- whereas the lower component is a function mimicking the log density
of the steepest possible Gaussian --- concretely, the lower bracket's definition
contains the expression
$\ln((2\pi\sigma_2)^{-d/2}) - 2\|x-\bbE_\rho(X)\|_2^2/\sigma_1$, which lacks the
normalization of a proper Gaussian, highlighting the fact that bracketing elements
need not be elements of the class.
Superficially, a second distinction with the hard clustering
case is that far away Gaussians
can not be entirely ignored on local regions; the influence is limited, however,
and the analysis proceeds similarly in each case.



\subsubsection*{Acknowledgments}
The authors thank the NSF for supporting this work under grant IIS-1162581.

\clearpage


\addcontentsline{toc}{section}{References}
\bibliographystyle{unsrtnat} 
\bibliography{km}

\clearpage

\appendix

\section{Moment Bounds}
\label{sec:moments}

This section provides the basic probability controls resulting from moments.
The material deals with the following slight generalization of the bounded moment
definition from \Cref{sec:setup}.

\begin{definition}
    A function $\tau:\cX\to \R^d$ has \emph{order-$p$ moment bound $M$ for
    probability measure $\rho$ with respect to norm $\|\cdot\|$}
    if $\bbE_\rho(\|\tau(X)\|^l) \leq M$ for all $1\leq l\leq p$.
    (For convenience, measure $\rho$ and norm $\|\cdot\|$ will be often be implicit.)
\end{definition}

To connect this to the earlier definition, simply choose the map
$\tau(x) := x - \bbE_\rho(X)$.
As was the case in \Cref{sec:setup}, this definition requires a uniform bound
across all $l^{\textup{th}}$ moments for $1\leq l \leq p$.  Of course,
working with a probability measure implies these moments are all finite
when just the $p^{\textup{th}}$ moment is finite.  The significance of working
with a bound across all moments will be discussed again
in the context of \Cref{fact:chebyshev_tao} below.

The first result controls the measures of balls thanks to moments.
This result is only stated for the source distribution $\rho$,
but Hoeffding's inequality suffices to control $\hat\rho$.

\begin{lemma}
    \label{fact:moments:ball}
    Suppose $\tau$ has order-$p$ moment bound $M$.  Then for any $\epsilon > 0$,
    \[
        \Pr\left[
            \|\tau(X)\| \leq (M/\epsilon)^{1/p}
        \right]
        \geq 1 - \epsilon.
    \]
\end{lemma}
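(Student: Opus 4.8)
The plan is to apply Markov's inequality to the single random variable $\|\tau(X)\|^p$, using the hypothesis only through the bound $\bbE_\rho(\|\tau(X)\|^p) \leq M$ (the remaining moments $1\leq l < p$ are not needed for this particular statement, though they matter elsewhere). First I would fix $\epsilon > 0$ and set the threshold $t := (M/\epsilon)^{1/p}$, which is exactly the quantity appearing in the claim; note $t > 0$ since $M \geq 0$ and, in the only nontrivial case, $M > 0$ (if $M = 0$ then $\|\tau(X)\| = 0$ almost surely and the claim is immediate).

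Next I would write
\[
    \Pr\left[\|\tau(X)\| > t\right]
    = \Pr\left[\|\tau(X)\|^p > t^p\right]
    \leq \frac{\bbE_\rho(\|\tau(X)\|^p)}{t^p}
    \leq \frac{M}{t^p}
    = \frac{M}{M/\epsilon}
    = \epsilon,
\]
where the first equality uses that $s \mapsto s^p$ is strictly increasing on $[0,\infty)$, the first inequality is Markov's inequality applied to the nonnegative random variable $\|\tau(X)\|^p$, and the second inequality is the order-$p$ moment bound with $l = p$. Taking complements gives $\Pr[\|\tau(X)\| \leq t] \geq 1 - \epsilon$, which is the assertion.

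There is essentially no obstacle here: the only point requiring a word of care is the degenerate case $M = 0$ (handled above) and the observation that Markov's inequality applies even when $\bbE_\rho(\|\tau(X)\|^p)$ is replaced by its upper bound $M$. The lemma is deliberately stated in this simple form so that it can be invoked repeatedly — with various choices of $\tau$, $\epsilon$, and effective exponent (as in the $(M/\epsilon)^{1/(p-2i)}$ terms of \Cref{fact:km:basic}) — in the later bracketing arguments.
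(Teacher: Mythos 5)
Your proposal is correct and is essentially the paper's own argument: the paper also applies the Markov/Chebyshev inequality to $\|\tau(X)\|^p$ with the threshold $R := (M/\epsilon)^{1/p}$ and handles the degenerate case $M=0$ separately. No further comment is needed.
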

\begin{proof}
    If $M=0$, the result is immediate.  Otherwise, when $M>0$,
    for any $R > 0$, by Chebyshev's inequality,
    \[
        \Pr\left[
            \|\tau(X)\| < R
        \right]
        =
        1-
        \Pr\left[
            \|\tau(X)\| \geq R
        \right]
        \geq
        1-
        \frac {\bbE \|\tau(X)\|^p}{R^p}
        \geq
        1-
        \frac {M}{R^p};
    \]
    the result follows by choosing $R := (M/\epsilon)^{1/p}$.
\end{proof}

The following fact will be the basic tool for controlling empirical averages via moments.
Both the statement and proof are close to one by \citet[Equation 7]{tao_conc_notes},
which rather than bounded moments uses boundedness (almost surely).
As discussed previously,
the term $1/\delta^{1/l}$ overtakes $\ln(1/\delta)$
when
$l = \ln(1/\delta) / \ln(\ln(1/\delta))$.

For simplicity, this result is stated in terms of univariate random variables; to connect
with the earlier development, the random variable $X$ will be substituted with
the map
$x\mapsto \|\tau(x)\|$.


\begin{lemma}(Cf. \citet[Equation 7]{tao_conc_notes}.)
    \label{fact:chebyshev_tao}
    Let $m$ i.i.d. copies $\{X_i\}_{i=1}^m$ of a random variable $X$,
    even integer $p\geq 2$,
    real $M> 0$ with $\bbE(|X-\bbE(X)|^l) \leq M$ for $2 \leq l \leq p$,
    and $\epsilon > 0$ be given.
    If $m \geq p/(Me)$, then
    \[
        \Pr\left(\left|
            \frac 1 n \sum_{i} X_i - \bbE(X)
            \right| \geq \epsilon
        \right) \leq
        \frac {2}{(\epsilon\sqrt{m})^p}
        \left(\frac{Mpe}{2}\right)^{p/2}.
    \]
    In other words, with probability at least $1-\delta$ over a draw of size
    $m\geq p/(Me)$,
    \[
        \left|
        \frac 1 n \sum_{i} X_i - \bbE(X)
        \right|
        \leq
        \sqrt{\frac{Mpe}{2m}}\left(\frac 2 \delta \right)^{1/p}.
    \]
\end{lemma}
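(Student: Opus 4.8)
The plan is to use the $p$-th moment method, essentially transcribing the argument behind \citet[Equation 7]{tao_conc_notes} but replacing its almost-sure boundedness hypothesis with the moment bound $M$. First I would center the variables: set $Y_i := X_i - \bbE(X)$, so that $\bbE(Y_i) = 0$ and $\bbE(|Y_i|^l) \le M$ for $2 \le l \le p$, and the event in question becomes $\{|\sum_{i=1}^m Y_i| \ge m\epsilon\}$. Since $p$ is even, $(\sum_i Y_i)^p \ge 0$, so Markov's inequality gives $\Pr(|\sum_i Y_i| \ge m\epsilon) \le (m\epsilon)^{-p}\,\bbE\big[(\sum_i Y_i)^p\big]$, and the entire problem reduces to showing $\bbE\big[(\sum_i Y_i)^p\big] \le 2(mMpe/2)^{p/2}$, since feeding that into the Markov bound yields exactly the first displayed inequality.

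Next I would expand $\bbE\big[(\sum_i Y_i)^p\big] = \sum_{(i_1,\dots,i_p)\in[m]^p} \bbE(Y_{i_1}\cdots Y_{i_p})$ and discard the vanishing terms: by independence across distinct indices together with $\bbE(Y_i) = 0$, any tuple in which some index occurs exactly once contributes $0$. Every surviving tuple is then described by a partition of the $p$ coordinate positions into $j$ blocks, each of size at least $2$ (so $j \le p/2$), with block sizes $l_1,\dots,l_j \ge 2$ summing to $p$; its absolute contribution is at most $\prod_{a=1}^j \bbE(|Y|^{l_a}) \le M^j$, the key point being that each $l_a$ lies in $[2,p]$ so the moment bound applies to every block. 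Counting, the number of such tuples with exactly $j$ distinct indices is at most $m^j$ times a Stirling-type count $T_j$ of set partitions of $[p]$ into $j$ blocks of size $\ge 2$, giving $\bbE\big[(\sum_i Y_i)^p\big] \le \sum_{j=1}^{p/2} T_j\,(mM)^j$.

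Then I would argue this sum is dominated (up to a factor $2$) by its top term $j = p/2$: there $T_{p/2}$ is exactly the number of perfect matchings $(p-1)!! = p!/(2^{p/2}(p/2)!)$, and the hypothesis $m \ge p/(Me)$, i.e.\ $mM \ge p/e$, is precisely what is needed to suppress the smaller-$j$ terms (heuristically, the generating identity $\sum_j T_j(mM)^j = p!\,[x^p]\,e^{mM(e^x - 1 - x)}$ localizes near a small $x \sim \sqrt{p/(mM)}$ where $e^x - 1 - x \approx x^2/2$, collapsing the sum to its top term). A Stirling estimate then converts $(p-1)!!\,(mM)^{p/2}$ into the clean form $(mMpe/2)^{p/2}$, so that $\bbE\big[(\sum_i Y_i)^p\big] \le 2(mMpe/2)^{p/2}$; substituting into Markov gives the first displayed bound, and solving $\tfrac{2}{(\epsilon\sqrt m)^p}(Mpe/2)^{p/2} = \delta$ for $\epsilon$ produces the second, $\delta$-form statement.

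The main obstacle is the combinatorial bookkeeping in the middle two steps: obtaining clean enough bounds on the partition counts $T_j$, and verifying against the exact threshold $m \ge p/(Me)$ that the $j = p/2$ term really does absorb all the others into a single factor of $2$, all while keeping constants tight enough to land on $(Mpe/2)^{p/2}$ rather than a looser expression. Everything else --- centering, the parity-enabled Markov step, killing the singleton-index terms, and the final algebra passing to the $\delta$-form --- is routine and follows \citet{tao_conc_notes} closely.
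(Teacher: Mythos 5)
Your overall route is the same as the paper's: center, apply Markov to the $p$-th power, expand $\bbE\bigl[(\sum_i Y_i)^p\bigr]$, kill tuples containing a singleton index, group the survivors by the number $j\leq p/2$ of distinct indices, and use $mM\geq p/e$ to control the resulting sum before converting to the $\delta$-form. The target intermediate inequality $\bbE\bigl[(\sum_i Y_i)^p\bigr]\leq 2(mMpe/2)^{p/2}$ is also exactly what the paper establishes.

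However, the specific mechanism you propose for the key summation step is not correct as stated, and it is precisely the step you flagged for verification. You claim $\sum_{j=1}^{p/2} T_j (mM)^j$ is within a factor $2$ of its exact top term $T_{p/2}(mM)^{p/2}=(p-1)!!\,(mM)^{p/2}$ under $mM\geq p/e$. It is not: counting partitions with block profile $(4,2,\dots,2)$ and $(3,3,2,\dots,2)$ gives $T_{p/2-1}/T_{p/2}=\tfrac{p(p-2)}{24}+\tfrac{p(p-2)(p-4)}{72}=\Theta(p^3)$, so the ratio of the $j=p/2-1$ term to the $j=p/2$ term is $\Theta(p^3/(mM))=\Theta(p^2)$ at the threshold $mM=p/e$, i.e.\ the lower terms dominate the exact matching term rather than being absorbed by it. The resolution (and what the paper actually does) is to first \emph{overcount} each level, e.g.\ $T_j\leq S(p,j)\leq j^p/j!$ so that $T_j\,m^j\leq (me)^j j^{p-j}\leq (me)^j (p/2)^{p-j}$; after this inflation the levels form a geometric series whose ratio is $p/(2mMe)\leq 1/2$, and whose top term is already the loose quantity $(mMpe/2)^{p/2}$ --- exponentially larger than $(p-1)!!\,(mM)^{p/2}$, since $(pe/2)^{p/2}/(p/e)^{p/2}=(e^2/2)^{p/2}$ --- and it is this slack, not the exact matching count, that swallows the small-$j$ contributions within a factor $2$. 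So your plan lands on the right bound only after replacing the ``top-term $(p-1)!!$ plus Stirling'' argument with this coarser per-level bound; with that substitution the proof goes through and coincides with the paper's.
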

\begin{proof}
    Without loss of generality, suppose $\bbE(X_1) = 0$ (i.e., given $Y_1$ with
    $\bbE(Y_1) \neq 0$, work with $X_i := Y_i - \bbE(Y_1)$).
    By Chebyshev's inequality,
    \begin{align}
        \Pr\left(\left|
            \frac 1 m \sum_{i} X_i
            \right| \geq \epsilon
        \right)
        &\leq \frac {\bbE\left|\frac 1 m \sum_i X_i\right|^p}{\epsilon^p}
        = \frac {\bbE\left|\sum_i X_i\right|^p}{(m\epsilon)^p}.
        \label{eq:cheby_many_2:0}
    \end{align}
    Recalling $p$ is even,
    consider the term
    \[
        \bbE\left|\sum_i X_i\right|^p
        = \bbE\left(\sum_i X_i\right)^p
        = \sum_{i_1,i_2,\ldots,i_p \in [m]} \bbE\left(
            \prod_{j=1}^p X_{i_j}
        \right).
    \]
    If some $i_j$ is equal to none of the others, then, by independence,
    a term $\bbE(X_{i_j}) = 0$ is introduced and the product vanishes; thus the
    product is nonzero when each $i_j$ has some copy $i_j = i_{j'}$, and thus there are
    at most $p/2$ distinct values amongst $\{i_j\}_{j=1}^p$.
    Each distinct value contributes a term $\bbE(X^l)\leq \bbE(|X|^l) \leq M$
    for some $2 \leq l \leq p$,
    and thus
    \begin{equation}
        \bbE\left|\sum_i X_i\right|^p
        \leq \sum_{r = 1}^{p/2} M^r N_r,
        \label{eq:cheby_many_2:1}
    \end{equation}
    where $N_r$ is the number of ways to choose a multiset of size $p$ from $[m]$,
    subject to the constraint that each number appears at least twice, and at most
    $r$ distinct numbers appear.  One way to over-count this is to first choose a
    subset of size $r$ from $m$, and then draw from it (with repetition) $p$ times:
    \[
        N_r
        \leq \binom{m}{r} r^p
        \leq \frac {m^rr^p}{r!}
        \leq \frac {m^rr^p}{(r/e)^r}
        = (me)^rr^{p-r}.
    \]
    Plugging this into \cref{eq:cheby_many_2:1}, and thereafter re-indexing with
    $r := p/2 - j$,
    \begin{align*}
        \bbE\left|\sum_i X_i\right|^p
        &\leq \sum_{r = 1}^{p/2} (Mme)^r r^{p-r}
        \leq \sum_{r = 1}^{p/2} (Mme)^r (p/2)^{p-r}
        \\
        &\leq \sum_{j = 0}^{p/2} (Mme)^{p/2-j} (p/2)^{p/2+j}
        \leq \left(\frac {Mmpe}{2}\right)^{p/2}
        \sum_{j=0}^{p/2} \left(\frac {p}{2Mme}\right)^{j}.
    \end{align*}
    Since $p \leq Mme$,
    \[
        \bbE\left|\sum_i X_i\right|^p
        \leq 2 \left(\frac {Mmpe}{2}\right)^{p/2},
    \]
    and the result follows by plugging this into \cref{eq:cheby_many_2:0}.
\end{proof}

Thanks to Chebyshev's inequality, proving \Cref{fact:chebyshev_tao}
boils down to controlling $\bbE|\sum_i X_i|^p$,
which here relied on a combinatorial scheme by \citet[Equation 7]{tao_conc_notes}.
There is, however, another approach to controlling this quantity, namely Rosenthal
inequalities, which write this $p^{\textup{th}}$ moment of the sum
in terms of the $2^{\textup{nd}}$
and $p^{\textup{th}}$ moments of individual random variables
(general material on these bounds can be found in the book of
\citet[Section 15.4]{blm_conc},
however the specific form provided here is most easily presented by
\citet{pinelis_utev_rosenthal}).
While Rosenthal inequalities may seem a more elegant approach, they involve different
constants, and thus the approach and bound here are followed instead to suggest further
work on how to best control $\bbE|\sum_i X_i|^p$.

Returning to task,
as was stated in the introduction, the dominated convergence theorem provides
that $\int_{B_i} \|x\|_2^2d\rho(x) \to \int \|x\|_2^2d\rho(x)$ (assuming
integrability of $x\mapsto\|x\|_2^2$), where the sequence
of balls $\{B_i\}_{i=1}^\infty$ grow in radius without bound; moment bounds
allow the rate of this process to be quantified as follows.

\begin{lemma}
    \label{fact:moment_clipping}
    Suppose $\tau$ has order-$p$ moment bound $M$, and let $0 < k < p$ be given.
    Then for any $\epsilon > 0$,
    the ball
    \[
        B := \left\{ x \in \cX : \|\tau(X)\| \leq
            (M/\epsilon)^{1/(p-k)}
        \right\}
    \]
    satisfies
    \[
        \int_{B^c} \|\tau(x)\|^kd\rho(x) \leq \epsilon.
    \]
\end{lemma}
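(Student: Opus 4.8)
The plan is to mimic the proof of \Cref{fact:moments:ball}: reduce the tail integral to a full $p$\textsuperscript{th} moment by exploiting that $\|\tau(x)\|$ is large on $B^c$, then invoke the moment bound. Write $R := (M/\epsilon)^{1/(p-k)}$, so that $B = \{x : \|\tau(x)\| \leq R\}$ and $B^c = \{x : \|\tau(x)\| > R\}$.

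First I would dispose of the degenerate case $M = 0$, where $\|\tau(X)\| = 0$ almost surely, so $\rho(B^c) = 0$ and the claim is immediate (and $R$ is formally irrelevant). Assuming $M > 0$, the key step is the pointwise inequality on $B^c$: for $x \in B^c$ we have $\|\tau(x)\| > R > 0$, hence
\[
    \|\tau(x)\|^k
    = \frac{\|\tau(x)\|^p}{\|\tau(x)\|^{p-k}}
    \leq \frac{\|\tau(x)\|^p}{R^{p-k}},
\]
using $0 < k < p$ so that the exponent $p - k$ is positive and $\|\tau(x)\|^{p-k} > R^{p-k}$. Integrating this bound over $B^c$ and then extending the integral to all of $\cX$,
\[
    \int_{B^c} \|\tau(x)\|^k d\rho(x)
    \leq \frac{1}{R^{p-k}} \int_{B^c} \|\tau(x)\|^p d\rho(x)
    \leq \frac{1}{R^{p-k}} \bbE_\rho \|\tau(X)\|^p
    \leq \frac{M}{R^{p-k}}.
\]
The last inequality is exactly the order-$p$ moment bound applied at $l = p$ (the definition requires $\bbE_\rho\|\tau(X)\|^l \leq M$ for all $1 \leq l \leq p$). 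Finally, substituting $R^{p-k} = M/\epsilon$ gives $\int_{B^c}\|\tau(x)\|^k d\rho(x) \leq \epsilon$, as claimed.

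There is no real obstacle here: the only points requiring a moment of care are (i) confirming $p - k > 0$ so the division by $\|\tau(x)\|^{p-k}$ goes the right way, and (ii) the edge case $M = 0$; the moment bound is invoked only at the integer-independent value $l = p$, so no evenness or integrality of $p$ is needed (unlike \Cref{fact:chebyshev_tao}).
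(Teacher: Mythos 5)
Your proof is correct, but it takes a different route from the paper's. The paper first invokes \Cref{fact:moments:ball} with the rescaled accuracy $\epsilon' := (\epsilon^p/M^k)^{1/(p-k)}$ to bound $\rho(B^c) \leq \epsilon'$, and then applies H\"older's inequality with conjugate exponents $p/k$ and $p/(p-k)$ to split $\int_{B^c}\|\tau(x)\|^k d\rho(x)$ into $(\bbE\|\tau(X)\|^p)^{k/p}\,\rho(B^c)^{(p-k)/p} \leq M^{k/p}(\epsilon')^{(p-k)/p} = \epsilon$. You instead use the pointwise domination $\|\tau(x)\|^k \leq \|\tau(x)\|^p/R^{p-k}$ on $B^c$ and integrate, which is a more elementary one-line Chebyshev-style truncation that bypasses both H\"older and \Cref{fact:moments:ball}; it also makes the origin of the exponent $1/(p-k)$ in the radius transparent. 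Both arguments use only the order-$p$ moment (at $l=p$), both handle $M=0$ trivially, and both yield exactly the stated bound, so nothing is lost by your shortcut here; the H\"older decomposition would only pay off if one wanted to trade the $p$\textsuperscript{th} moment in the first factor for a lower-order moment, which this lemma does not need.
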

\begin{proof}
    Let the $B$ be given as specified;
    an application of \Cref{fact:moments:ball} with
    $\epsilon':= (\epsilon^p/M^k)^{1/(p-k)}$ yields
    \[
        \int \1[x \in B^c]d\rho(x)
        = \Pr[ \|\tau(x)\| > (M/\epsilon)^{1/(p-k)} ]
        = \Pr[ \|\tau(x)\| > (M/\epsilon')^{1/p} ]
        \leq \epsilon'.
    \]
    By H\"older's inequality
    with conjugate exponents $p/k$ and $p/(p-k)$ (where the condition $0< k < p$ means
    each lies within $(1,\infty)$),
    \begin{align*}
        \int_{B^c} \|\tau(x)\|^kd\rho(x)
        &= \int \|\tau(x)\|^k \1[x \in B^c]d\rho(x)
        \\
        &\leq
        \left(
            \int \|\tau(x)\|^{k(p/k)}d\rho(x)
        \right)^{k/p}
        \left(
            \int \1[x \in B^c]^{p/(p-k)}d\rho(x)
        \right)^{(p-k)/p}
        \\
        &\leq
        \left(
            M
        \right)^{k/p}
        \left(
            \frac {\epsilon^{p/(p-k)}}{M^{k/(p-k)}}
        \right)^{(p-k)/p}
        \\
        &=\epsilon
    \end{align*}
    as desired.
\end{proof}

Lastly, thanks to the moment-based deviation inequality in \Cref{fact:chebyshev_tao},
the deviations on this outer region may be controlled.  Note that in order to control
the $k$-means cost (i.e., an exponent $k=2$), at least 4 moments are necessary
($p\geq 4$).

\begin{lemma}
    \label{fact:moment_clipping:sample}
    Let integers $k\geq 1$ and $p'\geq 1$ be given, and set
    $\tilde p := k(p'+1)$.
    Suppose $\tau$ has order-$\tilde p$ moment bound $M$,
    and let $\epsilon > 0$ be arbitrary.
    Define the radius $R$ and ball $B$ as
    \[
        R := \max\{
            (M/\epsilon)^{1/(\tilde p-ik)}
            :
            1 \leq i < \tilde p/k
        \}
        \qquad\textup{and}\qquad
        B := \left\{
            x \in \cX : \|\tau(x)\|\leq R
        \right\},
    \]
    and set $M' := 2^{p'}\epsilon$.
    With probability at least $1-\delta$ over the draw of a sample
    of size $m \geq p' / (M'e)$,
    \[
        \left|
            \int_{B^c} \|\tau(x)\|^kd\hat\rho(x)
            -
            \int_{B^c} \|\tau(x)\|^kd\rho(x)
        \right|
        \leq
        \sqrt{
            \frac
            {M'ep'}
            {2m}
        }
        \left(
            \frac
            2\delta
        \right)^{1/p'}
        .
    \]
%
%
\end{lemma}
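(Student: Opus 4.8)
The plan is to reduce the statement to the moment-based deviation inequality \Cref{fact:chebyshev_tao} applied to the single truncated random variable $Y := \|\tau(X)\|^k\1[X\in B^c]$. Indeed, once one knows that $Y$ has centered moments $\bbE|Y-\bbE Y|^l\le M'$ for all $2\le l\le p'$, the conclusion is essentially immediate: the empirical average $\frac1m\sum_{i=1}^m Y_i$ equals $\int_{B^c}\|\tau(x)\|^kd\hat\rho(x)$ while $\bbE(Y)=\int_{B^c}\|\tau(x)\|^kd\rho(x)$, and the hypothesis $m\ge p'/(M'e)$ is exactly the sample-size requirement of \Cref{fact:chebyshev_tao} with its ``$p$'' set to $p'$ and its ``$M$'' set to $M'$; that lemma then gives, with probability at least $1-\delta$, a deviation of at most $\sqrt{M'ep'/(2m)}(2/\delta)^{1/p'}$, which is the claimed bound.

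The real content is thus the moment bound on $Y$, and this is where the radius $R$, defined as a \emph{maximum} over $p'$ different radii, earns its keep. First I would observe that $Y\ge 0$ and that, since $lk\le p'k=\tilde p-k<\tilde p$, each power $\|\tau(X)\|^{lk}$ for $1\le l\le p'$ is $\rho$-integrable. Next, fix $l\in\{1,\dots,p'\}$ and set $B_l:=\{x:\|\tau(x)\|\le (M/\epsilon)^{1/(\tilde p-lk)}\}$; since $1\le l<\tilde p/k$, the number $(M/\epsilon)^{1/(\tilde p-lk)}$ is one of the terms in the max defining $R$, whence $R\ge (M/\epsilon)^{1/(\tilde p-lk)}$ and $B\supseteq B_l$, i.e. $B^c\subseteq B_l^c$. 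Applying \Cref{fact:moment_clipping} with its exponent ``$k$'' instantiated to $lk$ (legitimate because $0<lk<\tilde p$ and $\tau$ has an order-$\tilde p$ moment bound) yields $\int_{B_l^c}\|\tau(x)\|^{lk}d\rho(x)\le\epsilon$, and since $\|\tau(\cdot)\|^{lk}\ge 0$ and $B^c\subseteq B_l^c$,
\[
    \bbE(Y^l)=\int_{B^c}\|\tau(x)\|^{lk}d\rho(x)\le\int_{B_l^c}\|\tau(x)\|^{lk}d\rho(x)\le\epsilon.
\]

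Passing from raw to centered moments is then routine: using $|a-b|^l\le 2^{l-1}(a^l+b^l)$ for $a,b\ge 0$ together with Jensen's inequality $(\bbE Y)^l\le\bbE(Y^l)$ (valid since $Y\ge 0$ and $l\ge 1$), I get $\bbE|Y-\bbE Y|^l\le 2^{l-1}(\bbE(Y^l)+(\bbE Y)^l)\le 2^l\bbE(Y^l)\le 2^l\epsilon\le 2^{p'}\epsilon=M'$ for every $2\le l\le p'$. This verifies the hypotheses of \Cref{fact:chebyshev_tao} for $Y$, and combining with the first paragraph finishes the argument.

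I do not expect a genuinely hard step here; the one thing to be careful about is that a \emph{single} ball $B$ must simultaneously control all of the moments $\bbE(Y^l)$ for $l=1,\dots,p'$ of the truncated variable, which is precisely why $R$ is taken to be the maximum of the $p'$ radii $(M/\epsilon)^{1/(\tilde p-ik)}$ rather than just the one with the largest exponent. Keeping the bookkeeping clean so that the centered-moment bound collapses to exactly $M'=2^{p'}\epsilon$ — the same $M'$ that then appears consistently in the sample-size threshold $m\ge p'/(M'e)$ and in the final rate — is the only delicate point.
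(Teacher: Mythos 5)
Your proposal is correct and follows essentially the same route as the paper's proof: bound the raw moments $\int_{B^c}\|\tau\|^{lk}d\rho\leq\epsilon$ via \Cref{fact:moment_clipping} together with the inclusion $B\supseteq B_l$ (which is exactly why $R$ is a maximum over the $p'$ radii), convert to centered moments with a factor $2^l\leq 2^{p'}$, and feed the truncated variable into \Cref{fact:chebyshev_tao}. The only cosmetic difference is that the paper does the centering step via Minkowski's inequality while you use $|a-b|^l\leq 2^{l-1}(a^l+b^l)$ plus Jensen, which yields the same bound $M'=2^{p'}\epsilon$.
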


\begin{proof}
    Consider a fixed $1 \leq i < \tilde p / k = p'+1$,
    and set $l = ik$.
    Let $B_l$ be the ball provided by \Cref{fact:moment_clipping} for
    exponent $l$.
    Since $B\supseteq B_l$,
    \[
        \int_{B^c} \|\tau(x)\|^{l}d\rho(x)
        \leq \int_{B_l^c} \|\tau(x)\|^{l}d\rho(x)
        \leq \epsilon.
    \]
    As such, by Minkowski's inequality,
    since $z\mapsto z^l$ is convex for $l\geq 1$,
    \begin{align*}
        &\left(
            \int\left|
            \|\tau(x)\|\1[x\in B^c]
            -
            \int_{B^c}\|\tau(x)\|d\rho(x)
            \right|^ld\rho(x)
        \right)^{1/l}
        \\
        &\qquad\leq
        \left(
            \int_{B^c}
            \|\tau(x)\|^ld\rho(x)
        \right)^{1/l}
        +
        \left(\int_{B^c}\|\tau(x)\|d\rho(x)\right)^{l/l}
        \\
        &\qquad
        \leq
        2
        \left(
            \int_{B^c}
            \|\tau(x)\|^ld\rho(x)
        \right)^{1/l},
    \end{align*}
    meaning
    \begin{align*}
        \int\left|
        \|\tau(x)\|\1[x\in B^c]
        -
        \int_{B^c}\|\tau(x)\|d\rho(x)
        \right|^ld\rho(x)
        \leq
        2^l
        \int_{B^c}
        \|\tau(x)\|^l
        \leq
        2^l
        \int_{B_l^c}
        \|\tau(x)\|^l
        \leq
        2^l \epsilon.
    \end{align*}
    Since $l=ik$ had $1 \leq i < \tilde p/k=p'+1$ arbitrary, it
    follows that the map
    $x\mapsto \|\tau(x)\|^k\1[x\in B^c]$ has its first $p'$ moments
    bounded by $2^{p'}\epsilon$.

    The finite sample bounds will now proceed with an application of
    \Cref{fact:chebyshev_tao}, where the random variable
    $X$ will be the map $x\mapsto \|\tau(x)\|^k\1[x\in B^c]$.
    Plugging the above moment bounds for this random variable into
    \Cref{fact:chebyshev_tao}, the result follows.
\end{proof}



\section{Deferred Material from \Cref{sec:km}}
\label{sec:km:deferred}

Before proceeding with the main proofs,
note that Bregman divergences in the setting here are sandwiched between quadratics.

\begin{lemma}
    \label{fact:bregman:easy_norms}
    If differentiable $f$ is $r_1$ strongly convex with respect to $\|\cdot\|$,
    then $\sfB_f(x,y) \geq r_1 \|x-y\|^2$.
    If differentiable $f$ has Lipschitz gradients with parameter $r_2$
    with respect to $\|\cdot\|$,
    then $\sfB_f(x,y) \leq r_2 \|x-y\|^2$.
\end{lemma}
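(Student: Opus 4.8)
The plan is to prove the two inequalities separately, since each is the consequence of exactly one of the two regularity conditions, obtained by restricting $f$ to the line segment $[y,x]$ and reducing to a one-variable computation.

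For the lower bound I would set $g(\alpha) := f(y + \alpha(x-y))$ for $\alpha\in[0,1]$; since $f$ is differentiable so is $g$, with $g'(\alpha) = \ip{\nf(y+\alpha(x-y))}{x-y}$. Applying the midpoint form of strong convexity to the pair $x,y$ with weight $\alpha$, and using $\alpha x + (1-\alpha)y = y + \alpha(x-y)$, gives
\[
    g(\alpha) \le \alpha g(1) + (1-\alpha) g(0) - \frac{r_1\alpha(1-\alpha)}{2}\|x-y\|^2 .
\]
Subtracting $g(0)$, dividing by $\alpha>0$, and letting $\alpha\downarrow 0$ makes the left-hand side converge to $g'(0)=\ip{\nf(y)}{x-y}$, leaving $\ip{\nf(y)}{x-y} \le g(1)-g(0) - \tfrac{r_1}{2}\|x-y\|^2$, which rearranges to $\sfB_f(x,y) = f(x)-f(y)-\ip{\nf(y)}{x-y} \ge \tfrac{r_1}{2}\|x-y\|^2$. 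For the upper bound I would instead integrate the gradient along the same segment: $f(x)-f(y) = \int_0^1 \ip{\nf(y+t(x-y))}{x-y}\,dt$, so that $\sfB_f(x,y) = \int_0^1 \ip{\nf(y+t(x-y))-\nf(y)}{x-y}\,dt$. Bounding the integrand first by the primal/dual pairing inequality and then by the Lipschitz-gradient hypothesis, $\ip{\nf(y+t(x-y))-\nf(y)}{x-y} \le \|\nf(y+t(x-y))-\nf(y)\|_*\,\|x-y\| \le r_2 t\|x-y\|^2$, and integrating over $t$ yields $\sfB_f(x,y) \le \tfrac{r_2}{2}\|x-y\|^2$.

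There is no genuine obstacle here; the only points that need care are the justification of the limit $\alpha\downarrow 0$ (legitimate precisely because $g$ is differentiable at $0$) and keeping the primal/dual norm bookkeeping straight when $\|\cdot\|$ is not Euclidean. I should note that the constants which actually emerge from this argument are $r_1/2$ and $r_2/2$ rather than $r_1$ and $r_2$; these are tight, as the running example $f=\|\cdot\|_2^2$ (where $r_1=r_2=2$ and $\sfB_f(x,y)=\|x-y\|_2^2$) makes both bounds equalities, so I would either restate the lemma with the factor $1/2$ or simply observe that it propagates harmlessly into the numerical constants of the downstream bounds.
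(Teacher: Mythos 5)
Your proof is correct and follows essentially the same route as the paper: the upper bound is the identical fundamental-theorem-of-calculus computation (integrate $\ip{\nf(y+t(x-y))-\nf(y)}{x-y}$ over $t\in[0,1]$ and bound the integrand by duality and the Lipschitz hypothesis), and the lower bound is the standard limiting argument that the paper simply cites from the literature rather than writing out. Your remark about the constants is also correct and worth stressing: these arguments yield $\sfB_f(x,y)\geq \tfrac{r_1}{2}\|x-y\|^2$ and $\sfB_f(x,y)\leq \tfrac{r_2}{2}\|x-y\|^2$, and the lower bound as literally stated in the lemma (with constant $r_1$) is false --- the paper's own running example $f=\|\cdot\|_2^2$ has $r_1=2$ and $\sfB_f(x,y)=\|x-y\|_2^2 < 2\|x-y\|_2^2$ for $x\neq y$. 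The paper's written proof of the second part actually establishes the stronger $r_2/2$ bound, so the stated upper bound follows a fortiori; but for the first part the cited references only give $r_1/2$, so the statement should carry the factor $1/2$. As you say the fix is essentially cosmetic, though it is not entirely free downstream: for instance, the proof of \Cref{fact:km:one_center:compact} uses $\sfB_f(x,p)\geq r_1\|x-p\|^2\geq 4c$ for $p$ outside $C_0$, and with the corrected constant the radius $\sqrt{4c/r_1}$ in the definition of $C_0$ (and the corresponding terms in $R_B$, $R_C$) must be enlarged to $\sqrt{8c/r_1}$ to preserve that chain of inequalities.
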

\begin{proof}
    The first part (strong convexity) is standard (see for instance the proof
    by \citet[Lemma 13]{ShaiThesis}, or a similar proof by
    \citet[Theorem B.4.1.4]{HULL}).
    For the second part,
    by the fundamental theorem of calculus,
    properties of norm duality, and the Lipschitz gradient property,
    \begin{align*}
        f(x)
        &= f(y) + \ip{\nf(y)}{x-y}
        + \int_0^1 \ip{\nf(y + t(x-y)) - \nf(y)}{x-y} dt
        \\
        &\leq f(y) + \ip{\nf(y)}{x-y}
        + \int_0^1 \|\nf(y + t(x-y)) - \nf(y)\|_*\|x-y\| dt
        \\
        &\leq f(y) + \ip{\nf(y)}{x-y}
        + \frac {r_2} 2 \|x-y\|^2.
    \end{align*}
    (The preceding is also standard; see for instance the beginning of a
    proof by \citet[Theorem E.4.2.2]{HULL}, which only differs by fixing
    the norm $\|\cdot\|_2$.)
\end{proof}

\subsection{Proof of \Cref{fact:km:outer_bracket:2}}
The first step is the following
characterization of $\Hkm_f(\nu;c,k)$: at least one center must fall within some
compact set.  (The lemma works more naturally with the contrapositive.)
The proof by \citet{pollard_km_cons} also started by controlling a single center.

\begin{lemma}
    \label{fact:km:one_center:compact}
    Consider the setting of \Cref{fact:km:outer_bracket:2}, and additionally
    define the two balls
    \begin{align*}
        B_0 &:= \left\{
            x\in \R^d
            :
            \|x - \bbE_\rho(X)\| \leq (2M)^{1/p}
        \right\},
        \\
        C_0 &:= \left\{
            x\in \R^d
            :
            \|x - \bbE_\rho(X)\| \leq (2M)^{1/p} + \sqrt{4c/r_1}
        \right\},
    \end{align*}
    Then $\rho(B_0) \geq 1/2$, and for any center set $P$,
    if $P\cap C_0 = \emptyset$ then $\bbE_\rho(\phi_f(X;P)) \geq 2c$.
    Furthermore, with probability at least $1-\delta$ over a draw from
    $\rho$ of size at least
    \[
        m \geq 9 \ln\left(\frac 1 \delta\right).
    \]
    then $\hat\rho(B_0) > 1/4$ and
    $P\cap C_0 = \emptyset$
    implies $\bbE_{\hat\rho}(\phi_f(X;P)) > c$.
\end{lemma}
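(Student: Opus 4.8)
The plan is to prove the three assertions in turn, observing that the two sample statements both reduce to a single concentration event for the fixed ball $B_0$, so that no uniform deviation bound over center sets is required.

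The bound $\rho(B_0) \ge 1/2$ is immediate from \Cref{fact:moments:ball} applied to $\tau(x) := x - \bbE_\rho(X)$ with $\epsilon = 1/2$: the radius it guarantees is $(M/\epsilon)^{1/p} = (2M)^{1/p}$, precisely the radius of $B_0$.

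Next I would establish the deterministic implication in its contrapositive form: if every $p \in P$ lies outside $C_0$, so that $\|p - \bbE_\rho(X)\| > (2M)^{1/p} + \sqrt{4c/r_1}$, then for any $x \in B_0$ the triangle inequality gives $\|x - p\| \ge \|p - \bbE_\rho(X)\| - \|x - \bbE_\rho(X)\| > \sqrt{4c/r_1}$, and \Cref{fact:bregman:easy_norms} yields $\sfB_f(x,p) \ge r_1\|x-p\|^2 > 4c$. Minimizing over $p \in P$ gives $\phi_f(x;P) > 4c$ for every $x \in B_0$, and since Bregman divergences are nonnegative, $\bbE_\rho(\phi_f(X;P)) \ge \int_{B_0}\phi_f(x;P)\,d\rho(x) \ge 4c\,\rho(B_0) \ge 2c$ (valid verbatim even if the full integral is infinite).

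For the sample claims, the crucial point is that $B_0$ is a fixed set, being centered at $\bbE_\rho(X)$ rather than the empirical mean; thus $\hat\rho(B_0) = \frac 1 m \sum_{j=1}^m \1[X_j \in B_0]$ is an average of i.i.d.\ Bernoulli variables with mean $\rho(B_0) \ge 1/2$, and a Chernoff/Hoeffding bound gives $\Pr[\hat\rho(B_0) \le 1/4] \le \exp(-m/8) \le \delta$ once $m \ge 8\ln(1/\delta)$, which is ensured by the stated $m \ge 9\ln(1/\delta)$. On the event $\hat\rho(B_0) > 1/4$ the identical geometric estimate gives $\phi_f(x;P) > 4c$ for every $x \in B_0$ and every $P$ with $P \cap C_0 = \emptyset$, whence $\bbE_{\hat\rho}(\phi_f(X;P)) \ge 4c\,\hat\rho(B_0) > c$; since this implication is uniform in $P$, only the single event $\{\hat\rho(B_0) > 1/4\}$ needs to be controlled. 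I do not expect any real obstacle here: the only care required is the bookkeeping of constants --- matching the radius $\sqrt{4c/r_1}$ to the threshold $4c$ through \Cref{fact:bregman:easy_norms}, choosing $\epsilon = 1/2$ in the moment bound, and the slack between $8$ and $9$ in the Chernoff step --- together with the observation that uniformity over center sets comes for free.
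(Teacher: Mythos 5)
Your proposal is correct and takes essentially the same route as the paper: \Cref{fact:moments:ball} gives $\rho(B_0)\geq 1/2$, Hoeffding's inequality gives $\hat\rho(B_0)>1/4$ under $m\geq 9\ln(1/\delta)$, and the triangle inequality combined with the strong-convexity lower bound of \Cref{fact:bregman:easy_norms} forces $\phi_f(x;P)\geq 4c$ on $B_0$ whenever $P\cap C_0=\emptyset$, so integrating over $B_0$ yields the two cost bounds uniformly in $P$. The only cosmetic difference is that you state the concentration step as a tail bound $\exp(-m/8)\leq\delta$ rather than the paper's additive deviation form; both are the same Hoeffding estimate.
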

\begin{proof}
    The guarantee $\rho(B_0)\geq 1/2$ is direct from \Cref{fact:moments:ball}
    with moment map $\tau(x) := x - \bbE_\rho(X)$.
    By Hoeffding's inequality and the lower bound on $m$,
    with probability at least $1-\delta$,
    \[
        \hat\rho(B_0)
        \geq \rho(B_0) - \sqrt{\frac 1 {2m} \ln\left(\frac 1 \delta\right)}
        > \frac 1 4.
    \]

    By the definition of $C_0$, every $p\in C_0^c$ and $x\in B_0$
    satisfies
    \[
        \sfB_f(x,p)
        \geq r_1 \|x-p\|^2
        \geq 4c.
    \]
    Now let $\nu$ denote either $\rho$ or $\hat\rho$; then for any set of centers
    $P$ with $P\cap C_0 = \emptyset$ (meaning $P\subseteq C_0^c$),
    \begin{align*}
        \int \phi_f(x;P) d\nu(x)
        &=
        \int \min_{p\in P} \sfB_f(x,p) d\nu(x)
        \\
        &\geq
        \int_{B_0} \min_{p\in P} \sfB_f(x,p) d\nu(x)
        \\
        &\geq
        \int_{B_0} \min_{p\in P} 4c d\nu(x)
        \\
        &=4c \nu(B_0).
    \end{align*}
    Instantiating $\nu$ with $\rho$ or $\hat \rho$, the results follow.
\end{proof}

With this tiny handle on the structure of a set of centers $P$
satisfying $\phi_f(x;P) \leq c$, the proof of \Cref{fact:km:outer_bracket:2} follows.

\begin{proof}[Proof of \Cref{fact:km:outer_bracket:2}]
    Throughout both sections, let
    $B_0$ and $C_0$ be as defined in \Cref{fact:km:one_center:compact};
    it follows by \Cref{fact:km:one_center:compact}, with probability at least
    $1-\delta$,
    that $P \in \Hkm_f(\rho;c,k) \cup \Hkm_f(\hat\rho;c,k)$
    implies $P\cap C_0\neq \emptyset$.  Henceforth discard this failure event,
    and fix any
    $P \in \Hkm_f(\rho;c,k) \cup \Hkm_f(\hat\rho;c,k)$.

    \begin{enumerate}
        \item
    Since $P\cap C_0\neq \emptyset$, fix some $p_0 \in P\cap C_0$.
    Since $B\supseteq C_0$ by definition, it follows, for every
    $x\in B^c$ that
    \begin{align*}
        \phi_f(x;P)
        &= \min_{p\in P} \sfB_f(x,p)
        \leq r_2\|x-p_0\|^2
        \leq r_2(\|x-\bbE_\rho(X)\| +\|p_0-\bbE_\rho(X)\|)^2
        \\
        &\leq 4r_2\|x-\bbE_\rho(X)\|^2
        = u(x).
    \end{align*}
    Additionally,
    \[
        \ell(x) = 0 \leq \min_{p\in P}r_1 \|x-p\|^2 \leq \phi_f(x;P),
    \]
    meaning $u$ and $\ell$ properly bracket $Z_\ell = Z_u$ over $B^c$;
    what remains is to control their mass over $B^c$.

    Since $\ell = 0$,
    \[
        \left|\int_{B^c} \ell(x)d\hat\rho(x)\right|
        = \left|\int_{B^c} \ell(x)d\rho(x)\right|
        = 0 < \epsilon.
    \]

    Next, for $u$ with respect to $\rho$, the result follows from
    the definition of $u$
    together with \Cref{fact:moment_clipping} (using the map
    $\tau(x) = x- \bbE_\rho(X)$ together with exponent 2).

    Lastly, to control $u$ with respect to $\hat\rho$,
    note that $p' \leq p/2-1$ means
    $\tilde p := 2(p' +1) \leq p$,
    and thus the map $\tau(x) := \|x-\bbE_\rho(X)\|^2$
    has order-$\tilde p$ moment bound $M$.
    Thus, by \Cref{fact:moment_clipping:sample} and
    the triangle inequality,
    \[
        \left|\int_{B^c} u(x) d\hat\rho(x)\right|
        \leq \epsilon
        + \sqrt{\frac{M'ep'}{2m}} \left(\frac 2 \delta\right)^{1/p'}
        = \epsilon_{\hat\rho}.
    \]
\item
    Throughout this proof, let $\nu$ denote either $\rho$ or $\hat\rho$;
    the above established
    \[
        \left|\int_{B^c} u(x)d\nu(x)\right| \leq \epsilon_\nu,
    \]
    where in the case of $\nu = \hat\rho$, this statement holds with probability
    $1-\delta$; henceforth discard this failure event, and thus the statement
    holds for both cases.

    By definition of $C$, for any $p\in C^c$
    and $x\in B$,
    \begin{align*}
        \sfB_f(x,p)
        &\geq r_1\|x-p\|^2
        \geq r_1\left(\sqrt{r_2/r_1}\left((2M)^{1/p} + \sqrt{4c/r_1}
            + R_B\right)\right)^2
        \\
        &=
        r_2\left((2M)^{1/p} + \sqrt{4c/r_1}
            + R_B\right)^2.
    \end{align*}
    On the other hand,
    fixing any $p_0\in P\cap C_0$ (which was guaranteed to
    exist at the start of this proof), since $C_0 \subseteq C$,
    \[
        \sup_{x\in B} \phi_f(x;P\cap C)
        \leq \sup_{x\in B} r_2 \|x - p_0\|^2
        \leq r_2\left((2M)^{1/p} + \sqrt{4c/r_1} + R_B\right)^2.
    \]
    Consequently, no element of $B$ is closer to an element of $P\cap C$
    than to any element of $P\setminus C$.
    As such,
    \begin{align*}
        \int \phi_f(x;P)d\nu(x)
        \geq \int_B \phi_f(x;P)d\nu(x) + \int_{B^c} \ell(x) d\nu(x)
        = \int_B \phi_f(x;P\cap C)d\nu(x).
    \end{align*}
    (Note here that $\ell(x) = 0$ was used directly, rather than the $\epsilon$
    provided by outer covering; in the case of Gaussian mixtures,
    both bracket elements are nonzero,
    and $\epsilon$ will be used.)
    This establishes one direction of the bound.

    For the other direction, note that adding centers back in only decreases
    cost (because $\min_{p\in P\cap C}$ is replaced with $\min_{p \in P}$),
    and thus recalling the properties of the outer bracket element $u$ established
    above,
    \begin{align*}
        \int_B \phi_f(x;P\cap C)d\nu(x)
        &=
        \int \phi_f(x;P\cap C)d\nu(x)
        -
        \int_{B^c} \phi_f(x;P\cap C)d\nu(x)
        \\
        &\geq
        \int \phi_f(x;P\cap C)d\nu(x)
        -
        \int_{B^c} u(x)d\nu(x)
        \\
        &\geq
        \int \phi_f(x;P)d\nu(x)
        -
        \epsilon_\nu,
    \end{align*}
    which gives the result(s).
    \end{enumerate}
\end{proof}

\subsection{Covering Properties}

The next step is to control the deviations over the bounded portion; this is
achieved via uniform covers, as developed in this subsection.

First, another basic fact about Bregman divergences.

\begin{lemma}
    \label{fact:km:cover_prelim}
    Let differentiable convex function $f$ be given with Lipschitz gradient constant
    $r_2$ with respect to norm $\|\cdot\|$, and let $\sfB_f$ be the corresponding
    Bregman divergence.
    For any $\{x,y,z\}\subseteq \cX$,
    \[
        \sfB_f(x,z) \leq \sfB_f(x,y) +\sfB_f(y,z) + r_2\|x-y\|\|y-z\|.
    \]
    Similarly, given finite sets $Y\subseteq \cX$ and $Z\subseteq\cX$,
    and letting $Y(p)$ and $Z(p)$ respectively select (any) closest point
    in $Y$ and $Z$ to $p$ according to $\sfB_f$, meaning
    \[
        Y(p) := \argmin_{y\in Y} \sfB_f(y,p)
        \qquad\textup{and}\qquad
        Z(p) := \argmin_{z\in Z} \sfB_f(z,p),
    \]
    then
    \[
        \min_{z\in Z}\sfB_f(x,z)
        \leq \min_{y\in Y} \sfB_f(x,y)
        +\sfB_f(Y(x),Z(Y(x))) +
        r_2\|x-Y(x)\|\|Y(x) - Z(Y(x))\|.
    \]
\end{lemma}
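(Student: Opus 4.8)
The plan is to first prove the pointwise three-point estimate for a single triple $\{x,y,z\}$, and then obtain the set-valued bound by instantiating $y$ and $z$ at the points produced by the two nearest-point selection maps.

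For the pointwise bound I would start from the definition $\sfB_f(a,b) = f(a) - f(b) - \ip{\nf(b)}{a-b}$ and simply add the two divergences on the right-hand side: the $f(y)$ terms cancel, leaving
\[
    \sfB_f(x,y) + \sfB_f(y,z) = f(x) - f(z) - \ip{\nf(y)}{x-y} - \ip{\nf(z)}{y-z}.
\]
Subtracting $\sfB_f(x,z) = f(x) - f(z) - \ip{\nf(z)}{x-z}$ and using $x-z = (x-y)+(y-z)$ to merge the two $\nf(z)$ inner products yields the exact three-point identity
\[
    \sfB_f(x,z) = \sfB_f(x,y) + \sfB_f(y,z) + \ip{\nf(y) - \nf(z)}{x-y}.
\]
Then norm duality gives $\ip{\nf(y) - \nf(z)}{x-y} \le \|\nf(y) - \nf(z)\|_* \|x-y\|$, and the Lipschitz-gradient hypothesis on $f$ bounds $\|\nf(y)-\nf(z)\|_* \le r_2\|y-z\|$, which together produce $\sfB_f(x,z) \le \sfB_f(x,y) + \sfB_f(y,z) + r_2\|x-y\|\|y-z\|$.

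For the set version, fix $x$ and set $y := Y(x)$ and $z := Z(Y(x))$. Since $z \in Z$, $\min_{z'\in Z}\sfB_f(x,z') \le \sfB_f(x,Z(Y(x)))$; applying the pointwise estimate just proved with intermediate point $Y(x)$ gives
\[
    \sfB_f(x,Z(Y(x))) \le \sfB_f(x,Y(x)) + \sfB_f(Y(x),Z(Y(x))) + r_2\|x-Y(x)\|\,\|Y(x)-Z(Y(x))\|.
\]
Finally, by the defining property of the nearest-point map $Y(\cdot)$ one has $\sfB_f(x,Y(x)) = \min_{y'\in Y}\sfB_f(x,y')$, and substituting this in gives exactly the asserted inequality.

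I do not expect a genuine obstacle here: this is a bookkeeping lemma preparing the covering argument. The only points needing care are keeping the sign of the cross term $\ip{\nf(y)-\nf(z)}{x-y}$ correct when assembling the three-point identity (it is easy to get $\nf(y)-\nf(z)$ reversed), and ordering the two selection maps consistently — the inner minimizer $Y(x)$ must be exactly the intermediate point fed into the triangle-type inequality, otherwise the residual term $\sfB_f(x,Y(x))$ would not collapse to $\min_{y'}\sfB_f(x,y')$. Since Bregman divergences are not symmetric, one must also track which argument slot each selected point occupies.
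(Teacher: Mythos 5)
Your proof is correct and follows essentially the same route as the paper: expand the definitions to get the exact three-point identity $\sfB_f(x,z) - \sfB_f(x,y) - \sfB_f(y,z) = \ip{\nf(y)-\nf(z)}{x-y}$, bound the cross term by duality and the Lipschitz-gradient condition, and then instantiate $y = Y(x)$, $z = Z(Y(x))$ for the set version. No gaps.
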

\begin{proof}
    By definition of $\sfB_f$, properties of dual norms, and the Lipschitz gradient
    property,
    \begin{align*}
        \sfB_f(x,z) - \sfB_f(x,y) - \sfB(y,z)
        &= f(x) - f(z) - f(x) + f(y) -f(y) + f(z)
        \\
        &\qquad - \ip{\nf(z)}{x-z} + \ip{\nf(y)}{x-y} + \ip{\nf(z)}{y-z}
        \\
        &= \ip{\nf(y) - \nf(z)}{x-y}
        \\
        &\leq \|\nf(y) - \nf(z)\|_*\|x-y\|
        \\
        &\leq r_2\|y - z\|\|x-y\|;
    \end{align*}
    rearranging this inequality gives the first statement.

    The second statement follows the first instantiated with $y = Y(x)$ and $z = Z(Y(x))$,
    since
    \begin{align*}
        \min_{z\in Z} \sfB_f(x,z)
        &\leq \sfB_f(x,Z(Y(x)))
        \\
        &\leq \sfB_f(x,Y(x)) +\sfB_f(Y(x),Z(Y(x))) + r_2\|x-Y(x)\|\|Y(x) - Z(Y(x))\|,
    \end{align*}
    and using $\sfB_f(x,Y(x)) = \min_{y\in Y} \sfB_f(x,y)$.
\end{proof}

The covers will be based on norm balls; the following estimate is useful.

\begin{lemma}
    \label{fact:cover:lp_balls}
    If $\|\cdot\|$ is an $l_p$ norm over $\R^d$, then
    the ball of radius $R$ admits a cover $\cN$ with size
    \[
        |\cN|
        \leq
        \left(1 + \frac {2Rd}{\tau}\right)^d.
    \]
\end{lemma}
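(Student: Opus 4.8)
The plan is the standard grid (volumetric) argument for $l_p$ balls, with the lattice spacing chosen slightly conservatively so that the final estimate carries no dependence on $p$. By translation invariance of norms I may assume the ball is centered at the origin; I may also assume $\tau < R$, since otherwise the singleton cover $\cN = \{\bfz\}$ already satisfies $|\cN| = 1 \leq (1 + 2Rd/\tau)^d$ because a single $\tau$-ball at the origin contains the whole $R$-ball.

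First I would note that $\{x : \|x\| \leq R\}$ is contained in the cube $[-R,R]^d$, since $\|x\|_\infty \leq \|x\|_p$ for all $x$. Next, fixing the spacing $s := 2\tau/d$, I take $\cN$ to consist of the points of the lattice $s\Z^d$ obtained by rounding each point of the ball coordinatewise to a nearest lattice point. For $x$ in the ball and its image $y$ one has $|x_i - y_i| \leq s/2$ for every coordinate $i$, so
\[
    \|x - y\| \;\leq\; d^{1/p}\max_i |x_i - y_i| \;\leq\; d \cdot \frac s 2 \;=\; \tau,
\]
where I used $d^{1/p} \leq d$; hence $\cN$ is a valid cover of the $R$-ball by $\|\cdot\|$-balls of radius $\tau$.

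It remains to bound $|\cN|$. Each $y \in \cN$ is a lattice point with $\|y\|_\infty \leq \|x\|_\infty + s/2 \leq R + s/2$ for the corresponding $x$, so every coordinate of $y/s$ is one of at most $2R/s + 2$ integers; therefore $|\cN| \leq (2R/s + 2)^d = (Rd/\tau + 2)^d$. Finally, since $\tau < R \leq Rd$ we have $2 \leq 1 + Rd/\tau$, whence $Rd/\tau + 2 \leq 1 + 2Rd/\tau$, and the claimed bound $|\cN| \leq (1 + 2Rd/\tau)^d$ follows.

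This argument has no genuine difficulty; the only points requiring care are bookkeeping ones — allowing the lattice points actually needed to lie in the slightly enlarged cube $[-R - s/2, R + s/2]^d$ rather than exactly $[-R,R]^d$ (hence the ``$+2$'' rather than ``$+1$'' in the per-coordinate count), and passing from the coordinatewise ($l_\infty$) rounding error to an $l_p$ error via the crude inequality $d^{1/p} \leq d$, which is what makes the final bound hold uniformly across all $l_p$ norms.
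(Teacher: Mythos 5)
Your proof is correct and follows essentially the same route as the paper: grid the enclosing cube coordinatewise at scale proportional to $\tau/d$ and use $\|x\|_p \leq d^{1/p}\|x\|_\infty \leq d\|x\|_\infty$ so the $l_\infty$ cells sit inside $l_p$ balls of radius $\tau$. Your version just spells out the counting (the edge term and the $\tau \geq R$ case) that the paper's one-line proof leaves implicit.
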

\begin{proof}
    It suffices to grid the $B$ with $l_\infty$ balls centered
    at grid points at scale $\tau/d$; the result follows since the $l_\infty$ balls
    of radius $\tau/d$ are contained in $l_p$ balls of radius $\tau$ for all $p\geq 1$.
\end{proof}

The uniform covering result is as follows.

\begin{lemma}
    \label{fact:bregman:cover}
    Let scale $\epsilon > 0$,
    ball $B := \{ x\in \R^d : \|x- \bbE(X)\| \leq R\}$,
    parameter set $Z := \{ x\in \R^d : \|x-\bbE(X)\| \leq R_2\}$,
    and differentiable convex function $f$ with Lipschitz gradient parameter $r_2$
    with respect to norm $\|\cdot\|$
    be given.
    Define resolution parameter
    \[
        \tau := \min\left\{
            \sqrt{\frac {\epsilon}{2r_2}}
            ,
            \frac {\epsilon}{2(R_2+R)r_2}
        \right\},
    \]
    and let $\cN$ be set of centers for a cover of $Z$ by $\|\cdot\|$-balls
    of radius $\tau$ (see \Cref{fact:cover:lp_balls} for an estimate when
    $\|\cdot\|$ is an $l_p$ norm).
    It follows that there exists a uniform cover $\cF$ at scale
    $\epsilon$ with cardinality
    $|\cN|^k$, meaning for any collection $P = \{p_i\}_{i=1}^l$
    with $p_i \in Z$ and $l\leq k$, there is a cover element $Q$
    with
    \[
        \sup_{x\in B}
        \left|
        \min_{p\in P}\sfB_f(x,p) - \min_{q \in Q}\sfB_f(x,q)
        \right| \leq \epsilon.
    \]
\end{lemma}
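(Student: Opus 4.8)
The plan is to build the cover $\cF$ as the set of all $k$-tuples drawn from $\cN$, so that $|\cF| = |\cN|^k$, and then verify the uniform approximation property for a fixed but arbitrary $P = \{p_i\}_{i=1}^l$. The natural candidate cover element is $Q := \{\cN(p_i) : i \in [l]\}$, where $\cN(p)$ denotes a closest point of $\cN$ to $p$ in $\|\cdot\|$ (which exists since $\cN$ covers $Z \supseteq \{p_i\}$, so $\|p - \cN(p)\| \leq \tau$). With this choice, the task reduces to bounding $\sup_{x \in B} |\min_{p \in P} \sfB_f(x,p) - \min_{q \in Q} \sfB_f(x,q)|$ by $\epsilon$.

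To bound this quantity I would apply \Cref{fact:km:cover_prelim} twice, once in each direction, taking the two finite sets in that lemma to be $P$ and $Q$. In the direction using $Y = P$, $Z = Q$, the lemma gives $\min_{q \in Q}\sfB_f(x,q) \leq \min_{p \in P}\sfB_f(x,p) + \sfB_f(P(x), Q(P(x))) + r_2 \|x - P(x)\| \, \|P(x) - Q(P(x))\|$; the symmetric statement with the roles swapped handles the other direction. In both cases the two error terms are controlled identically: the matched point $Q(P(x))$ (resp.\ $P(Q(x))$) is within $\tau$ of the corresponding original center by construction of $\cN$, so by \Cref{fact:bregman:easy_norms} the Bregman term is at most $r_2 \tau^2$, and the product term is at most $r_2 \|x - P(x)\| \cdot \tau$. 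Since $x \in B$ and $P(x) \in Z$, the triangle inequality gives $\|x - P(x)\| \leq \|x - \bbE(X)\| + \|\bbE(X) - P(x)\| \leq R + R_2$, so the product term is at most $r_2(R + R_2)\tau$. Summing, the total deviation is at most $r_2\tau^2 + r_2(R+R_2)\tau$, and the two cases in the definition of $\tau$ are exactly what makes each of these two summands at most $\epsilon/2$: $\tau \leq \sqrt{\epsilon/(2r_2)}$ forces $r_2\tau^2 \leq \epsilon/2$, and $\tau \leq \epsilon/(2(R_2+R)r_2)$ forces $r_2(R+R_2)\tau \leq \epsilon/2$. This yields the bound $\epsilon$ uniformly over $x \in B$, and since $P$ was arbitrary with $l \leq k$ centers in $Z$, the collection of all $k$-tuples from $\cN$ is the desired uniform cover.

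The only mild subtlety, rather than a genuine obstacle, is bookkeeping around the fact that $P$ may have fewer than $k$ centers: one simply pads $Q$ to a full $k$-tuple (e.g.\ by repeating an element), which never changes $\min_{q \in Q}\sfB_f(x,q)$, so $|\cF| \leq |\cN|^k$ holds without affecting the estimate. A second point worth stating carefully is that the maps $P(\cdot)$ and $Q(\cdot)$ from \Cref{fact:km:cover_prelim} select nearest points with respect to $\sfB_f$, not with respect to $\|\cdot\|$; this does not matter, because the error-term bounds above only used that the \emph{specific} matched pair $(P(x), \cN(P(x)))$ is within $\tau$ in norm, which is true regardless of how $P(x)$ itself was selected. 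Everything else is the routine application of the sandwiching inequalities already established.
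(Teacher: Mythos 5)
Your proof is correct and follows essentially the same route as the paper's: pair each $p_i$ with a cover center within $\tau$, apply \Cref{fact:km:cover_prelim} in both directions, and split the error into the $r_2\tau^2$ Bregman term and the $r_2(R+R_2)\tau$ cross term, each killed by one branch of the definition of $\tau$. Your remark about the argmins in \Cref{fact:km:cover_prelim} being taken with respect to $\sfB_f$ rather than the norm is a point the paper glosses over, and your resolution (only the explicitly matched pair needs to be $\tau$-close in norm) is the right one.
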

\begin{proof}
    Given a collection $P$ as specified, choose $Q$ so that for every
    $p\in P$, there is $q\in Q$ with $\|p-q\| \leq \tau$, and vice versa.
    By \Cref{fact:km:cover_prelim} (and using the notation therein),
    for any $x\in B^c$,
    \begin{align*}
        \min_{p\in P} \sfB_f(x,p)
        &\leq \min_{q\in Q} \sfB_f(x,q) + \sfB_f(Q(x), P(Q(x)))
        + r_2\|x - Q(x)\|\|Q(x) - P(Q(x))\|
        \\
        &\leq \min_{q\in Q} \sfB_f(x,q)
        + r_2 \tau^2
        + r_2 \tau (R+ R_2)
        \\
        &\leq \min_{q\in Q} \sfB_f(x,q)
        +\epsilon;
    \end{align*}
    the reverse inequality holds for the same reason, and the result follows.
\end{proof}

\subsection{Proof of \Cref{fact:km:basic} and \Cref{fact:km:basic:kmeans_cost}}

First, the proof of the general rate for $\Hkm_f(\nu;c,k)$.

\begin{proof}[Proof of \Cref{fact:km:basic}]
    For convenience, define $M' = 2^{p'}\epsilon$.
    By \Cref{fact:bregman:cover}, let $\cN$ be a cover of the set $C$,
    whereby every set of centers $P\subseteq C$ with $|P|\leq k$ has
    a cover element $Q\in \cN^k$ with
    \begin{equation}
        \sup_{x\in B}
        \left|
        \min_{p\in P}\sfB_f(x,p) - \min_{q \in Q}\sfB_f(x,q)
        \right| \leq \epsilon;
        \label{eq:km:basic:1}
    \end{equation}
    when $\|\cdot\|$ is an $l_p$ norm,
    \Cref{fact:cover:lp_balls} provides the stated estimate of its size.
    Since $B\subseteq C$ and
    \[
        \sup_{x\in B} \sup_{p\in C} \sfB_f(x,p)
        \leq r_2 \sup_{x\in B} \sup_{p\in C} \|x-p\|^2
        \leq 4r_2R_C^2,
    \]
    it follows by Hoeffding's inequality and a union bound over $\cN^k$
    that with probability at
    least $1-\delta$,
    \begin{equation}
        \sup_{Q\in\cN^k}
        \left|
        \int_B \phi(x;Q) d\hat\rho(x)
        -
        \int_B \phi(x;Q) d\rho(x)
        \right|
        \leq
        4r_2R_C^2
        \sqrt{
            \frac {1}{2m}
            \ln
            \left(
                \frac {2|\cN|^k}{\delta}
            \right)
        }.
        \label{eq:km:basic:2}
    \end{equation}
    For the remainder of this proof, discard the corresponding failure event.

    Now let any $P\in \Hkm_f(\rho;c,k) \cup \Hkm_f(\hat\rho;c,k)$ be given,
    and let $Q\in \cN^k$ be a cover element satisfying
    \cref{eq:km:basic:1} for $P\cap C$.
    By \cref{eq:km:basic:1}, \cref{eq:km:basic:2}, and
    \Cref{fact:km:outer_bracket:2} (and thus discarding an additional failure
    event having probability $2\delta$),
    \begin{align*}
        \left|
        \int \phi_f(x;P)d\rho(x)
        - \int \phi_f(x;P)d\hat\rho(x)
        \right|
        &\leq
        \left|
        \int \phi_f(x;P)d\rho(x)
        - \int_B \phi_f(x;P\cap C)d\rho(x)
        \right|
        \\
        &\qquad+
        \left|
        \int_B \phi_f(x;P\cap C)d\rho(x)
        - \int_B \phi_f(x;Q)d\rho(x)
        \right|
        \\
        &\qquad+
        \left|
        \int_B \phi_f(x;Q)d\rho(x)
        - \int_B \phi_f(x;Q)d\hat\rho(x)
        \right|
        \\
        &\qquad+
        \left|
        \int_B \phi_f(x;Q)d\hat\rho(x)
        - \int_B \phi_f(x;P\cap C)d\hat\rho(x)
        \right|
        \\
        &\qquad+
        \left|
        \int_B \phi_f(x;P\cap C)d\hat\rho(x)
        - \int \phi_f(x;P)d\hat\rho(x)
        \right|
        \\
        &\leq 2\epsilon
        +
        4r_2R_C^2
        \sqrt{
            \frac {1}{2m}
            \ln
            \left(
                \frac {2|\cN|^k}{\delta}
            \right)
        }
        +
        \epsilon_\rho+ \epsilon_{\hat\rho},
    \end{align*}
    and the result follows by unwrapping the definitions of
    $\epsilon_\rho$ and $\epsilon_{\hat\rho}$ from
    \Cref{fact:km:outer_bracket:2}, and $M' = 2^{p'}\epsilon$ as above.
\end{proof}

The more concrete bound for the $k$-means cost is proved as follows.

\begin{proof}[Proof of \Cref{fact:km:basic:kmeans_cost}]
    Set
    \[
        \epsilon := m^{-1/2 + 1/p},
        \qquad
        \qquad
        p' := p/4,
        \qquad
        \qquad
        M' := 2^{p'}\epsilon = 2^{p/4}m^{-1/2+1/p},
    \]
    and recall $f(x) := \|x\|_2^2$ has convexity constants $r_1=r_2=2$.
    Since
    \[
        m
        = \sqrt{m}\sqrt{m}
        \geq \frac {p\sqrt{m}}{2^{p/4+2}e}
        \geq \frac {p'm^{1/2-1/p}}{2^{p'}e}
        = \frac {p'}{M'e}
    \]
    and $p' = p/2 - p/4 \leq p/2 - 1$,
    the conditions for \Cref{fact:km:basic} are met,
    and thus with probability at least $1-\delta$,
    \[
        \left|
        \int \phi_f(x;P)d\rho(x)
        - \int \phi_f(x;P)d\hat\rho(x)
        \right|
        \leq 4\epsilon 
        +
        4R_C^2
        \sqrt{
            \frac {1}{2m}
            \ln
            \left(
                \frac {2|\cN|^k}{\delta}
            \right)
        }
        + \sqrt{\frac{2^{p/4}ep\epsilon}{8m}} \left(\frac 2 \delta\right)^{4/p},
    \]
    where
    \begin{align*}
        R_C &:=
            (2M)^{1/p} + \sqrt{2c}
            + 2R_B,
        \\
        R_B &:=
        \max\left\{
            (2M)^{1/p}
            + \sqrt{2c}
            ,
            \max_{i\in [p']}
            (M/\epsilon)^{1/(p-2i)}
        \right\},
        \\
        |\cN|
        &\leq
        \left(1 + \frac {2R_Cd}{\tau}\right)^d,
        \\
        \tau
        &:= \min\left\{
            \sqrt{\frac {\epsilon}{4}}
            ,
            \frac {\epsilon}{4(R_B+R_C)}
        \right\}.
    \end{align*}
    To simplify these quantities,
    since $\epsilon \leq 1$, the term $1/\epsilon^{1/(p-2i)}$, as $i$ ranges between
    $1$ and $p-2p'$, is maximized at $i = 1/(p-2p') = 2/p$.
    Therefore, by choice of $M_1$ and $\epsilon$,
    \begin{align*}
        R_B
        &\leq c_1 + (M/\epsilon)^{1/(p-2)}
        + (M/\epsilon)^{1/(p-2p')}
        \leq c_1 + (M^{1/(p-2)}
        + M^{1/(p-2p')})/\epsilon^{2/p}
        \\
        &= c_1 + M_1m^{1/p-2/p^2}
        .
    \end{align*}
    Consequently,
    \begin{align*}
        R_C
        &= c_1 + 2R_B \leq 3c_1 + 2M_1 m^{1/p-2/p^2} 
        \qquad
        \textup{and}
        \qquad
        R_C^2
        \leq 18c_1^2 + 8M_1^2 m^{2/p - 4/p^2}.
    \end{align*}
    This entails
    \begin{align*}
        \frac {2R_Cd}{\tau}
        &\leq
        2R_C d\left(
            2m^{1/4-1/(2p)} + 4(R_B + R_C)m^{1/2-1/p}
        \right)
        \\
        &\leq
        8d\left(
            (
            3c_1 + 2M_1 m^{1/p-2/p^2}
            )m^{1/4 - 1/(2p)}
            + (
            36c_1^2 + 16M_1^2 m^{2/p-4/p^2}
            )m^{1/2-1/p}
        \right)
        \\
        &\leq
        288d m (c_1 + c_1^2 + M_1 + M_1^2).
    \end{align*}
    Secondly,
    \[
        \frac{R_C^2}{\sqrt{m}}
        \leq (18c_1^2 + 8M_1^2 m^{2/p - 4/p^2})m^{-1/2}
        \leq m^{\min\{1/4,-1/2+2/p\}} (18c_1^2 + 8M_1^2).
    \]
    The last term is direct, since
    \[
        \sqrt{\epsilon/m}
        = m^{-1/4+1/(2p) - 1/2}
        = m^{-1/2 + 1/(2p)}m^{-1/4}.
    \]
    Combining these pieces,
    the result follows.
\end{proof}

\section{Deferred Material from \Cref{sec:km:clamp}}

First, the deferred proof that outer brackets give rise to clamps.

\begin{proof}[Proof of \Cref{fact:bracket_gives_clamp}]
    Throughout this proof, let $\nu$ refer to either $\rho$ or $\hat\rho$,
    with $\epsilon_\nu$ similarly referring to either $\epsilon_\rho$
    or $\epsilon_{\hat\rho}$.
    Let $P \in \Hkm_f(\rho;c,k) \cup \Hkm_f(\hat\rho;c,k)$ be given.

    One direction is direct:
    \begin{align*}
        \int \phi_f(x;P)d\nu(x)
        &\geq \int \phi_f(x;P\cap C)d\nu(x)
        \\
        &\geq \int \min\{\phi_f(x;P\cap C), R\}d\nu(x).
    \end{align*}

    For the second direction,
    with probability at least $1-\delta$,
    \Cref{fact:km:one_center:compact} grants the existence of $p'\in P\cap C_0
    \subseteq P\cap C$.  Consequently, for any $x \in B$,
    \begin{align*}
        \min_{p \in P} \sfB_f(x,p)
        &\leq \min_{p \in P\cap C} \sfB_f(x,p)
        \leq \sfB_f(x,p')
        \\
        &\leq r_2\|x-p'\|^2
        \leq 2r_2\left(\|x-\bbE_\rho(X)\|^2 +\|p'-\bbE_\rho(X)\|^2\right)
        \\
        &\leq R;
    \end{align*}
    in other words, if $x\in B$,
    then $\min\{\phi_f(x; P\cap C) , R\} = \phi_f(x; P\cap C)$.
    Combining this with the last part of \Cref{fact:km:outer_bracket:2}.
    \begin{align*}
        \int \min\{\phi_f(x;P\cap C), R\}d\nu(x)
        &\geq
        \int_B \min\{\phi_f(x;P\cap C), R\}d\nu(x)
        \\
        &\geq
        \int_B \phi_f(x;P\cap C)d\nu(x)
        \\
        &\geq
        \int \phi_f(x;P)d\nu(x) - \epsilon_\nu.
    \end{align*}
\end{proof}

The proof of \Cref{fact:km:clamp:basic} will
depend on the following uniform covering property of the clamped
cost (which mirrors \Cref{fact:bregman:cover} for the unclamped cost).

\begin{lemma}
    \label{fact:bregman:cover:clamp}
    Let scale $\epsilon > 0$,
    clamping value $R_3$,
    parameter set $C$ contained within a $\|\cdot\|$-ball of some radius $R_2$,
    and differentiable convex function $f$ with Lipschitz gradient parameter $r_2$
    and strong convexity modulus $r_1$
    with respect to norm $\|\cdot\|$
    be given.
    Define resolution parameter
    \[
        \tau := \min\left\{
            \sqrt{\frac {\epsilon}{2r_2}}
            ,
            \frac {r_1\epsilon}{2r_2R_3}
        \right\},
    \]
    and let $\cN$ be set of centers for a cover of $C$ by $\|\cdot\|$-balls
    of radius $\tau$ (see \Cref{fact:cover:lp_balls} for an estimate when
    $\|\cdot\|$ is an $l_p$ norm).
    It follows that there exists a uniform cover $\cF$ at scale
    $\epsilon$ with cardinality
    $|\cN|^k$, meaning for any collection $P = \{p_i\}_{i=1}^l$
    with $p_i \in C$ and $l\leq k$, there is a cover element $Q$
    with
    \[
        \sup_x\left|
        \min\left\{R_3, \min_{p\in P}\sfB_f(x,p)\right\}
        - \min\left\{R_3, \min_{q \in Q}\sfB_f(x,q)\right\}
        \right|
        \leq \epsilon.
    \]
\end{lemma}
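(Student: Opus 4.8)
The plan is to mirror the proof of \Cref{fact:bregman:cover}: build $Q$ from $P$ by snapping each center to a nearby cover point, invoke \Cref{fact:km:cover_prelim}, and then use the clamp at $R_3$ to tame the otherwise unbounded error term. Concretely, given $\cN$, a $\tau$-cover of $C$, for each collection $P=\{p_i\}_{i=1}^l$ with $p_i\in C$ and $l\le k$, choose $Q=\{q_i\}$ with $q_i\in\cN$ and $\|p_i-q_i\|\le\tau$, and let $\cF$ be the set of all such $Q$ (padded to size exactly $k$ if one likes); then $|\cF|\le|\cN|^k$. As in \Cref{fact:bregman:cover}, this construction is symmetric: every $p\in P$ has some $q\in Q$ within $\tau$, and vice versa.

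Next, fix an arbitrary $x$ and write $g_P(x):=\min_{p\in P}\sfB_f(x,p)$ and $g_Q(x):=\min_{q\in Q}\sfB_f(x,q)$; the goal is $|\min\{R_3,g_P(x)\}-\min\{R_3,g_Q(x)\}|\le\epsilon$. By the symmetry of the construction it suffices to bound $\min\{R_3,g_P(x)\}-\min\{R_3,g_Q(x)\}$ from above (the reverse direction is identical with $P$ and $Q$ interchanged). If this difference is $\le 0$ there is nothing to prove, and if $g_Q(x)\ge R_3$ then $\min\{R_3,g_Q(x)\}=R_3\ge\min\{R_3,g_P(x)\}$, so again it is $\le 0$. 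Hence assume $g_Q(x)<R_3$, and let $q^*$ attain $g_Q(x)=\sfB_f(x,q^*)$. Strong convexity via \Cref{fact:bregman:easy_norms} gives $r_1\|x-q^*\|^2\le\sfB_f(x,q^*)<R_3$, so $\|x-q^*\|\le\sqrt{R_3/r_1}$ --- this is the crucial point, and the whole reason the clamp is needed: an inactive clamp forces $x$ to lie close to the active center.

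Now pick $p^*\in P$ with $\|q^*-p^*\|\le\tau$. Then \Cref{fact:km:cover_prelim}, together with $\sfB_f(q^*,p^*)\le r_2\|q^*-p^*\|^2\le r_2\tau^2$, yields
\[
 g_P(x)\le\sfB_f(x,p^*)\le\sfB_f(x,q^*)+r_2\tau^2+r_2\|x-q^*\|\,\|q^*-p^*\|\le g_Q(x)+r_2\tau^2+r_2\tau\sqrt{R_3/r_1}.
\]
Since $t\mapsto\min\{R_3,t\}$ is nondecreasing and satisfies $\min\{R_3,t+\delta\}\le\min\{R_3,t\}+\delta$ for $\delta\ge 0$, this gives $\min\{R_3,g_P(x)\}-\min\{R_3,g_Q(x)\}\le r_2\tau^2+r_2\tau\sqrt{R_3/r_1}$. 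It then remains to check each summand is at most $\epsilon/2$: the first from $\tau\le\sqrt{\epsilon/(2r_2)}$, and the second from the bound $\tau\le r_1\epsilon/(2r_2R_3)$ in the definition of $\tau$, by a short computation. Taking the supremum over $x$ completes the argument.

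The main obstacle is exactly the step flagged above: the error term $r_2\|x-Y(x)\|\,\|Y(x)-Z(Y(x))\|$ from \Cref{fact:km:cover_prelim} grows with $\|x-Y(x)\|$, so unlike the bounded-domain \Cref{fact:bregman:cover} one cannot control $\sup_x$ directly; the fix is to observe that whenever a clamped value is not already pinned at $R_3$, strong convexity pins the active center within $\sqrt{R_3/r_1}$ of $x$, which restores a uniform bound on the error. The symmetrization and the verification that the two-branch resolution $\tau$ suffices are then routine.
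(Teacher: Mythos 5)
Your proof follows the paper's argument essentially step for step: the same snapped cover $Q$ built from $\cN$, the same reduction by symmetry to one direction, the same case split on whether the clamp is active at $\min_{q\in Q}\sfB_f(x,q)$, the same use of strong convexity to pin the active center near $x$ when it is not, and the same invocation of \Cref{fact:km:cover_prelim}. The only point worth flagging is the ``short computation'' you defer at the end: strong convexity gives $\|x-q^*\|\le\sqrt{R_3/r_1}$, so the second summand is $r_2\tau\sqrt{R_3/r_1}\le\tfrac{\epsilon}{2}\sqrt{r_1/R_3}$, which is at most $\epsilon/2$ only when $R_3\ge r_1$. The paper's own proof writes $r_1\|x-Q(x)\|\le\sfB_f(x,Q(x))<R_3$ without the square, i.e.\ $\|x-Q(x)\|\le R_3/r_1$, which is what the stated $\tau$ is calibrated to; since that bound also follows from the (correct) squared version only under the same condition $R_3\ge r_1$, this is a shared constant-level wrinkle in the lemma's choice of $\tau$ rather than a defect of your argument, and is harmless in the regime where the lemma is applied.
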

\begin{proof}
    Given a collection $P$ as specified, choose $Q$ so that for every
    $p\in P$, there is $q\in Q$ with $\|p-q\| \leq \tau$, and vice versa.

        First suppose $\min_{q\in Q} \sfB_f(x,q) \geq R_3$; then
        \[
            \min\left\{ R_3, \min_{p\in P} \sfB_f(x,p)\right\}
            \leq R_3
            = \min\left\{ R_3, \min_{q\in Q} \sfB_f(x,q)\right\}
        \]
        as desired.

        Otherwise,
        $\min_{q\in Q} \sfB_f(x,q) < R_3$, which
        by the sandwiching property (cf. \Cref{fact:bregman:easy_norms})
        means
        \[
            r_1\|x-Q(x)\| \leq \sfB_f(x,Q(x)) < R_3.
        \]
        By \Cref{fact:km:cover_prelim},
        \begin{align*}
            \min\left\{ R_3, \min_{p\in P} \sfB_f(x,p)\right\}
            &\leq
            \min\left\{ R_3,
                \min_{q\in Q} \sfB_f(x,q) + \sfB_f(Q(x), P(Q(x)))
                + r_2\|x - Q(x)\|\|Q(x) - P(Q(x))\|
            \right\}
            \\
            &\leq
            \min\left\{ R_3,
                \min_{q\in Q} \sfB_f(x,q)
                + r_2 \tau^2
                + r_2 \tau \|x-Q(x)\|
            \right\}
            \\
            &\leq
            \min\left\{ R_3,
                \min_{q\in Q} \sfB_f(x,q)
                + r_2 \tau^2
                + \frac {r_2R_3}{r_1} \tau
            \right\}
            \\
            &\leq
            \min\left\{ R_3,
            \min_{q\in Q} \sfB_f(x,q)\right\}
            + \epsilon.
        \end{align*}
        The reverse inequality is analogous.
\end{proof}

The proof of \Cref{fact:km:clamp:basic} follows.

\begin{proof}[Proof of \Cref{fact:km:clamp:basic}]
    This proof is a minor alteration of the proof of \Cref{fact:km:basic}.

    By \Cref{fact:bregman:cover:clamp}, let $\cN$ be a cover of the set $C$,
    whereby every set of centers $P\subseteq C$ with $|P|\leq k$ has
    a cover element $Q\in \cN^k$ with
    \begin{equation}
        \sup_{x}
        \left|
        \min\{ \phi_f(x;P), R\}
        - \min\{ \phi_f(x;Q), R\}
        \right| \leq \epsilon;
        \label{eq:km:clamp:basic:1}
    \end{equation}
    when $\|\cdot\|$ is an $l_p$ norm,
    \Cref{fact:cover:lp_balls} provides the stated estimate of its size.
    Since $\min\{\phi_f(x;Q), R\} \in [0,R]$,
    it follows by Hoeffding's inequality and a union bound over $\cN^k$
    that with probability at
    least $1-\delta$,
    \begin{equation}
        \sup_{Q\in\cN^k}
        \left|
        \int_B \phi_f(x;Q) d\hat\rho(x)
        -
        \int_B \phi_f(x;Q) d\rho(x)
        \right|
        \leq
        R
        \sqrt{
            \frac {1}{2m}
            \ln
            \left(
                \frac {2|\cN|^k}{\delta}
            \right)
        }.
        \label{eq:km:clamp:basic:2}
    \end{equation}
    For the remainder of this proof, discard the corresponding failure event.

    Now let any $P\in Z$ be given,
    and let $Q\in \cN^k$ be a cover element satisfying
    \cref{eq:km:clamp:basic:1} for $P\cap C$.
    By \cref{eq:km:clamp:basic:1}, \cref{eq:km:clamp:basic:2}, and
    lastly by the definition of clamp,
    \begin{align*}
        \left|
        \int \phi_f(x;P)d\rho(x)
        - \int \phi_f(x;P)d\hat\rho(x)
        \right|
        &\leq
        \left|
        \int \phi_f(x;P)d\rho(x)
        - \int\min\{ \phi_f(x;P\cap C),R\}d\rho(x)
        \right|
        \\
        &\qquad+
        \left|
        \int\min\{ \phi_f(x;P\cap C),R\}d\rho(x)
        - \int \min\{\phi_f(x;Q),R\}d\rho(x)
        \right|
        \\
        &\qquad+
        \left|
        \int\min\{\phi_f(x;Q),R\}d\rho(x)
        - \int \min\{\phi_f(x;Q),R\}d\hat\rho(x)
        \right|
        \\
        &\qquad+
        \left|
        \int\min\{ \phi_f(x;Q),R\}d\hat\rho(x)
        - \int\min\{ \phi_f(x;P\cap C),R\}d\hat\rho(x)
        \right|
        \\
        &\qquad+
        \left|
        \int \min\{\phi_f(x;P\cap C),R\}d\hat\rho(x)
        - \int \phi_f(x;P)d\hat\rho(x)
        \right|
        \\
        &\leq 2\epsilon +  \epsilon_\rho + \epsilon_{\hat\rho}
        +
        R^2
        \sqrt{
            \frac {1}{2m}
            \ln
            \left(
                \frac {2|\cN|^k}{\delta}
            \right)
        }.
    \end{align*}
\end{proof}

\section{Deferred Material from \Cref{sec:mog}}




The following notation for restricting a Gaussian mixture to a certain set of means
will be convenient throughout this section.

\begin{definition}
    Given a Gaussian mixture with parameters $(\alpha, \Theta)$
    (where $\alpha = \{\alpha_i\}_{i=1}^k$
    and $\Theta = \{\theta_i\}_{i=1}^k = \{(\mu_i,\varSigma_i)\}_{i=1}^k$),
    and a set of means $B \subseteq \R^d$,
    define
    \[
        (\alpha, \Theta) \sqcap B
        :=
        \left\{ \left(\{\alpha_i\}_{i\in I}, \{(\mu_i,\varSigma_i)\}_{i \in I}\right)
        : I = \{ 1\leq i \leq k : \mu_i \in B\} \right\}.
    \]
    (Note that potentially $\sum_{i\in I} \alpha_i < 1$, and thus the terminology
    \emph{partial Gaussian mixture} is sometimes employed.)
\end{definition}

\subsection{Constructing an Outer Bracket}

The first step is to show that pushing a mean far away from a region will rapidly decrease
its density there, which is immediate from the condition
$\sigma_1 I \preceq \varSigma \preceq \sigma_2 I$.

\begin{lemma}
    \label{fact:mog:separation}
    Let probability measure $\rho$, accuracy $\epsilon > 0$,
    covariance lower bound $0< \sigma_1 \leq \sigma_2$,
    and radius $R$ with corresponding $l_2$ ball
    $B:= \{x\in \R^d : \|x- \bbE_\rho(X)\|_2 \leq R\}$ be given.
    Define
    \begin{align*}
        R_1 &:=\sqrt{
        2\sigma_2 \ln\left(
            \frac {1}{(2\pi\sigma_1)^{d/2}\epsilon^2}
        \right)
        }
        \\
        R_2 &:= R + R_1,
        \\
        B_2
        &:= \{\mu\in \R^d : \|\mu - \bbE_\rho(X)\|_2 \leq R_2\}.
    \end{align*}
    If $\theta = (\mu,\varSigma)$ is the parameterization of a Gaussian density $p_\theta$
    with $\sigma_1 I \preceq \varSigma \preceq \sigma_2 I$
    but $\mu \not \in B_2$,
    then $p_\theta(x) < \epsilon$ for every $x\in B$.
\end{lemma}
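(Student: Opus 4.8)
The plan is a direct estimate of the Gaussian density, using one spectral bound on $\varSigma$ to control the normalizing constant and the other to control the exponent. First I would write $p_\theta(x) = (2\pi)^{-d/2}|\varSigma|^{-1/2}\exp(-\tfrac 12 (x-\mu)^T\varSigma^{-1}(x-\mu))$. From $\sigma_1 I \preceq \varSigma$ one gets $|\varSigma| \geq \sigma_1^d$, so the prefactor is at most $(2\pi\sigma_1)^{-d/2}$. From $\varSigma \preceq \sigma_2 I$ one gets $\varSigma^{-1} \succeq \sigma_2^{-1} I$, hence $(x-\mu)^T\varSigma^{-1}(x-\mu) \geq \|x-\mu\|_2^2/\sigma_2$.

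Next I would lower bound $\|x-\mu\|_2$: since $x \in B$ means $\|x - \bbE_\rho(X)\|_2 \leq R$ and $\mu \notin B_2$ means $\|\mu - \bbE_\rho(X)\|_2 > R_2 = R + R_1$, the triangle inequality gives $\|x-\mu\|_2 \geq \|\mu - \bbE_\rho(X)\|_2 - \|x - \bbE_\rho(X)\|_2 > R_1$. Substituting $R_1^2 = 2\sigma_2 \ln(1/((2\pi\sigma_1)^{d/2}\epsilon^2))$ into the exponent bound yields $(x-\mu)^T\varSigma^{-1}(x-\mu) > 2\ln(1/((2\pi\sigma_1)^{d/2}\epsilon^2))$, so $\exp(-\tfrac 12 (x-\mu)^T\varSigma^{-1}(x-\mu)) < (2\pi\sigma_1)^{d/2}\epsilon^2$. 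Multiplying by the prefactor bound gives $p_\theta(x) < \epsilon^2 \leq \epsilon$ in the regime of interest; the square is carried through precisely so this slack is available downstream.

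There is essentially no obstacle here: the only points requiring care are matching each spectral inequality to the correct direction — the determinant lower bound comes from $\sigma_1$ while the quadratic-form lower bound comes from $\sigma_2$ — and the implicit fact that the definition of $R_1$ presupposes $(2\pi\sigma_1)^{d/2}\epsilon^2 < 1$, so that the logarithm (hence $R_1$) is a well-defined positive real. This holds in the intended applications, where $\epsilon$ is taken small.
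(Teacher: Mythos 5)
Your proof is correct and is essentially the same argument as the paper's: bound the normalizing factor by $(2\pi\sigma_1)^{-d/2}$ via $\sigma_1 I \preceq \varSigma$, lower-bound the quadratic form via $\varSigma \preceq \sigma_2 I$ and the triangle inequality $\|x-\mu\|_2 > R_1$, and substitute the definition of $R_1$. In fact you are slightly more careful than the paper, whose one-line computation asserts the resulting bound equals $\epsilon$ when it actually equals $\epsilon^2$; your explicit note that one needs $\epsilon^2 \leq \epsilon$ (i.e.\ $\epsilon \leq 1$, which holds in every application of the lemma) closes that small gap.
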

\begin{proof}
    Let Gaussian parameters $\theta = (\mu,\varSigma)$ be given with
    $\sigma_1 I \preceq \varSigma \preceq \sigma_2 I$, but
    $\mu \not \in B_2$.
    By the definition of $B_2$, for any $x\in B_1$,
    \begin{align*}
        p_\theta(x)
        < (2\pi\sigma_1)^{-d/2}\exp( - R_1^2/(2\sigma_2) )
        = \epsilon.
    \end{align*}
\end{proof}

The upper component of the outer bracket will be constructed first (and indeed used
in the construction of the lower component).

\begin{lemma}
    \label{fact:mog:upper_outer_bracket_bootstrap}
    Let probability measure $\rho$ with
    order-$p$ moment bound with respect to $\|\cdot\|_2$,
    target accuracy $\epsilon >0$,
    and covariance lower bound $0< \sigma_1$ be given.
    Define
    \begin{align*}
        p_{\max}
        &:= (2\pi \sigma_1)^{-d/2},
        \\
        u(x)
        &:= \ln(p_{\max}),
        \\
        R_u
        &:=
        (M|\ln(p_{\max})| / \epsilon)^{1/p},
        \\
        B_u
        &:=
        \left \{ x \in \R^d : \|x-\bbE_\rho(X)\|_2 \leq R_u \right\}.
    \end{align*}
    If $p_\theta$ denotes a Gaussian density with parameters $\theta=(\mu,\varSigma)$
    satisfying $\varSigma \succeq \sigma_1 I$,
    then $p_\theta \leq u$ everywhere.
    Additionally,
    \[
         \left|\int_{B_u^c} u(x)d\rho(x)\right|
         \leq
         \int_{B_u^c} |u(x)|d\rho(x)
         \leq \epsilon,
    \]
    and with probability at least $1-\delta$ over the draw of $m$ points
    from $\rho$,
    \[
        \left|\int_{B_u^c} u(x)d\hat\rho(x)\right|
        \leq
        \int_{B_u^c} |u(x)|d\hat\rho(x)
        \leq \epsilon +
        |\ln(p_{\max})|\sqrt{\frac {1}{2m} \ln\left(\frac 1 \delta\right)}.
    \]
    (That is to say, $u$ is the upper part of an outer bracket for all Gaussians
    (and mixtures thereof) where each covariance $\varSigma$ satisfies
    $\varSigma \succeq \sigma_1 I$.)
\end{lemma}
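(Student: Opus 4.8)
The plan is to verify the three claims of \Cref{fact:mog:upper_outer_bracket_bootstrap} in order: the pointwise domination $p_\theta \le u$, the $\rho$-integral bound on $B_u^c$, and the $\hat\rho$-integral bound on $B_u^c$. The pointwise bound is immediate: for any Gaussian with $\varSigma \succeq \sigma_1 I$ one has $|\varSigma| \ge \sigma_1^d$, so the normalization constant $((2\pi)^d|\varSigma|)^{-1/2} \le (2\pi\sigma_1)^{-d/2} = p_{\max}$, and since the exponential factor is at most $1$, $p_\theta(x) \le p_{\max}$ everywhere; taking logs and noting $\phig$-type bracketing is on the log scale, this gives $\ln(\sum_i \alpha_i p_{\theta_i}(x)) \le \ln(p_{\max}) = u(x)$ because $\alpha \in \Delta$. (Strictly, for a mixture, $\sum_i \alpha_i p_{\theta_i}(x) \le \sum_i \alpha_i p_{\max} = p_{\max}$.) So $u$ dominates the whole class claimed.

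Next I would control $\int_{B_u^c} |u(x)| d\rho(x)$. Here $u$ is the \emph{constant} $\ln(p_{\max})$, so $|u(x)| = |\ln(p_{\max})|$ and the integral equals $|\ln(p_{\max})| \cdot \rho(B_u^c)$. The ball $B_u$ has radius $R_u = (M|\ln(p_{\max})|/\epsilon)^{1/p}$; applying \Cref{fact:moments:ball} with the moment map $\tau(x) := x - \bbE_\rho(X)$ and with the accuracy parameter chosen as $\epsilon' := \epsilon/|\ln(p_{\max})|$ (so that $(M/\epsilon')^{1/p} = R_u$) gives $\rho(\|\tau(X)\| > R_u) \le \epsilon' = \epsilon/|\ln(p_{\max})|$. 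Multiplying by $|\ln(p_{\max})|$ yields the claimed bound $\epsilon$. One edge case to dispatch: if $p_{\max} \le 1$ (equivalently $2\pi\sigma_1 \ge 1$) then $\ln(p_{\max}) \le 0$, so $|\ln(p_{\max})|$ is the relevant quantity throughout; if $\ln(p_{\max}) = 0$ the bound is trivial, and otherwise dividing by $|\ln(p_{\max})| > 0$ is legitimate.

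For the $\hat\rho$ bound, the same decomposition applies: $\int_{B_u^c} |u(x)| d\hat\rho(x) = |\ln(p_{\max})| \cdot \hat\rho(B_u^c)$, so it suffices to bound $\hat\rho(B_u^c)$. Write $\hat\rho(B_u^c) = \frac1m\sum_{j=1}^m \1[x_j \in B_u^c]$, an average of i.i.d.\ Bernoulli variables with mean $\rho(B_u^c) \le \epsilon/|\ln(p_{\max})|$. By Hoeffding's inequality, with probability at least $1-\delta$, $\hat\rho(B_u^c) \le \rho(B_u^c) + \sqrt{\ln(1/\delta)/(2m)} \le \epsilon/|\ln(p_{\max})| + \sqrt{\ln(1/\delta)/(2m)}$; multiplying through by $|\ln(p_{\max})|$ gives exactly the stated bound. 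This mirrors the remark after \Cref{fact:moments:ball} that ``Hoeffding's inequality suffices to control $\hat\rho$.''

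There is no serious obstacle here — the lemma is essentially a bookkeeping exercise, because $u$ being a constant trivializes the tail-mass integral into a measure-of-a-ball computation. The only points requiring the slightest care are (i) correctly passing from the mixture to its largest possible density via $\alpha \in \Delta$ and the monotonicity of $\ln$, and (ii) the choice of the rescaled accuracy $\epsilon' = \epsilon/|\ln(p_{\max})|$ when invoking \Cref{fact:moments:ball}, together with checking the sign/degenerate cases for $\ln(p_{\max})$. I expect the write-up to be only a few lines.
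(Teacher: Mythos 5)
Your proposal is correct and follows essentially the same route as the paper: bound the Gaussian density by its maximal normalization constant $p_{\max}$, reduce the tail integrals of the constant function $u$ to the mass $\rho(B_u^c)$ via \Cref{fact:moments:ball} with the rescaled accuracy $\epsilon/|\ln(p_{\max})|$, and transfer to $\hat\rho$ by Hoeffding's inequality, with the same treatment of the degenerate case $\ln(p_{\max})=0$. Your explicit handling of mixtures via $\alpha\in\Delta$ is a harmless refinement the paper leaves implicit.
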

\begin{proof}
    Let $p_\theta$ with $\theta = (\mu,\varSigma)$
    satisfying $\varSigma \succeq \sigma_1 I$ be given.  Then
    \[
        p_\theta(x)
        \leq
        \frac {1}{\sqrt{(2\pi)^d \sigma_1^{d}}}
        \exp(\ 0\ )
        = p_{\max}.
    \]
    Next, given the form of $B_u$, if $\ln(p_{\max}) = 0$, the result is immediate,
    thus suppose $\ln(p_{\max}) \neq 0$;
    \Cref{fact:moments:ball}
    provides that $\rho(B_u) \geq 1 - \epsilon/|\ln(p_{\max})|$, whereby
    \[
        \left|\int_{B_u^c} u(x) d\rho(x)\right|
        \leq \int_{B_u^c} |u(x)| d\rho(x)
        = |\ln(p_{\max})| \rho(B_u^c)
        \leq \epsilon.
    \]
    For the finite sample guarantee, by Hoeffding's inequality,
    \[
        \hat\rho(B_u^c)
        \leq
        \rho(B_u^c) + \sqrt{\frac {1}{2m} \ln\left(\frac 1 \delta\right)}
        \leq
        \frac {\epsilon}{|\ln(p_{\max})|}
        + \sqrt{\frac {1}{2m} \ln\left(\frac 1 \delta\right)},
    \]
    which gives the result similarly to the case for $\rho$.
\end{proof}

From, here, a tiny control on $\Smog(\nu;c,k,\sigma_1,\sigma_2)$
emerges, analogous to
\Cref{fact:km:one_center:compact} for $\Hkm_f(\nu;c,k)$.

\begin{lemma}
    \label{fact:mog:one_center:compact}
    Let covariance bounds $0 < \sigma_1 \leq \sigma_2$,
    cost $c \leq 1/2$, 
    and probability measure $\rho$
    with order-$p$ moment bound $M$ with respect to $\|\cdot\|_2$
    be given.
    Define
    \begin{align*}
        p_{\max} &:= (2\pi \sigma_1)^{-d/2},
        \\
        R_3 &:=  (2M|\ln(p_{\max})|)^{1/p},
        \\
        R_4 &:= (2M)^{1/p},
        \\
        R_5 &:=
        \sqrt{ 2\sigma_2 \left(
        \ln\left( \frac {8e} {(2\pi\sigma_1)^{d/2}}\right) -4c\right)}.
        \\
        R_6
        &:= \max\{R_3, R_4\} + R_5.
        \\
        B_6 &:= \{x\in \R^d : \|x-\bbE_\rho(X)\|_2 \leq R_6\}.
    \end{align*}
    Suppose
    \begin{align*}
        m &\geq 2\ln(1/\delta) \max\{ 4 , |\ln(p_{\max})|^2 \}.
    \end{align*}
    With probability at least $1-2\delta$,
    given any 
    $(\alpha,\Theta) \in
    \Smog(\rho;c,k,\sigma_1,\sigma_2)
    \cup \Smog(\hat\rho;c,k,\sigma_1,\sigma_2)$,
    the restriction $(\alpha',\Theta') = (\alpha,\Theta) \sqcap B_6$
    is nonempty, and moreover satisfies
    $\sum_{\alpha_i\in \alpha'} \alpha_i \geq \exp(4c) / (8ep_{\max})$.
\end{lemma}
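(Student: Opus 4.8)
The plan is to mirror the proof of \Cref{fact:km:one_center:compact}: exhibit a ball carrying a constant fraction of the mass near $\bbE_\rho(X)$, and then show that a mixture whose components near that ball carry too little total weight must have likelihood too small to meet the cost bound demanded by $\Smog$. Write $t := |\ln p_{\max}|$ and let $w_0 := \exp(4c)/(8ep_{\max})$ be the target weight. First, \Cref{fact:moments:ball} applied to $\tau(x) := x - \bbE_\rho(X)$ with accuracy $1/2$ shows the ball $B_4$ of radius $R_4$ has $\rho(B_4) \ge 1/2$, and Hoeffding's inequality with $m \ge 8\ln(1/\delta)$ (which the stated lower bound on $m$ implies) gives $\hat\rho(B_4) \ge 1/4$ off a failure event of probability $\delta$; a second application of \Cref{fact:moments:ball} with accuracy $1/(2t)$ produces the ball $B_3$ of radius $R_3$ with $t\,\rho(B_3^c) \le 1/2$, and Hoeffding with $m \ge 2\ln(1/\delta)t^2$ gives $t\,\hat\rho(B_3^c) \le 1$ off a second failure event of probability $\delta$ (hence the $1-2\delta$). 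Let $B_{\max}$ be the ball of radius $\max\{R_3,R_4\}$, so $B_{\max}\supseteq B_3 \cup B_4$: then $\nu(B_{\max}) \ge 1/2$ for $\nu = \rho$ and $\ge 1/4$ for $\nu = \hat\rho$, while $t\,\nu(B_{\max}^c) \le 1$ in both cases.

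The heart of the argument uses the auxiliary accuracy $\eta := e^{2c}/\sqrt{8e}$, chosen exactly so that $\eta^2 = w_0\,p_{\max}$ and so that the radius $R_1$ that \Cref{fact:mog:separation} produces from $\eta$, with base ball $B_{\max}$, equals $R_5$; consequently $B_6$ is the excluded ball $B_2$ of that lemma, and any component $\theta_i = (\mu_i,\varSigma_i)$ with $\mu_i \notin B_6$ satisfies $p_{\theta_i}(x) < \eta$ for all $x \in B_{\max}$. Suppose now, towards a contradiction, that $(\alpha,\Theta) \in \Smog(\rho;\cdots) \cup \Smog(\hat\rho;\cdots)$ while its restriction $(\alpha,\Theta)\sqcap B_6$ carries total weight $w := \sum_{\mu_i \in B_6}\alpha_i < w_0$. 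Then for every $x \in B_{\max}$, bounding the near components by $p_{\theta_i} \le p_{\max}$ (\Cref{fact:mog:upper_outer_bracket_bootstrap}) and the far ones by separation,
\[
    \sum_i \alpha_i p_{\theta_i}(x) \le w\,p_{\max} + \eta < \eta^2 + \eta,
\]
so $\phig(x;(\alpha,\Theta)) < \ln(\eta^2 + \eta)$; since $c \le 1/2$ forces $\eta < 1$, this is $< \ln(2\eta) = 2c - \tfrac12\ln(8e) < c$, a genuine gap. Splitting $\bbE_\nu(\phig)$ over $B_{\max}$ and $B_{\max}^c$, using this pointwise bound on $B_{\max}$, the global bound $\phig \le \ln p_{\max}$ together with $t\,\nu(B_{\max}^c)\le 1$ on $B_{\max}^c$, and the lower bound on $\nu(B_{\max})$, one concludes $\bbE_\nu(\phig) < c$, contradicting the requirement $\bbE_\nu(\phig)\ge c$ in the definition of $\Smog$. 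Since an empty restriction has $w = 0 < w_0$ and would be caught by the same computation, the restriction is nonempty, and the argument in fact yields $w \ge w_0$.

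I expect the main obstacle to be this final integration, where the gain $c - \ln(\eta^2 + \eta)$ on the high-mass ball $B_{\max}$ must dominate the loss $\int_{B_{\max}^c}\phig\,d\nu$ simultaneously for $\rho$ (mass $\ge 1/2$, tail $\le 1/2$) and for $\hat\rho$ (mass $\ge 1/4$, tail $\le 1$ after Hoeffding), uniformly over $c \le 1/2$; the factor $|\ln p_{\max}|$ inside $R_3$ (which shrinks the tail mass precisely when $p_{\max}$ is large and is itself small when $p_{\max}$ is moderate), the $8e$ and $-4c$ inside $R_5$ (so that $R_1(\eta) = R_5$), and the matching $8e$ and $e^{4c}$ inside $w_0$ (so that $\eta^2 = w_0 p_{\max}$) are all dictated by this balancing. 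Everything else is a routine assembly of \Cref{fact:moments:ball}, Hoeffding's inequality, \Cref{fact:mog:separation}, and \Cref{fact:mog:upper_outer_bracket_bootstrap}, along the lines of \Cref{fact:km:one_center:compact}.
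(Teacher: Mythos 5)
Your proposal reproduces the paper's proof almost step for step: your $B_{\max}$ is exactly the paper's ball $B_3$ of radius $\max\{R_3,R_4\}$, the mass lower bound via \Cref{fact:moments:ball} plus Hoeffding and the tail control via \Cref{fact:mog:upper_outer_bracket_bootstrap} are identical, and the contradiction (too little weight with means in $B_6$ forces the likelihood below $c$) is the paper's. The one substantive deviation is the accuracy fed to \Cref{fact:mog:separation}: you use $\eta=e^{2c}/\sqrt{8e}=\sqrt{p_0}$, precisely so that the $\epsilon^2$ inside that lemma's radius formula reproduces $R_5$, and you therefore only get $p_{\theta_i}<\eta=\sqrt{p_0}$ for far components; the paper instead asserts the bound $p_0$ for far components (which is what the separation lemma's internal computation actually yields at radius $R_5$, even though its stated conclusion is the weaker $<\epsilon$). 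Your reading is the literal one, but it costs you a square root in the key quantity.

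The genuine gap is exactly the step you flag as ``the main obstacle'' and then assert without checking: the final integration does not close with these constants. On $B_{\max}$ you have $\phig<\ln(\eta^2+\eta)<\ln(2\eta)=2c+\ln 2-\tfrac12\ln(8e)\approx 2c-0.85$ (note you dropped the $\ln 2$), and the tail contributes up to $+1$; in the $\hat\rho$ case, where only $\hat\rho(B_{\max})\geq 1/4$ is available, the best upper bound this yields is $\ln(2\eta)/4+1\approx c/2+0.79$ when $\ln(2\eta)\leq 0$, and $\ln(2\eta)+1\approx 2c+0.15$ when it is positive (e.g.\ at $c=1/2$) --- neither is below $c\leq 1/2$, so no contradiction is reached. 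You should know, though, that the paper's own proof fails at the very same point: it claims $\ln(2p_0)/4+1\leq c$ with $p_0=\exp(4c)/(8e)$, which reduces to $\ln(4e)\geq 4$ and is false. So the obstacle you identified is real, is not curable by more careful bookkeeping with the stated $R_5$ and weight threshold, and traces to the constant $8e$ being too small (one needs roughly $2e^4$ in its place, or a tail bound well below $1$). Your strategy is the right one and matches the paper's; the quantitative step that neither you nor the paper actually completes is the one place the write-up needs repair, and your $\sqrt{p_0}$ variant makes the margin of failure somewhat worse than the paper's.
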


\begin{proof}
    Define
    \begin{align*}
        B_3 := \left\{x\in \R^d : \|x-\bbE_\rho(X)\|_2 \leq \max\{R_3,R_4\}\right\}.
    \end{align*}
    Since $B_3$ has radius at least $R_4$,
    \Cref{fact:moments:ball} provides
    \[
        \rho(B_3) \geq 1/2,
    \]
    and Hoeffding's inequality and the lower bound on $m$ provide (with probability at
    least $1-\delta$)
    \[
        \hat\rho(B_3)
        \geq \frac 1 2 - \sqrt{\frac 2 m \ln \left(\frac 1 \delta\right)}
        > \frac 1 4.
    \]
    Additionally, since $B_3$ also has radius at least $R_3$,
    by \Cref{fact:mog:upper_outer_bracket_bootstrap},
    the choice of $B_3$, and the lower bound on $m$,
    and letting $B_4$ denote the ball of radius $R_3$,
    \[
        \left| \int_{B_3^c} ud\rho\right|
        \leq \int_{B_4^c} |u|d\rho
        \leq \int_{B_4^c} |u|d\rho
         \leq 1/2
        \qquad\textup{and}\qquad
        \left|\int_{B_3^c} ud\hat\rho
        \right| < 1,
    \]
    where the statement for $\hat\rho$ is with probability at least $1-\delta$.
    For the remainder of the proof, let $\nu$ refer to either $\rho$ or $\hat \rho$,
    and discard the $2\delta$ failure probability of either of the above two events.

    For convenience, define $p_0 := \exp(4c)/(8e)$, whereby
    \[
        R_5 = \sqrt{2\sigma_2\ln\left(\frac{1}{p_0 (2\pi\sigma_1)^{d/2}}\right)}.
    \]
    By \Cref{fact:mog:separation}, any Gaussian parameters
    $\theta = (\mu,\varSigma)$ with $\sigma_1I \preceq \Sigma \preceq \sigma_2 I$
    and $\mu \not \in B_6$
    have $p_\theta(x) < p_0$ everywhere on $B_3$.  As such,
    a mixture $(\alpha, \Theta)$ where each $\theta_i\in \Theta$ satisfies these conditions
    also satisfies
    \begin{align*}
        \int \ln\left( \sum_i \alpha_i p_{\theta_i}\right) d\nu
        &\leq
        \int_{B_3} \ln\left(\sum_{
            (\alpha_i,\theta_i) \in (\alpha,\Theta)\sqcap B_6
        }
            \alpha_i p_{\theta_i}
        +\sum_{(
        \alpha_i,\theta_i) \not\in (\alpha,\Theta)\sqcap B_6
    }
    \alpha_i p_{\theta_i}\right) d\nu
        + \int_{B_3^c} u d\nu
        \\
        &<
        \ln\left(\sum_{
            (\alpha_i,\theta_i) \in (\alpha,\Theta)\sqcap B_6
        } \alpha_i p_{\max}
            +\sum_{
        \alpha_i,\theta_i) \not\in (\alpha,\Theta)\sqcap B_6
} \alpha_i p_0\right)\nu(B_3)
        +1
    \end{align*}
    Suppose contradictorily that
    $(\alpha,\Theta)\sqcap B_6=\emptyset$
    or $\sum_{(\alpha_i,\theta_i) \in (\alpha,\Theta)\sqcap B_6} \alpha_i
    < p_0 / p_{\max}$.
    But $c \leq 1/2$ implies $p_0 \leq 1/2$ and so $\ln(2p_0) \leq 0$, thus
    $\ln(2p_0) \nu(B_3) \leq \ln(2p_0)/4$
    which together with $p_0 \leq \exp(4c) / (8e)$ and the above display gives
    \begin{align*}
        \int \ln\left( \sum_i \alpha_i p_{\theta_i}\right) d\nu
        < \ln(2p_0)/4 + 1
        \leq c,
    \end{align*}
    which contradicts $\bbE_\nu(\phig(X;(\alpha,\Theta))) \geq c$.
\end{proof}

Now that significant weight can be shown to reside in some restricted region,
the outer bracket and its basic properties follow (i.e., the analog
to \Cref{fact:km:outer_bracket:2}).

\begin{lemma}
    \label{fact:mog:outer_bracket:2}
    Let target accuracy $0 < \epsilon \leq 1$, 
    covariance bounds $0< \sigma_1\leq \sigma_2$
    with $\sigma_1 \leq 1$, 
    target cost $c$,
    confidence parameter $\delta\in (0,1]$,
    probability measure $\rho$ with order-$p$ moment bound $M$ with
    respect to $\|\cdot\|_2$ with $p\geq 4$,
    and integer $1 \leq p' \leq p/2-1$.
    Define first the basic quantities
    \begin{align*}
        M' &:= 2^{p'}\epsilon,
        \\
        p_{\max}
        &:= (2\pi \sigma_1)^{-d/2},
        \\
        R_6
        &:=
            (2M|\ln(p_{\max})|)^{1/p}
            +
            (2M)^{1/p}
            +
            \sqrt{2\sigma_2\left(\ln\left(\frac{8e}{(2\pi\sigma_1)^{d/2}}\right)-4c\right)},
        \\
        B_6 &:= \{x\in \R^d : \|x-\bbE_\rho(X)\|_2 \leq R_6\|\}.
    \end{align*}
    Additionally define the outer bracket elements
    \begin{align*}
        Z_\ell
        &:= \left\{
        (\alpha,\Theta) : \forall (\alpha_i, (\mu_i,\theta_i))\in (\alpha,\Theta)\centerdot
        \mu_i \in B_6, \sigma_1 I \preceq \varSigma \preceq \sigma_2 I,
        \sum_i \alpha_i \geq \exp(4c) / (8ep_{\max})
        \right\},
        \\
        c_\ell
        &:= 4c - \ln(8ep_{\max}) - \frac d 2 \ln(2\pi\sigma_2),
        \\
        \ell(x) &:= c_\ell - \frac {2}{\sigma_1}\|x-\bbE_\rho(X)\|_2^2,
        \\
        u(x) &:= \ln(p_{\max}),
        \\
        \epsilon_{\hat\rho}
        &:=
        \epsilon + (|c_\ell| + |\ln(p_{\max})|)
        \sqrt{\frac 1 {2m} \ln\left(\frac 1 \delta\right)}
        + \sqrt{\frac {M'ep'}{2m}} \left(\frac 2 \delta\right) ^ {1/p'},
        \\
        M_1 &:=(2M|c_\ell|)^{1/p}
            +
            (4M\sigma_1)^{1/(p-2)}
            +
            \max_{1\leq i \leq p'} M^{1/(p-2i)}
            +
            (M|\ln(p_{\max}))^{1/p},
            \\
        R_B
        &=
        R_6 + M_1 / \epsilon^{1/(p-2p')},
        \\
%
%
        B &:= \{x\in \R^d : \|x-\bbE_\rho(X)\|_2 \leq R_B\}.
    \end{align*}
    The following statements hold with probability at least $1-4\delta$ over a draw
    of size
    \[
        m \geq \max\left\{ p'/ (M'e),
        8\ln(1/\delta),
        2|\ln(p_{\max})|^2\ln(1/\delta)
        \right\}.
    \]
    \begin{enumerate}
        \item
            $(u,\ell)$ is an outer bracket for $\rho$ at scale $\epsilon_\rho := \epsilon$
            with sets $B_\ell := B_u := B$,
            center set class $Z_\ell$ as above,
            and $Z_u = \Smog(\rho; \infty, k, \sigma_1, \sigma_2)$.
            Additionally,
            $(u,\ell)$
            is also an outer bracket for $\hat\rho$ at scale $\epsilon_{\hat\rho}$
            with the same sets.
        \item
            Define
            \begin{align*}
                R_C &:=
                1+ 
                R_B(1 + \sqrt{8\sigma_2/\sigma_1})
                +\sqrt{4\sigma_2\ln(1/\epsilon)}
                + \sqrt{2\sigma_2
                    \left(
                        \ln \left(\frac {64e^2(2\pi\sigma_2)^d}
                            {(2\pi)^d p_{\max}^4}
                        \right)
                        -8c
                    \right)
                },
                \\
                C &:= \{\mu \in \R^d : \|x-\bbE_\rho(X)\|_2 \leq R_C \}.
            \end{align*}
            Every $(\alpha,\Theta) \in \Smog(\rho;c,k,\sigma_1,\sigma_2)\cup
            \Smog(\hat\rho;c,k,\sigma_1,\sigma_2)$
            satisfies
            $\sum_{(\alpha_i,\theta_i) \in (\alpha,\Theta)\sqcap C} \alpha_i \geq
            \exp(4c)/(8ep_{\max})$, 
            and
            \[
                \left|
                \int \phig(x;(\alpha, \Theta)) d\rho(x)
                -
                \int_B \phig(x;(\alpha,\Theta) \sqcap C) d\rho(x)
                \right|
                \leq \epsilon_\rho = 2\epsilon
            \]
            and
            \[
                \left|
                \int \phig(x;(\alpha,\Theta)) d\hat\rho(x)
                -
                \int_B \phig(x;(\alpha, \Theta) \sqcap C) d\hat\rho(x)
                \right|
                \leq  \epsilon + \epsilon_{\hat\rho}
                .
            \]
    \end{enumerate}
\end{lemma}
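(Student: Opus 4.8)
The plan is to follow the skeleton of the proof of \Cref{fact:km:outer_bracket:2}, now with both bracketing elements $(u,\ell)$ nontrivial. First condition on the failure event of \Cref{fact:mog:one_center:compact} (probability $2\delta$): it supplies, for every $(\alpha,\Theta)$ in $\Smog(\rho;c,k,\sigma_1,\sigma_2)\cup\Smog(\hat\rho;c,k,\sigma_1,\sigma_2)$, a nonempty restriction $(\alpha,\Theta)\sqcap B_6$ of total weight at least $\exp(4c)/(8ep_{\max})$, and since $B_6\subseteq C$ this weight bound also holds for $(\alpha,\Theta)\sqcap C$ --- which is the first claim of part 2. Write $\nu$ for either $\rho$ or $\hat\rho$ in what follows.

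For part 1, the upper element is \Cref{fact:mog:upper_outer_bracket_bootstrap}: $\varSigma_i\succeq\sigma_1 I$ forces $p_{\theta_i}\le p_{\max}$, hence $\phig(x;(\alpha,\Theta))\le\ln p_{\max}=u(x)$ for any bounded-spectrum mixture. For the lower element, fix $(\alpha,\Theta)\in Z_\ell$ and $x\in B^c$: bound $\sum_i\alpha_ip_{\theta_i}(x)\ge\big(\sum_i\alpha_i\big)\min_ip_{\theta_i}(x)$, use $\varSigma_i\preceq\sigma_2 I$ to bound the normalizer below by $(2\pi\sigma_2)^{-d/2}$ and $\varSigma_i\succeq\sigma_1 I$ to bound $(x-\mu_i)^\top\varSigma_i^{-1}(x-\mu_i)\le\|x-\mu_i\|_2^2/\sigma_1$, then combine with $\|x-\mu_i\|_2\le\|x-\bbE_\rho(X)\|_2+R_6<2\|x-\bbE_\rho(X)\|_2$ (valid because $x\in B^c$ and $R_B\ge R_6$) to obtain exactly $\phig(x;(\alpha,\Theta))\ge c_\ell-\tfrac2{\sigma_1}\|x-\bbE_\rho(X)\|_2^2=\ell(x)$. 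The tail-mass conditions then follow by sizing $R_B$: $\int_{B^c}|u|\,d\rho=|\ln p_{\max}|\,\rho(B^c)$ and the constant part of $\int_{B^c}|\ell|\,d\rho$ are controlled by \Cref{fact:moments:ball}, and the quadratic part $\tfrac2{\sigma_1}\int_{B^c}\|x-\bbE_\rho(X)\|_2^2\,d\rho$ by \Cref{fact:moment_clipping} at exponent $2$; the quantities collected into $M_1$ and $R_B$ (using $\epsilon\le1$, $\sigma_1\le1$) are exactly what pushes each piece to at most $\epsilon/2$. Passing to $\hat\rho$ costs one Hoeffding bound on $\hat\rho(B^c)$ (covering the $|u|$- and $|c_\ell|$-parts at once) and one application of \Cref{fact:moment_clipping:sample} with $k=2$ and the given $p'$ (covering the quadratic part), whose combined slack is precisely $\epsilon_{\hat\rho}-\epsilon$.

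For part 2, split the bound as $\big|\int\phig(x;(\alpha,\Theta))\,d\nu-\int_B\phig(x;(\alpha,\Theta))\,d\nu\big|+\big|\int_B\phig(x;(\alpha,\Theta))\,d\nu-\int_B\phig(x;(\alpha,\Theta)\sqcap C)\,d\nu\big|$. The first term equals $\big|\int_{B^c}\phig(x;(\alpha,\Theta))\,d\nu\big|$, and on $B^c$ one has $\ell(x)\le\phig(x;(\alpha,\Theta)\sqcap B_6)\le\phig(x;(\alpha,\Theta))\le u(x)$, the left inequality being the lower-bracket property applied to $(\alpha,\Theta)\sqcap B_6\in Z_\ell$; so by part 1 this term is at most $\epsilon$ for $\rho$ and at most $\epsilon_{\hat\rho}$ for $\hat\rho$. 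The second term is handled pointwise on $B$: adding components back only increases the mixture density, so the integrand lies in $[0,\ln(1+s(x))]$ with $s(x)=\sum_{\mu_i\notin C}\alpha_ip_{\theta_i}(x)\big/\sum_{\mu_i\in C}\alpha_ip_{\theta_i}(x)$; \Cref{fact:mog:separation} with its test ball taken to be $B$ bounds the numerator by a prescribed $\epsilon'$ once $R_C$ exceeds $R_B$ plus the corresponding separation radius, and the denominator is bounded below by $\tfrac{\exp(4c)}{8ep_{\max}}(2\pi\sigma_2)^{-d/2}\exp(-2R_B^2/\sigma_1)$ using just the retained $B_6$-components (means within $R_6\le R_B$ of $\bbE_\rho(X)$, hence within $2R_B$ of any $x\in B$). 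Taking $\epsilon'$ equal to $\epsilon$ times that lower bound makes $\ln(1+s(x))\le\epsilon$ on $B$; solving the resulting inequality for $R_C$ and splitting the square root via $\sqrt{a+b}\le\sqrt a+\sqrt b$ --- the $\exp(-2R_B^2/\sigma_1)$ factor producing the $R_B\sqrt{8\sigma_2/\sigma_1}$ summand, the $\epsilon$ factor the $\sqrt{4\sigma_2\ln(1/\epsilon)}$ summand --- gives a sufficient radius of the stated form. The two terms total $2\epsilon=\epsilon_\rho$ for $\rho$ and $\epsilon+\epsilon_{\hat\rho}$ for $\hat\rho$.

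I expect this last step to be the main obstacle: unlike hard $k$-means, where faraway centers are simply never active on $B$ (and $B^c$ is discarded uniformly), faraway Gaussians here still contribute strictly positive density, so one must simultaneously drive that density below a chosen $\epsilon'$ via \Cref{fact:mog:separation} and secure an explicit exponentially small lower bound on the density supplied by the retained components, then balance the two --- and it is this balancing that forces the elaborate form of $R_C$. The remaining work --- tracking the constants inside $M_1$, $R_B$, $R_C$ so that each deterministic tail integral is certified $\le\epsilon$, and accounting the failure events ($2\delta$ from \Cref{fact:mog:one_center:compact}, one Hoeffding bound, one use of \Cref{fact:moment_clipping:sample}, totalling $4\delta$) --- is routine.
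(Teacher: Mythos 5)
Your proposal is correct and follows essentially the same route as the paper's proof: the upper bracket via \Cref{fact:mog:upper_outer_bracket_bootstrap}, the lower bracket by bounding the retained mixture's density pointwise on $B^c$ using the spectrum bounds and the triangle inequality, tail control split between the constant and quadratic parts of $\ell$ via \Cref{fact:moments:ball}, \Cref{fact:moment_clipping}, Hoeffding, and \Cref{fact:moment_clipping:sample}, and part 2 by bounding $\ln(1+s(x))$ on $B$ through exactly the same numerator/denominator balancing (the paper writes it as $\ln(a+b)\le\ln(a)+b/a$) that forces the stated form of $R_C$. Your two-term decomposition and failure-event accounting are equivalent to the paper's.
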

\begin{proof}[Proof of \Cref{fact:mog:outer_bracket:2}]
    It is useful to first expand the choice of $R_B$, which was chosen large enough
    to carry a collection of other radii.
    In particular, since $\epsilon \leq 1$, then $1/\epsilon \geq 1$, and therefore
    $1/\epsilon^a \leq 1/\epsilon^b$ when $a \leq b$.  As such,
    since $p' \leq p/2-1$,
    \begin{align*}
        R_B
        &=
        R_6 + M_1 / \epsilon^{1/(p-2p')}
        \\
        &=
        R_6
        +
        \left((2M|c_\ell|)^{1/p}
            +
            (4M\sigma_1)^{1/(p-2)}
            +
            \max_{1\leq i \leq p'} M^{1/(p-2i)}
            +
            (M|\ln(p_{\max}))^{1/p}
        \right) / \epsilon^{1/(p-2p')}
        \\
        &\geq
        R_6
        +
        \left( (2M|c_\ell|/\epsilon)^{1/p}
            +
            (4M\sigma_1 / \epsilon)^{1/(p-2)}
            +
            \max_{1\leq i \leq p'} (M/\epsilon)^{1/(p-2i)}
            +
            (M|\ln(p_{\max})|/\epsilon)^{1/p}
        \right).
    \end{align*}
    Since every term is nonnegative, $R_B$ dominates each individual term.

    \begin{enumerate}
        \item
            The upper bracket and its guarantees were provided by
            \Cref{fact:mog:upper_outer_bracket_bootstrap};
            note that $\epsilon_{\hat\rho}$ is defined large enough to include
            the deviations there,
            and similarly
                $R_B \geq
                (M|\ln(p_{\max})| / \epsilon)^{1/p}$
            means the $B$ here is defined large enough to contain the $B_u$
            there; correspondingly, discard a failure event with probability mass
            at most $\delta$.




            Let the lower bracket be defined as in the statement; note that its
            properties are much more conservative as compared with the upper bracket.
            Let $(\alpha,\Theta) \in Z_\ell$ be given.
            For every $\theta_i = (\mu_i,\varSigma_i)$,
            $\|\mu_i - \bbE_\rho(X)\|_2 \leq R_6$,
            whereas $R_B \geq R_6$ meaning $x \in B^c$ implies
            $\|x-\bbE_\rho(X)\|_2 \geq R_6$,
            so
            \[
                \|x-\mu_i\|_2
                \leq \|x-\bbE_\rho(X)\|_2 + \|\mu_i - \bbE_\rho(X)\|_2
                \leq 2\|x-\bbE_\rho(X)\|_2,
            \]
            which combined with
            $\sigma_1 I \preceq \varSigma_i \preceq \sigma_2 I$
            gives
            \begin{align*}
                \ln\left(\sum_i \alpha_i p_{\theta_i}(x)\right)
                &\geq
                \ln\left(
                    \sum_i \alpha_i
                    \frac 1 {(2\pi \sigma_2)^{d/2}}
                    \exp\left( - \frac {1}{2\sigma_1} \|x-\mu_i\|_2^2\right)
                \right)
                \\
                &\geq
                \ln(p_0/p_{\max})
                -\frac d 2 \ln(2\pi \sigma_2)
                - \frac {2}{\sigma_1} \|x-\bbE_\rho(X)\|_2^2
                \\
                &= \ell(x),
            \end{align*}
            which is the dominance property.

            Next come the integral properties of $\ell$.
            By \Cref{fact:moments:ball} and since
            $R_B \geq (2M|c_\ell|/\epsilon)^{1/p}$,
            \[
                \left|\int_{B^c} c_\ell d\rho\right|
                \leq
                \int_{B^c} |c_\ell|d\rho
                \leq
                \int_{B^c} |c_\ell|d\rho
                = \rho(B^c) |c_\ell|
                \leq \epsilon/2.
            \]
            Similarly, by Hoeffding's inequality, with probability at least $1-\delta$,
            \[
                \left|\int_{B_\ell^c} c_\ell d\hat\rho\right|
                \leq \epsilon/2 + |c_\ell|\sqrt{\frac {1}{2m} \ln\left(\frac 1 \delta
                \right)}.
            \]
            Now define
            \[
                \ell_1(x)
                := - \frac {2}{\sigma_1} \|x-\bbE_\rho(X)\|_2^2
                = \ell(x) - c_\ell.
            \]
            By \Cref{fact:moment_clipping} and
            since $R_B \geq (4\sigma_1 M/\epsilon)^{1/(p-2)}$,
            \[
                \left|\int_{B^c} \ell_1d\rho\right|
                \leq
                \int_{B^c} |\ell_1|d\rho
                =
                \frac 2 {\sigma_1}
                \int_{B^c} \|x-\bbE_\rho(X)\|_2^2 d\rho(x)
                \leq \epsilon/2.
            \]
            Furthermore by \Cref{fact:moment_clipping:sample} and the above estimate,
            and since
            $R_B \geq \max_{1\leq i \leq p'} (M/\epsilon)^{1/(p-2i)}$ (where
            the maximum is attained at one of the endpoints),
            then with probability at least $1-\delta$
            \[
                \left|\int_{B^c} \ell_1d\hat\rho\right|
                \leq
                \frac \epsilon 2
                + \sqrt{\frac {M' e p'}{2m}} \left(\frac 2 \delta\right)^{1/p'}.
            \]
            Unioning together the above failure probabilities, the general
            controls for $\ell= c_\ell+\ell_1$ follow by the triangle inequality
            and definition of $\epsilon_{\hat\rho}$.

        \item
            Throughout the following, let $\nu$ denote either $\rho$ or $\hat\rho$,
            and correspondingly let $\epsilon_\nu$ respectively refer
            to $\epsilon_\rho$ or $\epsilon_{\hat\rho}$;  let the above
            bracketing properties hold throughout (with events appropriately
            discarded for $\hat\rho$).  Furthermore, for convenience, define
            \[
                p_0 := \exp(4c)/(8e).
            \]


            Let any $(\alpha,\Theta)$ be given with
            $(\alpha,\Theta)\in \Smog(\rho;c,k,\sigma_1,\sigma_2)
            \cup \Smog(\hat\rho;c,k,\sigma_1,\sigma_2)$.
            Define the two index sets
            \begin{align*}
                I_C
                &:= \{ i \in [k] : (\alpha_i, \theta_i) \in (\alpha,\Theta)\sqcap C\},
                \\
                I_6
                &:= \{ i \in [k] : (\alpha_i, \theta_i) \in (\alpha,\Theta)\sqcap B_6\}.
            \end{align*}
            By \Cref{fact:mog:one_center:compact},
            with probability at least $1-\delta$,
            $\sum_{i\in I_6} \alpha_i \geq p_0/p_{\max}$;
            henceforth discard the corresponding failure event, bringing the total
            discarded probability mass to $4\delta$.

            To start, since $\ln(\cdot)$ is concave and thus
            $\ln(a + b) \leq \ln(a) + b/a$ for any positive $a,b$,
            \begin{align*}
                \int \ln\left(
                    \sum_i \alpha_i p_{\theta_i}(x)
                \right)d\nu(x)
                &\leq
                \int_B \ln\left(
                    \sum_i \alpha_i p_{\theta_i}(x)
                \right)d\nu(x)
                +
                \int_{B^c} u(x) d\nu(x)
                \\
                &\leq
                \int_B \ln\left(
                    \sum_{i\in I_C} \alpha_{i} p_{\theta_i}(x)
                \right)d\nu(x)
                +
                \int_B \frac {\sum_{i\not \in I_C}\alpha_i p_{\theta_i}(x)}
                {\sum_{i \in I_C} \alpha_i p_{\theta_i}(x)}
                d\nu(x)
                +
                \epsilon_\nu.
            \end{align*}
            In order to control the fraction, both the numerator and denominator
            will be uniformly controlled for every $x\in B$, whereby the result follows
            since $\nu$ is a probability measure (i.e., the integral is upper bounded with
            an upper bound on the numerator times $\nu(B)\leq 1$
            divided by a lower bound on the denominator).

            For the purposes of controlling this fraction, define
            \begin{align*}
                p_{1}
                &:= \frac {1}{(2\pi\sigma_2)^{d/2}}
                \exp\left(
                    - \frac {R_B^2 + R_6^2}{\sigma}
                \right),
                \\
                p_{\textup{2}}
                &:=
                \epsilon p_{\textup{1}}p_0 / p_{\max},
            \end{align*}
            Observe, by choice of $R_C$ and since $\sigma_1 \leq 1$, that
            \begin{align*}
                R_B + \sqrt{2\sigma_2
                    \ln \left(\frac 1 {p_{\textup{2}}^2(2\pi)^d \sigma_1^{d-1}} \right)
                }
                &\leq
                R_B + \sqrt{2\sigma_2
                    \ln \left(\frac {64e^2p_{\max}^2(2\pi\sigma_2)^d\exp(2(R_B^2+R_6^2))}
                    {\epsilon^2\exp(8c)(2\pi)^d \sigma_1^{d}} \right)
                }
                \\
                &\leq
                R_B
                + \sqrt{2\sigma_2
                    \left(
                        \ln \left(\frac {64e^2(2\pi\sigma_2)^d}
                            {\epsilon^2(2\pi)^d p_{\max}^4}
                        \right)
                        -8c - 4R_B^2/\sigma
                    \right)
                }
                \\
                &\leq
                R_B
                + \sqrt{2\sigma_2
                    \left(
                        \ln \left(\frac {64e^2(2\pi\sigma_2)^d}
                            {(2\pi)^d p_{\max}^4}
                        \right)
                        -8c
                    \right)
                }
                \\
                &\qquad
                +\sqrt{4\sigma_2\ln(1/\epsilon)}
                + R_B\sqrt{8\sigma_2/\sigma_1}
                \\
                &\leq R_C.
                \end{align*}


            For the denominator,
            first note for every $x\in B$ and parameters $\theta = (\mu,\varSigma)$
            with $\sigma_1 I \preceq \varSigma \preceq \sigma_2 I$
            and $\mu \in B_6$ that
            \begin{align*}
                p_{\theta}(x)
                &\geq
                \frac {1}{(2\pi\sigma_2)^{d/2}}
                \exp\left(-\frac {1}{2\sigma_1}\|x-\mu\|_2^2\right)
                \\
                &\geq
                \frac {1}{(2\pi\sigma_2)^{d/2}}
                \exp\left(
                    -\frac {1}{2\sigma_1}
                    \left(\|x-\bbE_\rho(X)\|_2+\|\bbE_\rho(X) - \mu_i\|_2\right)^2
                \right)
                \\
                &\geq p_{\textup{1}}.
            \end{align*}
            Consequently, for $x\in B$,
            \[
                \sum_{i\in I_C} \alpha_i p_i(x)
                \geq \sum_{i\in I_6} \alpha_i p_i(x)
                \geq p_{\textup{1}}\sum_{i\in I_6} \alpha_i
                \geq p_{\textup{1}}p_0 / p_{\max}.
            \]
            For the numerator, by choice of $C$ (as developed above with
            the definitions of $p_1$ and $p_2$) and
            an application of \Cref{fact:mog:separation},
            for $p_i$ corresponding to $i\not \in I_C$,
            \[
                p_i(x) \leq \epsilon p_1p_0 / p_{\max}
                = p_{\textup{2}}.
            \]
            It follows that the fractional term is at most $\epsilon$,
            which gives the first direction of the desired inequality.

            To get the other direction,
            since $\sum_{i\in I_6} \alpha_i \geq p_0/p_{\max}$ due to
            \Cref{fact:mog:one_center:compact} as discussed above,
            it follows that $(\alpha,\Theta)\sqcap B_6 \in Z_\ell$,
            meaning the corresponding partial Gaussian mixture can be controlled
            by $\ell$.  As such, since $R_6\leq R_B$ thus $I_6 \subseteq I_C$,
            and since $\ln$ is nondecreasing,
            \begin{align*}
                \int_B \ln \left(\sum_{i\in I_C} \alpha_i p_i\right)d\nu
                &=
                \int \ln \left(\sum_{i\in I_C} \alpha_i p_i\right)d\nu
                -
                \int_{B^c} \ln \left(\sum_{i\in I_C} \alpha_i p_i\right)d\nu
                \\
                &\leq
                \int \ln \left(\sum_{i\in I_C} \alpha_i p_i\right)d\nu
                -
                \int_{B^c} \ln \left(\sum_{i\in I_6} \alpha_i p_i\right)d\nu
                \\
                &\leq
                \int \ln\left( \sum_{i\in I_C} \alpha_i p_i\right)d\nu
                -
                \int_{B^c} \ell d\nu
                \\
                &\leq
                \int \ln \left(\sum_{i\in I_C} \alpha_i p_i\right)d\nu
                +
                \epsilon_\nu
                \\
                &\leq
                \int \ln \left(\sum_{i} \alpha_i p_i\right)d\nu
                +
                \epsilon_\nu.
            \end{align*}
    \end{enumerate}
\end{proof}

\subsection{Uniform Covering of Gaussian Mixtures}

First, a helper \namecref{fact:mog:cover:covariance} for covering covariance matrices.

\begin{lemma}
    \label{fact:mog:cover:covariance}
    Let scale $\epsilon > 0$
    and eigenvalue bounds $0 < \sigma_1 \leq \sigma_2$
    be given.
    There exists a subset $\cM$  of the positive definite
    matrices satisfying $\sigma_1 I \preceq M\preceq \sigma_2 I$
    so that
    \[
        |\cM|
        \leq
        (1 + 32\sigma_2/\epsilon)^{d^2}
        \left(
            \left(1+\frac{\sigma_2-\sigma_1}{\epsilon/2}\right)^d
            + \left(\frac{\ln(\sigma_2/\sigma_1)}{\epsilon/d}\right)^d
        \right),
    \]
    and for any $A$ with $\sigma_1I \preceq A \preceq \sigma_2 I$,
    there exists $B\in \cM$ with
    \[
        \exp(-\epsilon) \leq \frac {|A|}{|B|} \leq \exp(\epsilon)
        \qquad
        \textup{and}
        \qquad
        \|A - B\|_2 \leq \epsilon.
    \]
\end{lemma}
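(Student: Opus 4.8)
The plan is to diagonalize each covariance and cover the orthogonal ``rotation'' factor and the diagonal ``spectrum'' factor separately, exploiting that the determinant sees only the spectrum. Write any $A$ with $\sigma_1 I \preceq A \preceq \sigma_2 I$ as $A = U\Lambda U^T$ with $U$ orthogonal and $\Lambda = \textup{diag}(\lambda_1,\dots,\lambda_d)$, $\lambda_i\in[\sigma_1,\sigma_2]$. Since the orthogonal group sits inside the operator-norm unit ball of $\R^{d\times d}$, the standard volume bound yields an $(\epsilon/(4\sigma_2))$-net of that ball of size at most $(1 + 32\sigma_2/\epsilon)^{d^2}$; projecting each net point onto its nearest orthogonal matrix (polar factor) gives a set $\mathcal{U}$ of the same size consisting of \emph{genuine} orthogonal matrices such that every orthogonal $U$ has some $\hat U\in\mathcal{U}$ with $\|U-\hat U\|_2$ of order $\epsilon/\sigma_2$. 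The spectrum I would cover by two one-dimensional nets of $[\sigma_1,\sigma_2]$: a \emph{linear} net $G_{\mathrm{lin}}$ of spacing $\epsilon/2$, with $|G_{\mathrm{lin}}|\le 1 + (\sigma_2-\sigma_1)/(\epsilon/2)$, fine enough additively to control the operator-norm error; and a \emph{geometric} net $G_{\mathrm{log}}$ that is an $(\epsilon/d)$-net of $[\sigma_1,\sigma_2]$ in the $|\ln(\cdot)|$ metric, with $|G_{\mathrm{log}}|\le \ln(\sigma_2/\sigma_1)/(\epsilon/d)$, fine enough multiplicatively to control the determinant ratio $\prod_i\lambda_i/\hat\lambda_i$ without paying a factor $1/\sigma_1$. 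Then set $\cM$ to be all $\hat U\hat\Lambda\hat U^T$ with $\hat U\in\mathcal{U}$ and $\hat\Lambda$ diagonal having all entries in $G_{\mathrm{lin}}$ \emph{or} all entries in $G_{\mathrm{log}}$; this lies in the valid range automatically (as $\hat U$ is orthogonal) and has cardinality $|\mathcal{U}|\cdot(|G_{\mathrm{lin}}|^d + |G_{\mathrm{log}}|^d)$, matching the claim.

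For the approximation guarantee, given $A=U\Lambda U^T$ I would take $\hat U$ from $\mathcal{U}$ and $\hat\Lambda$ from whichever of the two spectral nets is adequate for $\Lambda$, and bound $\|A-\hat U\hat\Lambda\hat U^T\|_2$ by the telescoping decomposition $(U-\hat U)\Lambda U^T + \hat U(\Lambda-\hat\Lambda)U^T + \hat U\hat\Lambda(U-\hat U)^T$, using submultiplicativity of $\|\cdot\|_2$ with $\|U\|_2 = \|\hat U\|_2 = 1$ and $\|\Lambda\|_2,\|\hat\Lambda\|_2\le\sigma_2$. For the determinant, orthogonality of $\hat U$ gives $|\hat U\hat\Lambda\hat U^T| = \prod_i\hat\lambda_i$ exactly, so the requirement reduces to $\prod_i\lambda_i/\hat\lambda_i\in[e^{-\epsilon},e^{\epsilon}]$; the geometric net delivers this directly, while the linear net delivers it when $\Lambda$ is sufficiently well conditioned. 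Weyl's inequality is the convenient tool for converting entry-level eigenvalue perturbations of either net into the needed operator-norm and log-determinant bounds.

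The main obstacle is establishing the dichotomy underlying the union $G_{\mathrm{lin}}\cup G_{\mathrm{log}}$: for \emph{every} valid $\Lambda$, at least one of ``snap all eigenvalues to $G_{\mathrm{lin}}$'' and ``snap all eigenvalues to $G_{\mathrm{log}}$'' must meet the additive-$\epsilon/2$ and multiplicative-$e^{\pm\epsilon}$ demands simultaneously. The linear net's determinant error behaves like $\prod_i(1+\tfrac{\epsilon}{2\lambda_i})$ and so degrades when the spectrum has small entries, whereas the geometric net's operator-norm error behaves like $\sigma_2(e^{\epsilon/d}-1)$ and so degrades when $\sigma_2$ is large; reconciling these into a clean case split on the conditioning of $\Lambda$ relative to $d$, $\sigma_1$, $\sigma_2$, and $\epsilon$ is the delicate part, and it is also the step most sensitive to pinning down the exact constants ($32$, $\epsilon/2$, $\epsilon/d$). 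The remaining pieces — the volume bound, the polar-factor projection, the telescoping norm estimate, and the determinant identity — are routine.
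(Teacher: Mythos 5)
Your construction is essentially the paper's own: an entrywise grid of $[-1,1]^{d\times d}$ at scale $\Theta(\epsilon/\sigma_2)$ re-projected onto the orthogonal group, a union of an additive net and a geometric net for the spectrum, the telescoping decomposition with $\|U\|_2=\|\hat U\|_2=1$ and $\|\Lambda\|_2\leq\sigma_2$ for the operator-norm bound, and the exact identity $|\hat U\hat\Lambda\hat U^\top|=\prod_i\hat\lambda_i$ for the determinant. The ``delicate part'' you flag --- producing a \emph{single} spectral cover element that meets the additive and multiplicative demands simultaneously --- is exactly the step the paper asserts without argument (``there exists $\Lambda'\in L$ with [both properties]''), and as literally stated, with $L=L_1\cup L_2$ a union of two $d$-fold \emph{product} grids, the assertion can fail: for $d=2$ with $\lambda_1$ near $\sigma_1\ll 1$ and $\lambda_2$ near $\sigma_2\gg d$, no element of the additive product grid controls the determinant ratio contributed by $\lambda_1$, and no element of the geometric product grid keeps the additive error of $\lambda_2$ below $\epsilon/2$. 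The repair is coordinatewise rather than gridwise: take the $d$-fold product of the \emph{union} of the two one-dimensional grids, and for each $i$ snap $\lambda_i$ to the additive grid when $\lambda_i\gtrsim d$ (so the $\epsilon/2$ additive step is also a $\exp(\epsilon/d)$ multiplicative step) and to the geometric grid when $\lambda_i\lesssim d$ (so the $\exp(\epsilon/d)$ multiplicative step is also an $\epsilon/2$ additive step); making the two regimes overlap requires tightening the grid scales by a small constant factor, and the product-of-unions count is $(|G_{\mathrm{lin}}|+|G_{\mathrm{log}}|)^d\leq 2^d(|G_{\mathrm{lin}}|^d+|G_{\mathrm{log}}|^d)$ rather than $|G_{\mathrm{lin}}|^d+|G_{\mathrm{log}}|^d$. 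These adjustments only perturb constants, and since the cover size enters the downstream bounds only through $\ln|\cN|$, nothing in \Cref{fact:mog:cover} or \Cref{fact:mog:basic} is affected. So: same route as the paper, with the one unresolved step in your write-up being a genuine (but patchable) soft spot of the paper's proof as well.
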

\begin{proof}
    The mechanism of the proof is to separately cover the set of orthogonal matrices
    and the set of possible eigenvalues; this directly leads to the determinant control,
    and after some algebra, the spectral norm control follows as well.

    With foresight, set the scales
    \begin{align*}
        \tau
        &:= \epsilon / (8\sigma_2)
        ,
        \\
        \tau'
        &:= \epsilon/2
        ,
        \\
        \tau''
        &:= \exp(\epsilon/d)
        .
    \end{align*}

    First, a cover of the orthogonal $d\times d$ matrices at scale $\tau$ is
    constructed as follows.  The entries of these orthogonal matrices are within
    $[-1,+1]$, thus
    first construct a cover $Q'$ of all matrices $[-1,+1]^{d\times d}$ at scale
    $\tau/2$ according to the maximum-norm, which simply measures the max among
    entrywise differences; this cover can be constructed by gridding each coordinate
    at scale $\tau/2$, and thus
    \[
        |Q'| \leq (1 + 4/\tau)^{d^2}.
    \]
    Now, to produce a cover of the orthogonal matrices, for each $M'\in Q'$,
    if it is within max-norm distance $\tau/2$ of some orthogonal matrix $M$, include
    $M$ in the new cover $Q$; otherwise, ignore $M'$.  Since $Q'$ was a max-norm cover
    of $[-1,+1]^{d\times d}$ at scale $\tau/2$, then $Q$ must be a max-norm
    cover of the orthogonal matrices at scale $\tau$ (by the triangle inequality),
    and it still holds that
    \[
        |Q| \leq (1 + 4/\tau)^{d^2}.
    \]
    Since the max-norm is dominated by the spectral norm, for any orthogonal matrix
    $O$, there exists $Q\in M$ with $\|O-Q\|_2 \leq \tau$.

    Second, a cover of the set of possible eigenvalues is constructed as follows; since
    both a multiplicative and an additive guarantee are needed for the eigenvalues,
    two covers will be unioned together.
    First, produce a cover $L_1$ of
    the set $[\sigma_1, \sigma_2]^d$ at scale $\tau'$ entrywise as usual,
    which means $|L_1| \leq (1+(\sigma_2-\sigma_1)/\tau')^d$.  Second, the cover
    $L_2$ will cover each coordinate multiplicatively, meaning each coordinate
    cover consists of $\sigma_1, \sigma_1\tau'', \sigma_1(\tau'')^2$, and so on;
    consequently, this cover has size $|L_2| \leq \ln(\sigma_2/\sigma_1) / \ln(\tau'')$.
    Together, the cover $L := L_1 \cup L_2$ has size
    \[
    |L| \leq \left(1+\frac{\sigma_2-\sigma_1}{\tau'}\right)^d
        + \left(\frac{\ln(\sigma_2/\sigma_1)}{\ln(\tau'')}\right)^d,
    \]
    and for any vector $\Lambda \in [\sigma_1,\sigma_2]^d$, there exists $\Lambda' \in L$
    with
    \[
        \frac {1}{\tau''}\leq \max_i \Lambda'_i/\Lambda_i \leq \tau''
        \qquad\textup{and}\qquad
        \max_i |\Lambda'_i - \Lambda_i| \leq \tau.
    \]
    Note there was redundancy in this construction: $L$ need only contain nondecreasing
    sequences.

    The final cover $\cM$ is thus the cross product of $Q$ and $L$, and correspondingly
    its size is the product of their sizes.
    Given any $A$ with $\sigma_1 I \preceq A \preceq \sigma_2 I$ with
    spectral decomposition $O_1^\top \Lambda_1 O_1$, pick a corresponding
    $O_2\in Q$ which is closest to $O_1$ in spectral norm, and $\Lambda_2\in L$ which is
    closest to $\Lambda_1$ in max-norm, and set $B = O_2^\top \Lambda_2 O_2$.
    By the multiplicative guarantee on $L$,
    it follows that
    \[
        \left(\frac 1 {\tau''}\right)^d
        \leq \frac {|\Lambda_2|}{|\Lambda_1|}
        = \frac {|B|}{|A|} \leq (\tau'')^d;
    \]
    by the choice of $\tau''$, the determinant guarantee follows.
    Secondly, relying on a few properties of spectral norms
    ($\|XY\|_2 \leq \|X\|_2\|Y\|_2$ for square matrices, and $\|Z\|_2=1$ for
    orthogonal matrices, and of course the triangle inequality),
    \begin{align*}
        \|A-B\|_2
        &=\left \|(O_1-O_2+O_2)^\top \Lambda_1 (O_1-O_2+O_2)^\top -
        O_2^\top\Lambda_2 O_2\right\|_2
        \\
        &\leq
        \|O_2^\top\Lambda_1 O_2 - O_2^\top \Lambda_2 O_2\|_2
        + 2 \|O_2^\top \Lambda_1 (O_1-O_2)\|_2
        + \|(O_1-O_2)^\top \Lambda_1 (O_1-O_2)\|_2
        \\
        &\leq
        \|\Lambda_1 - \Lambda_2\|_2
        + 2 \|O_1-O_2\|_2 \|\Lambda_1\|_2
        + \|O_1-O_2\|_2  \|\Lambda_1\|_2 (\|O_1\|_2 + \|O_2\|_2)
        \\
        &\leq
        \tau' + 4 \tau \sigma_2,
    \end{align*}
    and the second guarantee follows by choice of $\tau$ and $\tau'$.
\end{proof}

The covering \namecref{fact:mog:cover} is as follows.

\begin{lemma}
    \label{fact:mog:cover}
    Let scale $\epsilon > 0$,
    ball $B := \{ x\in \R^d : \|x- \bbE(X)\| \leq R\}$,
    mean set $X := \{ x\in \R^d : \|x-\bbE(X)\| \leq R_2\}$,
    covariance eigenvalue bounds $0 < \sigma_1 \leq \sigma_2$,
    mass lower bound $c_1>0$,
    and number of mixtures $k > 0$ be given.
    Then there exists a cover set $\cN$ (where $(\mu,\varSigma)\in \cN$
    has $\mu \in X$ and $\sigma_1 I \preceq \varSigma \preceq \sigma_2 I$)
    of size
    \[
        |\cN|
        \leq
        \left(
            \left(
                \frac {\ln(1/\alpha_0)}{\ln(\tau_0)}
                + \frac {1-\alpha_0}{\tau_4}
            \right)
            \ \cdot\ 
                \left(1 + \frac {2R_2d}{\tau_1}\right)^d
            \ \cdot\ 
                (1 + 32/(\sigma_1\tau_2))^{d^2}
                \left(
                    \left(1+\frac{\sigma_1^{-1} -\sigma_2^{-1}}{\tau_2/2}\right)^d
                    + \left(\frac{\ln(\sigma_2/\sigma_1)}{\tau_2/d}\right)^d
                \right)
        \right)^k
    \]
    where
    \begin{align*}
        \tau_0
        &:= \exp(\epsilon/4),
        \\
        \tau_1
        &:=
        \min\left\{
            \frac {\epsilon\sigma_1}{16(R+R_2)}
            ,
            \sqrt{\frac {\epsilon\sigma_1}{8}}
        \right\},
        \\
        \tau_2
        &:= \frac {\epsilon}{4\max\{1, (R+R_2)^2\}},
        \\
        p_{\min}
        &:= \frac{1}{(2\pi \sigma_2)^{d/2}}
        \exp(-(R+R_2)^2  /(2\sigma_1)),
        \\
        p_{\max}
        &:= (2\pi\sigma_1)^{-d/2},
        \\
        \alpha_0
        &:=
        \frac {\epsilon c_1 p_{\min}}{4k(p_{\max} + \epsilon p_{\min}/2)},
        \\
        \tau_4
        &:=
        \alpha_0,
    \end{align*}
    (whereby $p_{\min} \leq p_\theta(x) \leq p_{\max}$ for $x\in B$ and
    $\theta=(\mu,\varSigma)$ satisfies $\mu\in X$ and $\sigma_1 I\preceq \Sigma\preceq
    \sigma_2I$,)
    so that for every partial Gaussian mixture
    $(\alpha,\Theta) = \{(\alpha_i,\mu_i,\varSigma_i)\}$
    with $\alpha_i \geq 0$,
    $c_1 \leq \sum_i \alpha_i \leq 1$,
    $\mu_i \in X$, and $\sigma_1 I \preceq \varSigma_i
    \preceq \sigma_2 I$ there is an element $(\alpha',\Theta')\in \cN$
    with weights $c_1 - k\alpha_0 \leq \sum_i \alpha_i' \leq 1$ so
    that, for every $x\in B$,
    \[
        |\ln(p_{\alpha,\Theta}(x)) - \ln(p_{\alpha',\Theta'}(x))| \leq \epsilon.
    \]
\end{lemma}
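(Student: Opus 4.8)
The plan is to build $\cN$ as a product, over the (at most $k$) mixture components, of three separate covers --- one for the mixing weight, one for the mean, one for the covariance --- and, in addition, to delete any component whose weight falls below $\alpha_0$. First I would record the pointwise bounds $p_{\min}\leq p_\theta(x)\leq p_{\max}$ valid for every $x\in B$ and every $\theta=(\mu,\varSigma)$ with $\mu\in X$ and $\sigma_1 I\preceq\varSigma\preceq\sigma_2 I$: the upper bound is immediate from $\varSigma\succeq\sigma_1 I$ (exactly as in \Cref{fact:mog:upper_outer_bracket_bootstrap}), and the lower bound follows from $\varSigma\preceq\sigma_2 I$ together with $\|x-\mu\|_2\leq R+R_2$ for $x\in B$ and $\mu\in X$. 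Combined with the hypothesis $\sum_i\alpha_i\geq c_1$, this yields $p_{\alpha,\Theta}(x)\in[c_1 p_{\min},p_{\max}]$ on $B$, so $\ln p_{\alpha,\Theta}$ lives in a bounded interval there; everything afterwards is a matter of propagating multiplicative perturbations of weights and of individual densities through this mixture.

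Next I would construct the three covers. For the means, $X$ is a $\|\cdot\|$-ball of radius $R_2$, so \Cref{fact:cover:lp_balls} supplies a $\tau_1$-cover of size $(1+2R_2 d/\tau_1)^d$. For the covariances I would instead cover the \emph{precision} matrices $\varSigma^{-1}$, whose eigenvalues lie in $[\sigma_2^{-1},\sigma_1^{-1}]$, by invoking \Cref{fact:mog:cover:covariance} with eigenvalue bounds $\sigma_2^{-1}\leq\sigma_1^{-1}$ and scale $\tau_2$; inverting the cover elements gives covariances with $\sigma_1 I\preceq\varSigma'\preceq\sigma_2 I$, and the size is precisely the $(1+32/(\sigma_1\tau_2))^{d^2}(\cdots)$ factor appearing in the statement (note $\ln(\sigma_1^{-1}/\sigma_2^{-1})=\ln(\sigma_2/\sigma_1)$). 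For the weights I would take a multiplicative grid on $[\alpha_0,1]$ at ratio $\tau_0$ together with an additive grid at scale $\tau_4=\alpha_0$, giving at most $\ln(1/\alpha_0)/\ln(\tau_0)+(1-\alpha_0)/\tau_4$ points per component; any $\alpha_i<\alpha_0$ is replaced by $0$, i.e.\ that component is deleted, and since at most $k$ components are deleted the surviving weights still sum to at least $c_1-k\alpha_0$.

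The accuracy estimate then splits the log-density deviation into a deletion part and a matched part. For deletion, removing the components with $\alpha_i<\alpha_0$ decreases the mixture density by at most $k\alpha_0 p_{\max}$ pointwise, while on $B$ the surviving density is still at least $(c_1-k\alpha_0)p_{\min}$; with the stated choice of $\alpha_0$ the ratio of these is small enough that $|\ln a-\ln(a-\Delta)|\leq\Delta/(a-\Delta)$ is at most a fixed fraction of $\epsilon$. For the matched part, I would show that for each surviving component the chosen cover element satisfies, uniformly over $x\in B$, both $\alpha_i'/\alpha_i\in[e^{-\epsilon'},e^{\epsilon'}]$ and $p_{\theta_i'}(x)/p_{\theta_i}(x)\in[e^{-\epsilon'},e^{\epsilon'}]$ with $\epsilon'$ a fixed fraction of $\epsilon$: the weight ratio bound is the defining property of the multiplicative grid ($\tau_0=e^{\epsilon/4}$); the density ratio bound comes from writing $\ln p_\theta(x)=\tfrac12\ln|\varSigma^{-1}|-\tfrac d2\ln(2\pi)-\tfrac12(x-\mu)^\top\varSigma^{-1}(x-\mu)$ and bounding the change under $\mu\mapsto\mu'$ (a perturbation of norm $\leq\tau_1$, contributing $\lesssim\sigma_1^{-1}(R+R_2)\tau_1+\sigma_1^{-1}\tau_1^2$, which the two defining terms of $\tau_1$ absorb) and under $\varSigma^{-1}\mapsto(\varSigma')^{-1}$ (the determinant term bounded by $\tfrac12\tau_2$ via the multiplicative determinant guarantee of \Cref{fact:mog:cover:covariance}, the quadratic-form term by $\tfrac12\tau_2(R+R_2)^2$ via its spectral guarantee, both absorbed by the choice of $\tau_2$). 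Since per-component multiplicative control of weights and of densities yields the same multiplicative control of $\sum_i\alpha_i'p_{\theta_i'}(x)$, the matched part contributes at most $2\epsilon'$ to the log-density deviation, and adding the deletion part gives the claimed bound $\epsilon$ after bookkeeping the fractions. The main obstacle I expect is the deletion step and its interplay with the lower bound: one must ensure the surviving mixture stays bounded away from $0$ on all of $B$ --- which is exactly where $\sum_i\alpha_i\geq c_1$, the bound $p_{\theta_i}\geq p_{\min}$ on $B$, and the smallness of $k\alpha_0$ relative to $c_1$ are used simultaneously --- and then arrange the numerical constants in $\tau_0,\tau_1,\tau_2,\alpha_0,\tau_4$ so that the three $\epsilon$-fractions sum to at most $\epsilon$; the remainder is routine algebra with Gaussian densities and spectral norms.
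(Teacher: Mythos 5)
Your proposal follows essentially the same route as the paper's proof: the same three-way product cover (multiplicative-plus-additive weight grid, $\tau_1$-cover of the mean ball, precision-matrix cover via \Cref{fact:mog:cover:covariance}), the same deletion of components with $\alpha_i<\alpha_0$, and the same split into a matched multiplicative-perturbation part and a deleted-mass part controlled by $k\alpha_0 p_{\max}$ against the surviving lower bound $(c_1-k\alpha_0)p_{\min}$. The only differences are cosmetic bookkeeping of the $\epsilon$-fractions, so this matches the paper's argument.
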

\begin{proof}
    The proof controls components in two different ways.  For those
    where the weight $\alpha_i$ is not too small, both $\alpha_i$ and
    $p_{\theta_i}$ are closely (multiplicatively) approximated.
    When $\alpha_i$ is small, its contribution can be discarded.  Between
    these two cases, the bound follows.

    Note briefly that for any $\theta = (\mu,\varSigma)$ with $\mu\in X$
    and $\sigma_1 I \preceq \varSigma \preceq \sigma_2 I$,
    \begin{align*}
        p_{\theta}(x)
        &\leq \frac{1}{(2\pi \sigma_1)^{d/2}} \exp(\ 0\ ) = p_{\max},
        \\
        p_{\theta}(x)
        &\geq \frac{1}{(2\pi \sigma_2)^{d/2}}
        \exp(-\|x-\mu\|_2^2  /(2\sigma_1))
        \\
        &\geq \frac{1}{(2\pi \sigma_2)^{d/2}}
        \exp(-(\|x-\bbE_\rho(X)\|_2 + \|\mu-\bbE_\rho(X)\|)^2  /(2\sigma_1))
        \\
        &= p_{\min}.
    \end{align*}

    Next, the covers of each element of the Gaussian mixture are as
    follows.
    \begin{enumerate}
        \item Union together
            a multiplicatively grid of $[\alpha_0,1]$ at scale $\tau_0$ (meaning
            produce a sequence of the form $\alpha_0, \alpha_0\tau_0,
            \alpha_0\tau_0^2$, and so on),
            and an additive grid of $[\alpha_0,1]$ at scale $\tau_4$;
            together, the grid has a size of at most
            \[
                \frac {\ln(1/\alpha_0)}{\ln(\tau_0)}
                + \frac {1-\alpha_0}{\tau_4}.
            \]
        \item
            Grid the candidate center set $X$ at scale $\tau_1$,
            which by \Cref{fact:bregman:cover} can be done with size
            at most
            \[
                \left(1 + \frac {2R_2d}{\tau_1}\right)^d.
            \]
        \item
            Lastly, grid the \emph{inverse} of covariance matrices (sometimes
            called precision matrices),
            meaning $\sigma_2^{-1} I \preceq \varSigma^{-1} \preceq \sigma_1^{-1}$,
            whereby \Cref{fact:mog:cover:covariance} grants that a
            cover of size
            \[
                (1 + 32/(\sigma_1\tau_2))^{d^2}
                \left(
                    \left(\frac{\sigma_1^{-1} -\sigma_2^{-1}}{\tau_2/2}\right)^d
                    + \left(\frac{\ln(\sigma_1/\sigma_2)}{\tau_2/d}\right)^d
                \right)
            \]
            suffices to provide that for any permissible $\varSigma^{-1}$,
            there exists a cover element $A$ with
            \[
                \exp(-\tau_2) \leq \frac {|\varSigma^{-1}|}{|A|} \leq \exp(\tau_2)
                \qquad
                \textup{and}
                \qquad
                \|\varSigma^{-1} - A\|_2 \leq \tau_2.
            \]
    \end{enumerate}
    Producting the size of these various covers and raising to the power $k$ (to
    handle at most $k$ components), the cover size in the statement is met.

    Now consider a component $(\alpha_i,\mu_i,\varSigma_i)$ with $\alpha_i \geq \alpha_0$;
    a relevant cover element $(a_i, c_i, B_i)$ is chosen as follows.
    \begin{enumerate}
        \item
            Choose the largest $a_i \leq \alpha_i$ in the gridding
            of $[\alpha_0,1]$, whereby it follows that
            $\sum_i a_i \leq \sum_i \alpha_i \leq 1$,
            and also
            \[
                \tau_0^{-1} \leq a_i/\alpha_i \leq \tau_0
                \qquad\textup{and}\qquad
                a_i \geq \alpha_i - \tau_4.
            \]
            Thanks to the second property,
            \[
                \sum_{\alpha_i\geq \alpha_0} a_i
                \geq
                \left(\sum_{\alpha_i\geq \alpha_0} \alpha_i\right) - k\tau_4.
            \]
        \item
            Choose $c_i$ in the grid on $X$ so that $\|\mu_i - c_i\| \leq \tau_1$.
        \item
            Choose covariance $B_i$ so that
            \[
                \exp(-\tau_2) \leq \frac {|B_i|}{|\varSigma_i|} \leq \exp(\tau_2)
                \qquad
                \textup{and}
                \qquad
                \|\varSigma^{-1} - B_i^{-1}\|_2 \leq \tau_2.
            \]
            The first property directly controls for the determinant term in the
            Gaussian density.  To control the Mahalanobis term, note that the above
            display, combined with $\|\mu_i - c_i\|\leq \tau_1$, gives, for
            every $x\in B$,
            \begin{align*}
                &
                \left|
                (x-\mu_i)^\top \varSigma^{-1}_i (x-\mu_i)
                -
                (x-c_i)^\top B^{-1}_i (x-c_i)
                \right|
                \\
                &=
                \left|
                (x-\mu_i)^\top \varSigma^{-1}_i (x-\mu_i)
                -
                (x-c_i)^\top (B^{-1}_i -\varSigma^{-1}_i + \Sigma^{-1}_i) (x-c_i)
                \right|
                \\
                &\leq
                \left|
                (x-\mu_i)^\top \varSigma^{-1}_i (x-\mu_i)
                -
                (x-c_i)^\top \varSigma^{-1}_i (x-c_i)
                \right|
                + \|x-c_i\|_2^2\|B^{-1}_i - \varSigma^{-1}_i\|_2
                \\
                &\leq
                \left|
                (x-\mu_i)^\top \varSigma^{-1}_i (x-\mu_i)
                -
                (x-c_i)^\top \varSigma^{-1}_i (x-c_i)
                \right|
                + (R + R_2)^2 \tau_2
                \\
                &\leq
                \left|
                (x-\mu_i)^\top \varSigma^{-1}_i (x-\mu_i)
                -
                (x-c_i)^\top \varSigma^{-1}_i (x-c_i)
                \right|
                + \epsilon/4.
            \end{align*}
            Continuing with the (still uncontrolled) first term,
            \begin{align*}
                &
                \left|
                (x-\mu_i)^\top \varSigma^{-1}_i (x-\mu_i)
                -
                (x-c_i)^\top \varSigma^{-1}_i (x-c_i)
                \right|
                \\
                &=
                \left|
                (x-\mu_i)^\top \varSigma^{-1}_i (x-\mu_i)
                -
                (x-\mu_i + \mu_i - c_i)^\top \varSigma^{-1}_i (x-\mu_i + \mu_i - c_i)
                \right|
                \\
                &\leq
                2\|x-\mu_i\|_2\|\mu_i-c_i\|_2\|\varSigma^{-1}\|_2
                +
                \|\mu_i-c_i\|_2^2 \|\varSigma^{-1}\|_2
                \\
                &\leq
                2(R+R_2) \tau_1 /\sigma_1
                + \tau_1^2 / \sigma_1
                \\
                &\leq \epsilon/4.
            \end{align*}
    \end{enumerate}
    Combining these various controls with the choices of scale parameters,
    for some provided probability $\alpha_i p_i$ and cover element probability $a_i p'_i$,
    it follows for $x\in B$ that
    \[
        \exp(-3\epsilon/4)
        \leq \frac {\alpha_i p_i(x)}{a_i p'_i(x)}
        \leq \exp(3\epsilon/4).
    \]

    Lastly, when $\alpha_i < \alpha_0$, simply do not bother to exhibit a cover element.

    To show $|\ln(p_{\alpha,\Theta}(x)) - \ln(p_{\alpha',\Theta'}(x))| \leq \epsilon$, consider
    the two directions separately as follows.
    \begin{enumerate}
        \item
            Given the various constructions above, since $\ln$ is nondecreasing,
            \begin{align*}
                \ln\left(
                    \sum_i a_i p_{\theta'_i}(x)
                \right)
                &\leq
                \ln\left(
                    \sum_{\alpha_i \geq \alpha_0} \alpha_i p_{\theta_i}(x)\exp(3\epsilon/4)
                    + \sum_{\alpha_i < \alpha_0} \alpha_i p_{\theta_i}(x)
                \right)
                \\
                &\leq
                \ln\left(
                    \sum_{i} \alpha_i p_{\theta_i}(x)
                \right)
                + \frac {3\epsilon} 4.
            \end{align*}
        \item
            On the other hand,
            \begin{align*}
                \ln\left(
                    \sum_i \alpha_i p_{\theta_i}(x)
                \right)
                &=
                \ln\left(
                    \sum_{\alpha_i \geq \alpha_0} \alpha_i p_{\theta_i}(x)
                    + \sum_{\alpha_i < \alpha_0} \alpha_i p_{\theta_i}(x)
                \right)
                \\
                &\leq
                \ln\left(
                    \sum_{\alpha_i \geq \alpha_0} a_i p_{\theta'_i}(x)\exp(3\epsilon/4)
                    + k\alpha_0 p_{\max}
                \right)
                \\
                &=
                \ln\Bigg(
                    (1+\epsilon/4)
                    \sum_{\alpha_i \geq \alpha_0} a_i p_{\theta'_i}(x)\exp(3\epsilon/4)
                    \\
                    &\qquad
                    -
                    \epsilon/4
                    \sum_{\alpha_i \geq \alpha_0} a_i p_{\theta'_i}(x)\exp(3\epsilon/4)
                    + k\alpha_0 p_{\max}
                    \Bigg).
            \end{align*}
            But since $\sum_i a_i \geq c_1 - k(\tau_4 + \alpha_0)$,
            \begin{align*}
                    -
                    \epsilon/4
                    \sum_{\alpha_i \geq \alpha_0} a_i p_{\theta'_i}(x)\exp(3\epsilon/4)
                    &\leq
                    -
                    (\epsilon/4)
                    (c_1 - k(\tau_4+\alpha_0))
                    p_{\min}
                    + k\alpha_0 p_{\max}
                    \leq 0.
            \end{align*}
            As such, since $(1+\epsilon/4) \leq \exp(\epsilon/4)$, the result
            follows in this case as well.
    \end{enumerate}
\end{proof}

\subsection{Proof of \Cref{fact:mog:basic}}

\begin{proof}[Proof of \Cref{fact:mog:basic}]
    This proof is based on the proof of \Cref{fact:km:basic}.
    Let the various quantities in \Cref{fact:mog:outer_bracket:2} be given;
    in particular, let balls $B,C$ and their radii $R_B,R_C$ be as provided there.
    Additionally, define $p_0 := \exp(4c)/8e$ for convenience.
    Near the end of the proof, the choices $p' = p/4$ and $\epsilon := m^{-1/2+1/p}$
    will be made.

    By \Cref{fact:mog:cover}, let $\cN$ be a cover of the set $C$,
    with all parameters having the same names as those here, except the $R$ there
    is the radius $R_B$ here, and $R_2$ there is radius $R_C$ here,
    the lower bound $c_1$ is $p_0/p_{\max}$.
    By the construction of the cover there,
    every set of partial Gaussian
    parameters $(\alpha,\Theta) \in C$ with $\sum_{\alpha_i} \alpha_i \geq c_1
    = p_0 / p_{\max}$
    and cardinality at most $k$
    has a cover element $Q\in \cN$ with
    \begin{equation}
        \sup_{x\in B}
        \left|
        \phig(x;(\alpha, \Theta)) - \phig(x;Q)
        \right| \leq \epsilon;
        \label{eq:mog:basic:1}
    \end{equation}
    note that \Cref{fact:mog:cover} also provides the stated estimate of the size.
    Next, note for $x\in B$ and every cover element $Q\in \cN$ that
    \Cref{fact:mog:cover} provides
    \[
        \ln((c_1 - k\alpha_0) p_{\min}) \leq  p_Q(x) \leq \ln(p_{\max})
    \]
    where $c_1 = p_0/p_{\max}$ as above
    and
    \[
        \alpha_0
        = \frac {\epsilon c_1 p_{\min}}{4k(p_{\max}+\epsilon p_{\min}/2)}
        \leq \frac {\epsilon c_1 p_{\min}}{4kp_{\max}},
    \]
    which combined with $\epsilon \leq 2$ and $p_{\min} \leq p_{\max}$ gives
    \[
        c_1 - k\alpha_0
        \geq c_1 \left(1- \frac {\epsilon p_{\min}}{4p_{\max}}\right)
        \geq\frac{c_1}{2}.
    \]
    Thus, by Hoeffding's inequality,
    \begin{align}
        \sup_{Q\in\cN}
        \left|
        \int_B \phig(x;Q) d\hat\rho(x)
        -
        \int_B \phig(x;Q) d\rho(x)
        \right|
        &\leq
        \ln\left(\frac{p_{\max}}
            {p_{\min}(c_1 - k\alpha_0)}
        \right)
        \sqrt{
            \frac {1}{2m}
            \ln
            \left(
                \frac {2|\cN|}{\delta}
            \right)
        }
        \notag
        \\
        &\leq
        \ln\left(\frac{2p_{\max}^2}
            {p_{\min}p_0}
        \right)
        \sqrt{
            \frac {1}{2m}
            \ln
            \left(
                \frac {2|\cN|}{\delta}
            \right)
        }.
        \label{eq:mog:basic:2}
    \end{align}
    For the remainder of this proof, discard the corresponding failure event.

    To further simplify \cref{eq:mog:basic:2}, note firstly that
    \begin{align*}
        \ln\left(\frac {1}{p_{\min}}\right)
        &= \ln\left(
        (2\pi \sigma_2)^{d/2}
        \exp((R_B+R_C)^2  /(2\sigma_1))
        \right)
        \\
        &=
        \ln( (2\pi \sigma_2)^{d/2})
        + 2R_C^2/\sigma_1,
    \end{align*}
    where
    \begin{align*}
        R_C^2
        & \leq
        3R_B^2(1 + \sqrt{8\sigma_2/\sigma_1})^2
        +12\sigma_2\ln(1/\epsilon)
        + 6\sigma_2
        \left(
            \ln \left(\frac {64e^2(2\pi\sigma_2)^d}
                {(2\pi)^d p_{\max}^4}
            \right)
            -8c
        \right)
    \end{align*}
    and
    \[
        R_B^2 \leq 2R_6^2 + M_1^2 / \epsilon^{2/(p-2p')}.
    \]
    Next, to control $|\cN|$, the scale term $\tau = \min\{\tau_1,\tau_2\}$
    must first be controlled.
    Since $\epsilon \leq 1$ and $\sigma_1 \leq 1$ and $R_C\geq 1$,
    \begin{align*}
        \tau \geq \frac {\epsilon\sigma_1}{16(R_B+R_C)^2}
        \geq \frac {\epsilon\sigma_1}{64R_C^2},
    \end{align*}
    and thus
    \begin{align*}
        \ln\left(\frac \epsilon \tau\right)
        &\leq
        \ln(64R_C^2/\sigma_1).
    \end{align*}
    Together with $\tau_0 = \exp(\epsilon/4)$ and
    $\alpha_0 \geq \epsilon c_1 p_{\min} / (8kp_{\max}) = p_0 p_{\min} / (8kp_{\max}^2)$,
    and letting $\cO(\cdot)$ swallow terms depending only on numerical constants,
    $c$, $\sigma_1$, and $\sigma_2$, 
    but in particular not touching terms depending on $\epsilon$, $d$, $k$
    or $m$ or $\delta$, 
    \begin{align*}
        \ln(|\cN|)
        &\leq
        \ln
        \left(
        \left(
            \left(
                \frac 5 \epsilon
                \left(
                    \frac {8kp_{\max}^2} {\epsilon p_0p_{\min}}
                \right)
            \right)
            \left(
                \frac {3R_Cd}{\tau}
            \right)^d
            \left(
                \frac {33}{\sigma_1 \tau}
            \right)^{d^2}
            \left(
                \left(\frac{\sigma_1^{-1}}{\tau/2}\right)^d
                +
                \left(\frac{\ln(\sigma_2/\sigma_1)}{\tau/2}\right)^d
            \right)
        \right)^k
        \right)
        \\
        &\leq
        3d^2k\left(
            5\ln(1/\epsilon)
            + \ln(1/p_{\min})
            + 3\ln(\epsilon/\tau)
            + \ln(R_C)
            +
            \cO(1)
        \right)
        \\
        &\leq
        3d^2k\left(
            5\ln(1/\epsilon)
            + 2R_C^2/\sigma_1
            + 3\ln(\epsilon/\tau)
            + 4\ln(R_C)
            +
            \cO(1)
        \right)
        \\
        &= \cO\left(d^2k(\ln(1/\epsilon) + \epsilon^{-2/(p-2p')}\right).
    \end{align*}
    Together, the full expression in
    \cref{eq:mog:basic:2}
    may be simplified down to
    \begin{align}
        &
        \sup_{Q\in\cN}
        \left|
        \int_B \phig(x;Q) d\hat\rho(x)
        -
        \int_B \phig(x;Q) d\rho(x)
        \right|
        \notag\\
        &\qquad\leq
        \cO\left(\textup{poly}(d,k)
            \left(\frac {1}{\epsilon}\right)^{2/(p-2p')}
        \sqrt{
            \frac{(\ln(1/\epsilon) + (1/\epsilon)^{2/(p-2p')} + \ln(1/\delta))}{m}
        }
    \right)
    \notag\\
        &\qquad\leq
    \cO\left(\textup{poly}(d,k)\left(
            \frac{\epsilon^{-3/(p - 2p')}}{\sqrt{m}}
        + \sqrt{
            \frac{(\ln(1/\epsilon) + \ln(1/\delta))}{m}
        }
    \right)
    \right)
    \notag\\
        &\qquad\leq
    \cO\left(\textup{poly}(d,k)\left(
            m^{-1/2+3/p}
        + \sqrt{
            \frac{(\ln(m) + \ln(1/\delta))}{m}
        }
    \right)
    \right)
        \label{eq:mog:basic:3}
    \end{align}
    where the final step used the choice $p' = p/4$ and $\epsilon := m^{-1/2+1/p}$.

    Now let any $(\alpha,\Theta) \in \Smog(\rho;c,k,\sigma_1,\sigma_2)
    \cup \Smog(\hat\rho;c,k,\sigma_1,\sigma_2)$ be given,
    and let $Q\in \cN$ be a cover element satisfying
    \cref{eq:mog:basic:1} for $(\alpha,\theta)\sqcap C$.
    By \cref{eq:mog:basic:1}, \cref{eq:mog:basic:2}, and
    \Cref{fact:mog:outer_bracket:2} (and thus discarding an additional failure
    event having probability $4\delta$),
    \begin{align*}
        \left|
        \int \phig(x;(\alpha,\Theta))d\rho(x)
        - \int \phig(x;(\alpha,\Theta))d\hat\rho(x)
        \right|
        &\leq
        \left|
        \int \phig(x;(\alpha,\Theta))d\rho(x)
        - \int_B \phig(x;(\alpha,\Theta)\sqcap C)d\rho(x)
        \right|
        \\
        &\qquad+
        \left|
        \int_B \phig(x;(\alpha,\Theta)\sqcap C)d\rho(x)
        - \int_B \phig(x;Q)d\rho(x)
        \right|
        \\
        &\qquad+
        \left|
        \int_B \phig(x;Q)d\rho(x)
        - \int_B \phig(x;Q)d\hat\rho(x)
        \right|
        \\
        &\qquad+
        \left|
        \int_B \phig(x;Q)d\hat\rho(x)
        - \int_B \phig(x;(\alpha,\Theta)\sqcap C)d\hat\rho(x)
        \right|
        \\
        &\qquad+
        \left|
        \int_B \phig(x;(\alpha,\Theta)\sqcap C)d\hat\rho(x)
        - \int \phig(x;(\alpha,\Theta))d\hat\rho(x)
        \right|
        \\
        &\leq 4\epsilon
        +
        \ln\left(\frac{2p_{\max}^2}
            {p_{\min}p_0}
        \right)
        \sqrt{
            \frac {1}{2m}
            \ln
            \left(
                \frac {2|\cN|}{\delta}
            \right)
        }
        +
        \epsilon_\rho+ \epsilon_{\hat\rho},
        \\
        &=\textup{poly(d,k)}
        \cO(m^{-1/2+3/p}
        \left(
            1+ \sqrt{\ln(m) + \ln(1/\delta)}
            + (1/\delta)^{4/p}
        \right),
    \end{align*}
    where the final step uses the above simplification of the cover term,
    the choices $\epsilon = m^{-1/2+1/p}$ and $p'= p/4$,
    and additionally unwrapping the forms of
    $\epsilon_\rho$ and $\epsilon_{\hat\rho}$ from
    \Cref{fact:mog:outer_bracket:2}.
\end{proof}

\end{document}